\documentclass[11pt,letterpaper]{article}


\usepackage{amsmath,amsfonts,bm}
\usepackage{amssymb,amsthm}




\def\1{\bm{1}}








\def\vtheta{{\bm{\theta}}}
\def\va{{\bm{a}}}

\def\ve{{\bm{e}}}

\def\vm{{\bm{m}}}

\def\vx{{\bm{x}}}

\def\vz{{\bm{z}}}


\def\mA{{\bm{A}}}

\def\mD{{\bm{D}}}

\def\mI{{\bm{I}}}

\def\mU{{\bm{U}}}
\def\mV{{\bm{V}}}

\DeclareMathAlphabet{\mathsfit}{\encodingdefault}{\sfdefault}{m}{sl}
\SetMathAlphabet{\mathsfit}{bold}{\encodingdefault}{\sfdefault}{bx}{n}


\def\gD{{\mathcal{D}}}

\def\gL{{\mathcal{L}}}

\def\gN{{\mathcal{N}}}
\def\gO{{\mathcal{O}}}

\def\gU{{\mathcal{U}}}










\newcommand{\E}{\mathbb{E}}

\newcommand{\R}{\mathbb{R}}



\newcommand{\vepsilon}{\mathbf{\epsilon}}
\newcommand{\0}{\mathbf{0}}
\newcommand{\mdelta}{\mathbf{\delta}}
\newcommand{\bnu}{\Bar{\nu}}
\newcommand{\aat}{\mA\mA^T}
\newcommand{\lxt}{\overleftarrow{\vx_T}}
\newcommand{\leps}{\overleftarrow{\vepsilon}}
\newcommand{\reps}{\overrightarrow{\vepsilon}}
\newcommand{\gauss}{\gN\left(\0,\mI \right)}
\newcommand{\dm}{\mD(\vm)}
\newcommand{\dom}{\mD(\1 - \vm)}

\def\lxz{\overleftarrow{\vx_0}}
\def\rxz{\overrightarrow{\vx_0}}
\def\lxo{\overleftarrow{\vx_1}}
\def\rxo{\overrightarrow{\vx_1}}
\def\lzz{\overleftarrow{\vz_0}}
\def\rzz{\overrightarrow{\vz_0}}
\def\vzz{\vz_0}

\usepackage[ruled,linesnumbered]{algorithm2e}
\SetKwComment{Comment}{/* }{ */}
\usepackage{wasysym}
\usepackage{mathtools}
\usepackage{authblk}
\usepackage{algorithmic}
\usepackage{subcaption}
\usepackage{wrapfig}
\usepackage{resizegather}

\usepackage[margin=0.9in]{geometry}
\usepackage{lmodern}
\usepackage[T1]{fontenc}
\usepackage[utf8]{inputenc}

\allowdisplaybreaks
\usepackage{bbm}
\usepackage{setspace} 

\usepackage{url}
\usepackage[svgnames]{xcolor}
\definecolor{cblue}{rgb}{0,0.5,50}
\definecolor{cmaroon}{rgb}{139,0,0}

\usepackage[style=alphabetic, maxbibnames=15, maxcitenames=15, natbib=true,
  maxalphanames=10, backend=biber, sorting=nty, backref=true]{biblatex}
\addbibresource{reference.bib}
\DefineBibliographyStrings{english}{%
  backrefpage = {page},
  backrefpages = {pages},
}

\usepackage{doi}
\usepackage{tikz} 
\usepackage{pgf,pgfplots}
\pgfplotsset{compat=1.14}
\usepgfplotslibrary{fillbetween}
\usepackage{hyphenat}
\usepackage{appendix}
\usepackage{thm-restate}


\usepackage[capitalise,sort]{cleveref}

\usepackage{dsfont}
\usepackage{microtype,bm}
\usepackage{booktabs}
\usepackage{colortbl}
\usepackage{xcolor}
\usepackage{nicefrac}

\newtheorem{theorem}{Theorem}

\newtheorem{corollary}[theorem]{Corollary}

\newtheorem{assumption}[theorem]{Assumption}
\crefname{assumption}{Assumption}{Assumptions}

\hypersetup{
    colorlinks = True,
    linkcolor=cmaroon,
    citecolor=cblue,
    urlcolor=magenta
}

\usepackage[backend=biber]{biblatex}


\author{Litu Rout\thanks{litu.rout@utexas.edu}}
\author{Advait Parulekar\thanks{advaitp@utexas.edu}}
\author{Constantine Caramanis\thanks{constantine@utexas.edu}}
\author{Sanjay Shakkottai\thanks{sanjay.shakkottai@utexas.edu}}
\affil{The University of Texas at Austin}

\title{A Theoretical Justification for Image Inpainting using \\Denoising Diffusion Probabilistic Models}

\date{}

\begin{document}

\maketitle

\begin{abstract}
We provide a theoretical justification for sample recovery using diffusion based image inpainting in a linear model setting. While most inpainting algorithms require retraining with each new mask, we prove that diffusion based inpainting generalizes well to unseen masks without retraining. We analyze a recently proposed popular diffusion based inpainting algorithm called RePaint \cite{lugmayr2022repaint}, and show that it has a bias due to misalignment that hampers sample recovery even in a two-state diffusion process. Motivated by our analysis, we propose a modified RePaint algorithm we call RePaint$^+$ that provably recovers the underlying true sample and enjoys a linear rate of convergence. It achieves this by rectifying the misalignment error present in drift and dispersion of the reverse process. To the best of our knowledge, this is the first linear convergence result for a diffusion based image inpainting algorithm. 
\end{abstract}

\section{Introduction}
\label{sec:intro}
We study the mathematical principles that underlie the empirical success of image inpainting using Denoising Diffusion Probabilistic Models (DDPMs)~\cite{sohl2015deep, ho2020denoising, song2020score}, which are the backbone of large-scale generative models including DALL-E~\cite{ramesh2021zero, ramesh2022hierarchical}, Imagen~\cite{saharia2022photorealistic}, and Stable Diffusion~\cite{Rombach_2022_CVPR}. The goal of image inpainting is to reconstruct missing parts that are semantically consistent with the known portions of an image. A major challenge in this task is to generate parts that are consistent with the available parts of the image; generative modeling is the key tool for this inpainting goal. 

\begin{figure}[t]
\begin{center}
\centerline{\includegraphics[width=0.45\columnwidth]{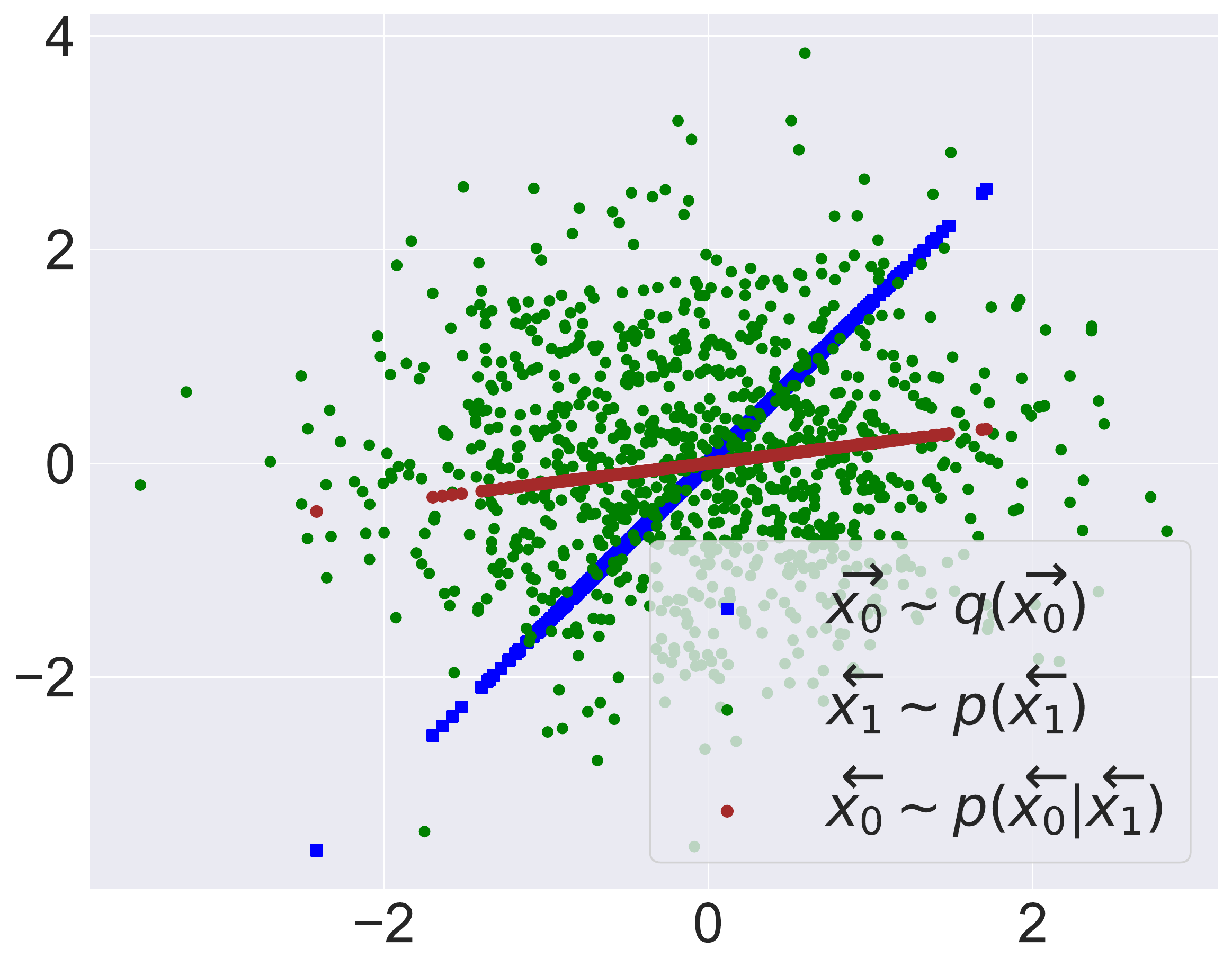}}
\caption{Example demonstrating the bias in RePaint~\cite{lugmayr2022repaint}. Starting from the Gaussian prior $\protect \lxo \protect$ (green circles), reverse SDE as proposed by RePaint generates $\protect\lxz\protect$ (brown circle) that matches with the true data $\protect\rxz\protect$ (blue square) in known coordinates, but differs in the inpainted region. In the figure, the blue squares along the $\protect\sim 56^\circ\protect$ line represents the true samples, whereas the brown circles along the $\protect\sim 10^\circ\protect$ line represents the samples recovered by RePaint. Note that the recovered samples match the true samples along the x-coordinate (known data), but have a bias along the y-coordinate (missing data).
}
\label{fig:repaint}
\end{center}
\vskip -0.4in
\end{figure}

Variational-Auto Encoders~(VAEs)~\cite{kingma2013auto} and Generative Adversarial Networks (GANs)~\cite{goodfellow2020generative} have been the basis for many successful inpainting techniques in recent years. However, with the advent of Score based Generative Models (SGMs)~\cite{song2019generative} or DDPMs~\cite{sohl2015deep,ho2020denoising}, current focus has shifted towards an alternate paradigm of image inpainting. In this paradigm, most methods fall into one of two categories. In the first category, one learns a diffusion process specific to a downstream task, such as inpainting or super-resolution~\cite{Whang_2022_CVPR,saharia2022palette}. In the second, a general purpose diffusion based generative model is learned and the diffusion process is guided in the inference phase catering to the downstream task~\cite{jalal2021robustmri,choi2021ilvr,song2021solving,daras2022score,kawar2022denoising}. In this paper, we analyze an inpainting approach~\cite{lugmayr2022repaint} that falls in the second category. We show that the generative prior of DDPM is sufficient to fully reconstruct missing parts of an image.

DDPM represents a class of generative models that learn to diffuse a clean image into tractable noise and then follow a reverse Markov process to produce a clean image by progressively denoising pure noise~\cite{sohl2015deep,ho2020denoising,song2020score} (see \wasyparagraph{\ref{sec:background}). Owing to their high expressive power, these models show appealing results in conditional/unconditional image generation~\cite{ho2020denoising,song2020score,dhariwal2021diffusion,karras2022elucidating}, text-to-image synthesis~\cite{Rombach_2022_CVPR}, time series modeling~\cite{tashiro2021csdi}, audio synthesis~\cite{kong2021diffwave}, image-to-image translation~\cite{saharia2022palette}, controllable text generation~\cite{li2022diffusionlm}, and image restoration~\cite{kawar2022denoising}. For restoration tasks such as image inpainting, DDPMs are interesting because they easily adapt to unknown tasks, such as new masks without having to go through the entire retraining process. This gives DDPM an edge over prior restoration techniques based upon GANs~\cite{wang2018esrgan,yu2018generative} and VAEs~\cite{razavi2019generating,peng2021generating}.

However, one challenging aspect limiting their pervasive usage stems from the choice of hyper-parameters in the reverse Markov process. Although the forward diffusion process can be explicitly computed beforehand, the reverse process is computationally very expensive as it requires sampling at every intermediate state. Furthermore, popular diffusion based inpainting algorithms, such as RePaint~\cite{lugmayr2022repaint} require additional resampling at each of these states to harmonize inpainted parts with the rest of the image. These resampling steps increase the computational burden at the cost of semantically meaningful reconstruction. Also, Figure~\ref{fig:repaint} shows that RePaint generates \textit{biased} samples that hamper perfect recovery. Therefore, it is becoming increasingly important to address these bottlenecks to facilitate their successful real-world deployment. A persistent challenge in this respect stems from the fact that despite the remarkable progress of diffusion based inpainting, our theoretical understanding remains in its early stages.

To address these issues,
one emerging line of theoretical research aims to provide convergence guarantees for Score-based Generative Models (SGMs) that constitute DDPMs~\cite{chen2022sampling,lee2022convergence,lee2022convergenceGeneral,chen2022improved}. A crucial assumption of these approaches is that the target distribution $q(\rxz)$ assumes a density with respect to Lebesgue measure. Another line of work~\cite{debortoli2022convergence,pidstrigach2022scorebased} studies convergence of SGMs under the famous manifold hypothesis~\cite{tenenbaum2000global,fefferman2016testing}. Starting from a prior $p(\overleftarrow{\vx_T})$, these lines of work analyze convergence of the reverse process $p_{\vtheta}(\lxz)$ to the target distribution $q(\rxz)$. Different from these lines of research, our paper focuses on the convergence of resampling step used to harmonize the inpainted image (\wasyparagraph{\ref{sec:theory}).

The assumption that the target distribution admits a density according Lebesgue measure indicates that $q(\rxz)>0$ for all $\rxz \in \R^d$. In other words, every $\rxz$ is a possible sample drawn from $q(\rxz)$, which contradicts the fact that most natural images reside on a low dimensional manifold with compact support~\cite{tenenbaum2000global,fefferman2016testing}. This assumption would require a target distribution over digits to allocate nonzero probability mass to very unlikely samples, such as human faces, animals, and bedroom scenes. Although the prior analysis offers some insights, it is of little practical significance in this manifold setting~\cite{pmlr-v162-kim22i}. As a step towards circumventing this issue, \citet{pidstrigach2022scorebased,debortoli2022convergence} prove that SGMs learn distributions supported on a low dimension substructure, satisfying the manifold hypothesis.

For inpainting, however, we need an additional structure on the data distribution similar to prior works in related disciplines~\cite{bora2017compressed,dhar2018modeling,jalal2020robust,jalal2021robustmri}. This is because our goal is not just to show minimum discrepancy between $q(\rxz)$ and $p_\vtheta(\lxz)$ with respect to some divergence $\gD$, i.e., $\gD\left(p_\vtheta(\lxz), q(\rxz)\right)\leq \epsilon$,
but also to prove sample wise convergence, i.e., $\left\|\lxz - \rxz \right\| \leq \epsilon$. It is worth mentioning that the problem of sample recovery is well studied in optimization literature~\cite{tibshirani1996regression,4407762,5454399,1217267,bora2017compressed}. The focus of our paper, instead, is to use this well understood setting to provide a
theoretical justification for sample recovery using a DDPM-based non-optimization technique~\cite{lugmayr2022repaint}. In this technique, we leverage the expressive power of diffusion based generative models~\cite{ho2020denoising} without attempting to directly solve a constrained optimization problem. Following conventional wisdom~\cite{bora2017compressed,jalal2020robust,jalal2021robustmri}, we implant a specific model on the data generating distribution that satisfies the manifold hypothesis, and show that the learning algorithm recovers this model by detecting the underlying substructure: see Figure~\ref{fig:repaintplus}.

\begin{figure}[t]
\begin{center}
\centerline{\includegraphics[width=0.45\columnwidth]{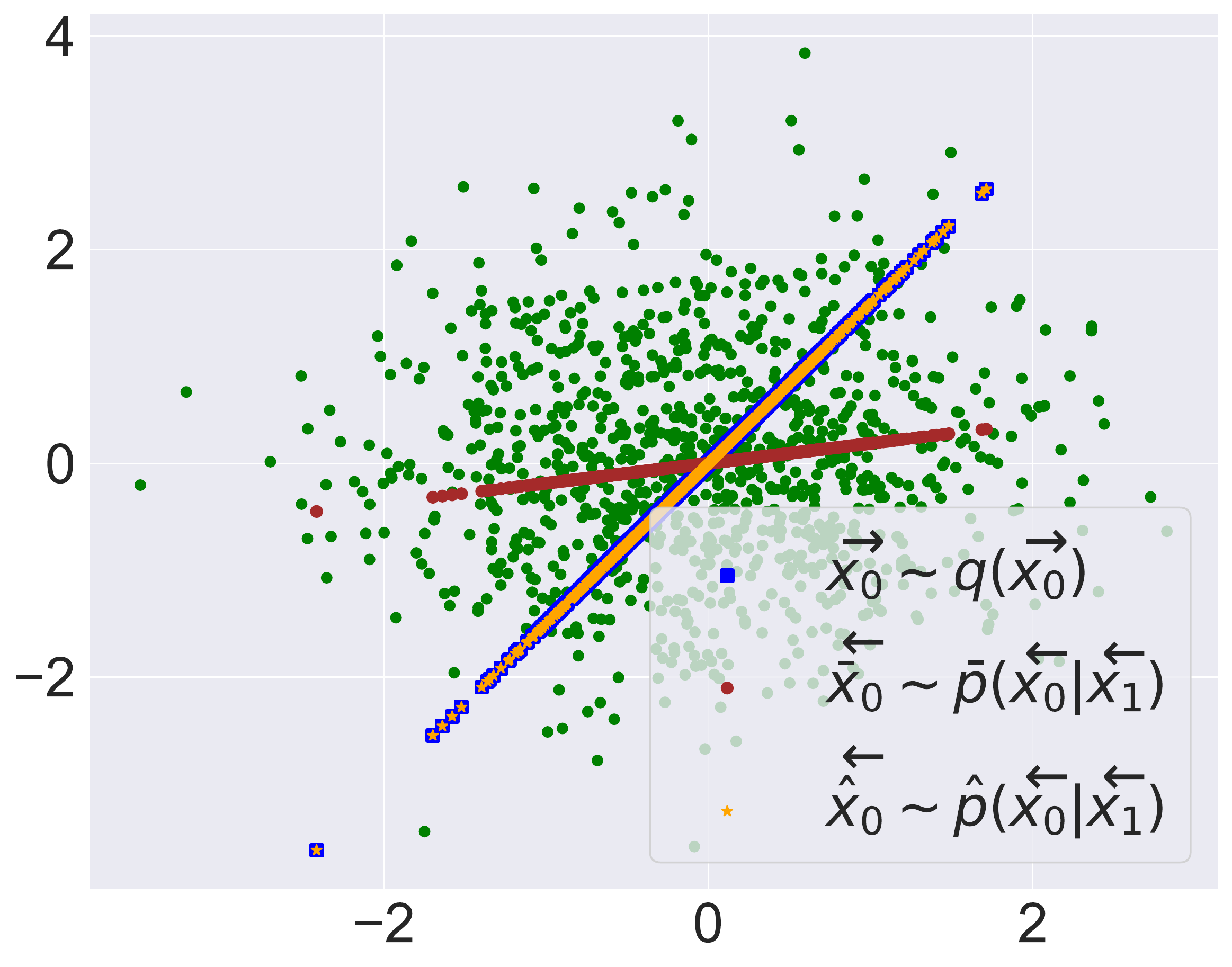}}
\caption{\texorpdfstring{Comparison between RePaint ($\protect\overleftarrow{\Bar{\vx}_0}\protect$) and RePaint$\protect^+\protect$ ($\protect\overleftarrow{\hat{\vx}_0}\protect$). RePaint$\protect^+\protect$ discovers $\protect\epsilon\protect$-accurate solutions whereas RePaint suffers from the misalignment bias. Green circles  indicate Gaussian prior $\protect\lxo\protect$ of the reverse Markov process. In the figure, the blue squares along the $\protect\sim 56^\circ\protect$ line represent the true samples. As before, the brown circles along the $\protect\sim 10^\circ\protect$ line represents the samples recovered by RePaint (this has a bias) and the orange stars along the $\protect\sim 56^\circ\protect$ line represent the samples recovered by Repaint$\protect^+\protect$. Note that the recovered samples by RePaint$\protect^+\protect$ has no observable bias (recovered samples are overlapping on the true samples).}{See caption in the published version.}}
\label{fig:repaintplus}
\end{center}
\vskip -0.3in
\end{figure}

\subsection{Contributions}
\label{subsec:contrib}
Our main contribution is to provide a theoretical justification for diffusion based image inpainting in a linear model setting. By analyzing diffusion over two states,
our analysis explains previously not understood phenomena. Importantly, we derive algorithmic insights which, as we demonstrate later,
deliver improvement beyond the two-state processes. 

First, unlike prior inpainting methods, we prove that diffusion based image inpainting easily adapts to each new mask without retraining (\textbf{Theorem~\ref{thm:inp-d-dim}} in \wasyparagraph{\ref{sec:theory}). 

Next, inspired by the empirical success of a recently proposed diffusion based image inpainting method~\cite{lugmayr2022repaint}, we analyze its theoretical properties in a system of two-state diffusion processes. We observe that there exists a bias due to misalignment that hampers perfect recovery: see Figure~\ref{fig:repaint}. Our analysis motivates us to rectify this misalignment that helps eliminate the bias.  We refer to this method as RePaint$^+$. We provide a simplified version of RePaint$^+$ in \textbf{Algorithm~\ref{alg:repaint-inference-special}} and a general version in \textbf{Algorithm~\ref{alg:repaint-inference-general}}. We conduct toy experiments in Appendix~\ref{sec:exps}. 

In the linear model setting, we derive a closed-form solution of the generative model using the transition kernels of DDPM (\textbf{Theorem~\ref{thm:gen-d-dim}}}). Using this solution, we prove that RePaint$^+$ enjoys a linear rate of convergence in the resampling phase (\textbf{Theorem~\ref{thm:inp-d-dim}}). This allows us to appropriately choose the number of resampling rounds in practice. We generalize this notion of convergence to a setting where the generative model is approximate (\textbf{Theorem~\ref{thm:inp-d-dim-noise}}). Further, we justify the benefits of resampling over slowing down the diffusion process in the context of inpainting (\wasyparagraph{\ref{subsec:main-slow-jump}}).

\textbf{Notation:} We denote by bold upper-case letter $\mA$ a matrix, bold lower-case letter $\vx$ a vector, and normal lower-case letter $x$ a scalar. $\mI_k$ denotes a $k\times k$ identity matrix. The set of integers $\{1,\dots,N \}$ are captured in $[N]$. Element-wise product is represented by $\odot$. The operator $\mD\left(\vx\right)$ diagonalizes a vector $\vx$.
$\left\|\mA\right\|$ denotes the spectral norm and $\left\|\mA\right\|_F$, the Frobenius norm of a matrix $\mA$. For a vector, $\left\|\vx\right\|$ denotes its Euclidean norm. $\gU\left(\cdots \right)$ denotes uniform distribution.

\section{Background on Diffusion Models}
\label{sec:background}
Like other generative models, such as GANs~\cite{goodfellow2020generative}, VAEs~\cite{kingma2013auto}, and Flow~\cite{dinh2017density}, Score-based generative models (SGM)~\cite{song2019generative} learn to sample from  an unknown distribution given a set of samples drawn from this distribution. In this section, we first briefly introduce key ingredients of SGM. Then, we detail the DDPM interpretation of SGM in \wasyparagraph{\ref{subsec:ddpm}}, which holds the foundation of RePaint$^+$.

\subsection{Score-based Generative Model}
\label{subsec:sgm}
The central part of SGM consists of two Stochastic Differential Equations (SDEs). The \textit{forward SDE} (\ref{eq:fwd-sde}), an Ornstein-Uhlenbeck (OU) process in its simplest form, transforms the data distribution $q(\rxz)$ to a reference distribution, which is $\gN\left( \0,\mI\right)$ in most cases. Here, a sample $\overrightarrow{\vx_t}$ at time $t$ follows:
\begin{align}
    d \overrightarrow{\vx_t} = -\frac{1}{2} \overrightarrow{\vx_t}dt + dW_t;~\rxz \sim q(\rxz). 
    \label{eq:fwd-sde}
\end{align}
With time rescaling parameters $\alpha_t$ and $\beta_t$, where $\alpha_t = \int_{0}^{t}\beta_s ds$, and normalized time $t = \{0,\cdots,1\}$, the transition kernel of (\ref{eq:fwd-sde}) becomes:
\begin{align*}
    q\left(\overrightarrow{\vx_t}\mid \rxz\right) = \gN\left(\overrightarrow{\vx_t};\exp{(-\frac{\alpha_t}{2} )\rxz, \left( 1 - \exp{(-\alpha_t)}\right)\mI
    }\right),
\end{align*}
 which leads to the well known form of forward SDE:
 \begin{align*}
     d \overrightarrow{\vx_t} = -\frac{1}{2} \beta_t\overrightarrow{\vx_t} + \sqrt{\beta_t}d\mathbf{B}_t;~\rxz \sim q(\rxz).
 \end{align*}
Here, $\left \{ d\mathbf{B}_t \right \}_{t\geq 0}$ represents the standard Brownian motion in $\R^d$. The \textit{reverse SDE} has a form similar to the forward SDE with time reversal:
\begin{align*}
    d\overleftarrow{\vx_t} = \frac{1}{2}\beta_{1-t} \overleftarrow{\vx_t} + \beta_{1-t} \nabla \log q(\overrightarrow{\vx_t}|\rxz) + \sqrt{\beta_{1-t}} d\mathbf{B}_t,
\end{align*}
where $\overleftarrow{\vx_T} \sim p\left(\overleftarrow{\vx_T} \right)\coloneqq \gN\left( \0,\mI\right)$. Usually, the Euler-Maryuama discretization scheme is employed while implementing these SDEs in practice. Since we do not have access to $q(\rxz)$, a neural network $s_\vtheta\left(\overrightarrow{\vx_t},t\right)$ is used to approximate the score function $\nabla \log q(\overrightarrow{\vx_t}|\rxz)$. Training is performed by minimizing a score-matching objective~\cite{hyvarinen2005estimation,vincent2011connection,song2019generative}:
\begin{align*}
    \min_{\theta} \mathop{\E}_{\substack{\rxz \sim q\left(\rxz \right) \\\overrightarrow{\vx_t} \sim q\left(\overrightarrow{\vx_t}|\rxz\right)}} \left[ \left\| \nabla \log q(\overrightarrow{\vx_t}|\rxz) -  s_\vtheta\left(\overrightarrow{\vx_t},t\right)\right\|^2 \right].
\end{align*}
Next, we discuss DDPM and its connection with SGM.

\subsection{Denoising Diffusion Probabilistic Model}
\label{subsec:ddpm}
 Diffusion model~\citep{sohl2015deep,ho2020denoising} is an emerging class of generative models that share strikingly similar properties with score-based generative models~\cite{song2019generative}. It consists of two stochastic processes. First, the \textit{forward process (diffusion process)} gradually adds Gaussian noise to an image according to a variance schedule. This is a Markov chain with stationary distribution typically set to a tractable distribution that is easy to sample from, e.g., a standard Gaussian $\gN\left(\0,\mI\right)$. Second, the \textit{reverse process (denoising process)} learns to gradually denoise a sample drawn from the tractable distribution. The denoising process continues until it produces a high-quality image from the original data distribution. 

\subsubsection{Forward Process}
\label{subsubsec:fwd_process}

Let $\overrightarrow{\vx_0} \sim q(\rxz)$ denote a sample in $\R^d$ drawn from the data distribution $q(\rxz)$. For discrete time steps $1\leq t \leq T$, $q(\vx_t)$ represents the distribution at time $t$. In the \textit{forward process}, the diffusion takes place according to a \textit{fixed} Gaussian transition kernel, i.e.,
\begin{align}
\label{eq:forward}
q\left(\overrightarrow{\vx_{1:T}}|\rxz \right) \coloneqq \prod_{t=1}^{T} q(\overrightarrow{\vx_t}|\overrightarrow{\vx_{t-1}});
~q(\overrightarrow{\vx_t}|\overrightarrow{\vx_{t-1}}) \coloneqq \mathcal{N}\left ( \overrightarrow{\vx_t}; \sqrt{1-\beta_t}\overrightarrow{\vx_{t-1}}, \beta_t \mI_d \right),
\end{align} 
where $\beta_1,\dots,\beta_T$ denote a deterministic variance schedule. Thus, a sample from $q(\overrightarrow{\vx_t}|\overrightarrow{\vx_{t-1}})$ is given by $\overrightarrow{\vx_t} = \sqrt{1 - \beta_t} \overrightarrow{\vx_{t-1}} + \sqrt{\beta_t}  \mathbf{\vepsilon}$, where $\vepsilon \sim \gN\left(\mathbf{0},\mI_d\right)$. The first term is called drift and the second, dispersion. An important property of the Gaussian diffusion process is that it has a simple form when sampling from any intermediate time steps. For $\alpha_t \coloneqq 1 - \beta_t$ and $\Bar{\alpha}_t \coloneqq \prod_{s=1}^{t} \alpha_s$, the conditional probability at time $t$ becomes: 
\begin{align*}
    q\left(\overrightarrow{\vx_t}|\overrightarrow{\vx_{t-1}}\right) = \gN\left(\overrightarrow{\vx_t}; \sqrt{\Bar{\alpha}_t}\overrightarrow{\vx_0}, \left(1-\Bar{\alpha}_t \right)\mI_d\right).
\end{align*}

\subsubsection{Reverse Process}
\label{subsubsec:rev_process}
The \textit{reverse process} is a Markov chain initialized at the stationary distribution of the \textit{forward process}, i.e., $p(\overleftarrow{\vx_T}) = \gN\left(\0,\mI_d\right)$. Unlike the forward process, the reverse process $p_\theta\left(\overleftarrow{\vx_{0:T}}\right)$ is generated from a \textit{learned} Gaussian kernel, i.e.,
\begin{align}
p_\theta\left ( \overleftarrow{\vx_{0:T}} \right ) = p(\overleftarrow{\vx_T}) \prod_{t=1}^{T} p_\theta\left ( \overleftarrow{\vx_{t-1}}|\overleftarrow{\vx_{t}} \right );
~p_\theta\left ( \overleftarrow{\vx_{t-1}}|\overleftarrow{\vx_{t}} \right ) \coloneqq \gN\left(\overleftarrow{\vx_{t-1}}; \mu_\theta\left(\overleftarrow{\vx_t}, t \right), \Sigma_\theta \left(\overleftarrow{\vx_t}, t\right)\right),
\label{eq:reverse}
\end{align}
where $\mu_\theta \left(\overleftarrow{\vx_t}, t\right)$ and $\Sigma_{\theta}\left(\overleftarrow{\vx_t}, t\right)$ are neural networks parameterized by $\theta$ to predict the mean and the variance of $p_\theta\left ( \overleftarrow{\vx_{t-1}}|\overleftarrow{\vx_{t}} \right )$, respectively.

\subsubsection{Training and Inference}
\label{subsec:train_and_infer}
DDPM aims to maximize the likelihood of a sample generated by the reverse process. As per equation~(\ref{eq:reverse}), the probability assigned to such a sample is obtained by marginalizing over the remaining random variables, i.e., $\overleftarrow{\vx_1}, \overleftarrow{\vx_2}, \cdots, \overleftarrow{\vx_T}$ as denoted by the following expression:   
\begin{align*}
    p(\overleftarrow{\vx_0}) = \int p(\overleftarrow{\vx_{0:T}}) d\overleftarrow{\vx_{1:T}},
\end{align*}
which is not easy to compute. It becomes tractable by considering relative probability between the forward and the reverse processes \cite{jarzynski1997equilibrium,sohl2015deep},  leading to the variational lower bound,
\begin{align}
&\mathbb{E}\left [ -\log p_\theta\left ( \overleftarrow{\vx_0} \right ) \right ] \leq \mathbb{E}_q \left [ -\log \frac{p_\theta\left ( \overleftarrow{\vx_{0:T}} \right )}{q\left ( \overrightarrow{\vx_{1:T}}|\overrightarrow{\vx_0} \right )} \right ]
= \mathbb{E}_q\left [ -\log p\left ( \overleftarrow{\vx_T} \right ) - \sum_{t\geq1} \log\frac{p_\theta\left ( \overleftarrow{\vx_{t-1}}| \overleftarrow{\vx_t} \right )}{q\left ( \overrightarrow{\vx_{t}}|\overrightarrow{\vx_{t-1}} \right )} \right ].
\label{eq:vlbo}
 \end{align}
This objective is further simplified to
\begin{align}
\label{eq:vlbo_kl}
 \mathbb{E}_q \big[\gD_{KL}\left ( q\left ( \overrightarrow{\vx_T}|\overrightarrow{\vx_0} \right ) \parallel p \left ( \overleftarrow{\vx_T} \right ) \right ) + \sum_{t>1}\gD_{KL}\left ( q\left ( \overrightarrow{\vx_{t-1}}|\overrightarrow{\vx_t}, \overrightarrow{\vx_0} \right ) \parallel p_\theta \left ( \overleftarrow{\vx_{t-1}}| \overleftarrow{\vx_{t}} \right ) \right ) - \log p_\theta\left ( \overleftarrow{\vx_0}|\overleftarrow{\vx_1} \right )\Big],
\end{align}
where $q\left(\overrightarrow{\vx_{t-1}}| \overrightarrow{\vx_t},\overrightarrow{\vx_0}\right) = \mathcal{N}\left ( \overrightarrow{\vx_{t-1}}; \tilde{\mu}_t\left ( \overrightarrow{\vx_t},\overrightarrow{\vx_0} \right ), \tilde{\beta}_t \mI_d \right )$, $\tilde{\mu}_t\left ( \overrightarrow{\vx_t},\overrightarrow{\vx_0} \right ) \coloneqq \frac{\sqrt{\Bar{\alpha}_{t-1} }\beta_t}{1-\Bar{a}_t}\overrightarrow{\vx_0} + \frac{\sqrt{\alpha}_t\left ( 1-\Bar{\alpha}_{t-1} \right )}{1-\Bar{\alpha}_t} \overrightarrow{\vx_t }$ and $ \tilde{\beta}_t \mI_d \coloneqq \frac{1-\Bar{\alpha}_{t-1}}{1-\Bar{\alpha}_t} \beta_t $. Equation~(\ref{eq:vlbo_kl}) contains \textit{three} crucial terms. The \textit{first} term measures the divergence between the stationary distribution of the forward process conditioned on a clean sample $\overrightarrow{\vx_0} \sim q\left(\overrightarrow{\vx_0}\right)$ and the initial distribution of the reverse process. The divergence is negligible in practice, thanks to the exponential convergence of OU processes. This is ignored as there are no trainable parameters, $\vtheta$.

The \textit{second} term measures the divergence between the forward and the reverse process at intermediate time steps, $1<t\leq T$. Since the conditional probabilities are Gaussians, this
can be explicitly computed as:
\begin{align}
    \label{eq:vlbo_mu}
\mathbb{E}_q\left [ \frac{1}{2\beta_t^2} \left \| \tilde{\mu}_t\left ( \overrightarrow{\vx_t}, \overrightarrow{\vx_0} \right ) - \mu_\theta\left ( \overleftarrow{\vx_t}, t \right ) \right \|^2
\right ] + Constant,
\end{align} 
where $p_\theta\left(\overleftarrow{\vx_{t-1}}|\overleftarrow{\vx_t} \right) = \gN\left( \overleftarrow{\vx_{t-1}}; \mu_\theta\left(\overleftarrow{\vx_t}, t\right), \beta_t\mI\right)$. In the training phase, $\overleftarrow{\vx_t}$ is replaced with $\overrightarrow{\vx_t}$ since the \textit{first} term in (\ref{eq:vlbo_kl}) is negligible.  Further, reparameterization of the posterior mean as $\mu_\theta\left(\overrightarrow{\vx_t}, t\right) = \frac{1}{\sqrt{\alpha_t}}\left ( \overrightarrow{\vx_t} - \frac{\beta_t}{\sqrt{1-\Bar{\alpha}_t}}~\vepsilon_\theta\left ( \overrightarrow{\vx_t}, t \right ) \right )$ yields better results and a simplified loss~\cite{ho2020denoising}:
\begin{align}
\label{eq:vlbo_reparam}
    \mathbb{E}_{\vx_0,\vepsilon}\left [ \frac{1}{2\alpha_t\left ( 1-\Bar{\alpha}_t \right )} \left \| \vepsilon - \vepsilon_\theta\left (\sqrt{\Bar{\alpha}_t}~\overrightarrow{\vx_0} + \sqrt{1-\Bar{\alpha}_t} ~\vepsilon, t \right ) \right \|^2
\right ].
\end{align}
During inference, for $\vepsilon \sim \gN\left( 
0,\mI\right)$, a sample is generated by $\overleftarrow{\vx_{t-1}} = \frac{1}{\sqrt{\alpha_t}}\left ( \overleftarrow{\vx_t} - \frac{\beta_t}{\sqrt{1-\Bar{\alpha}_t}} ~\vepsilon_\theta\left ( \overleftarrow{\vx_t}, t \right ) \right ) + \sqrt{\beta_t}~\vepsilon$.

The \textit{third} term is the standard maximum likelihood estimator.
With this reparameterization, the problem of learning probability is converted to a practically implementable minimum mean squared error (MMSE) problem. The goal of training DDPMs is to obtain an optimal $\theta^*$ that solves the MMSE problem with posterior mean~(\ref{eq:vlbo_mu}) or noise~(\ref{eq:vlbo_reparam}).

During inference, the reverse process is initialized at the stationary distribution of the forward process, usually $\gN\left(\0,\mI\right)$. Then, the reverse Gaussian transition kernel is followed using $\mu_{\vtheta^*}\left(\overleftarrow{\vx_t},t \right)$. In the final step, $\mu_{\vtheta^*}\left(\overleftarrow{\vx_1},1\right)$ is displayed without noise, which is a sample from the original data distribution. Ignoring some constant factors, the forward and reverse \textit{SDEs} of SGM and the forward and reverse \textit{processes} of DDPM are idential if we use Gaussian transition kernel. Throughout this paper, we interchangeably use these terms.
\vspace{-0.2in}
\begin{algorithm}[tb]
   \caption{DDPM Training}
   \label{alg:ddpm-train}
\begin{algorithmic}
   \STATE {\bfseries Input:}  Initialized weights $\vtheta$, Stepsize $\eta$
   \REPEAT
   \STATE Draw $\rxz \sim q(\rxz)$, $\epsilon \sim \gN\left( \0,\mI_d\right)$,
   \STATE $t\sim Uniform\left(\{1,\dots,T \}\right)$
   \STATE Loss (\ref{eq:vlbo_mu}): $\gL\left(\vtheta\right) = \left \| \tilde{\mu}_t\left ( \overrightarrow{\vx_t}, \overrightarrow{\vx_0} \right ) - \mu_\theta\left ( \overleftarrow{\vx_t}, t \right ) \right \|^2$  or \\ (\ref{eq:vlbo_reparam}): $\gL\left(\vtheta\right) = \left \| \vepsilon - \vepsilon_\theta\left (\sqrt{\Bar{\alpha}_t}~\overrightarrow{\vx_0} + \sqrt{1-\Bar{\alpha}_t} ~\vepsilon, t \right ) \right \|^2$ 
   \STATE Run gradient descent $\vtheta_{t+1} = \vtheta_{t} - \eta\nabla \gL\left( \vtheta\right)$
   \UNTIL{Convergence}
  \STATE {\bfseries Output:} Trained weights $\vtheta$
\end{algorithmic}
\end{algorithm}
\begin{algorithm}[tb]
   \caption{RePaint\textcolor{cyan}{$^+$} Inference Special Case}
   \label{alg:repaint-inference-special}
\begin{algorithmic}
\STATE {\bfseries Input:} Given $\rxz \sim q\left(\rxz\right)$, Inpainting mask $\vm$, DDPM trained weights $\vtheta$, Fixed variance $\beta > 0 $
\STATE Draw $\overleftarrow{\vx_1} \sim \gN\left(\0,\mI_d\right)$
\FOR{$r=1,\dots,R$}        
   \STATE Drift from reverse Markov process (\ref{eq:reverse}): $\mu_{\vtheta}\left(\overleftarrow{\vx_1}\right)$
   \STATE \textcolor{cyan}{Alignment of drift: $\overleftarrow{\vx_{0}} = \mu_{\vtheta}\left(\overleftarrow{\vx_1}\right) \times \frac{1}{\sqrt{1-\beta}}$}
   \STATE Inpainting: $\overleftarrow{\vx_{0}} = \vm \odot \overleftarrow{\vx_{0}} + \left(1-\vm \right)\odot\overrightarrow{\vx_{0}}$
   \STATE Update $\overleftarrow{\vx_1} \leftarrow \lxz$
\ENDFOR
\STATE \textcolor{cyan}{Drift from reverse Markov process (\ref{eq:reverse}): $\mu_{\vtheta}\left(\overleftarrow{\vx_0}\right)$}
\STATE \textcolor{cyan}{Alignment of drift: $\overleftarrow{\hat{\vx}_{0}} = \mu_{\vtheta}\left(\overleftarrow{\vx_0}\right)\times \frac{1}{\sqrt{1-\beta}}$}
\STATE {\bfseries Output:} $\overleftarrow{\hat{\vx}_0}$
\end{algorithmic}
\end{algorithm}

\section{Theoretical Results}
\label{sec:theory}
In this section, we present our main generative and inpainting results, first for a two-state model~(\wasyparagraph{\ref{subsec:gen-model}, \wasyparagraph{\ref{subsec:image-inpainting}, \wasyparagraph{\ref{subsec:image-inpainting-with-noise}}), and subsequently generalize to a multi-state diffusion model~(\wasyparagraph{\ref{subsec:main-slow-jump}}).

\subsection{Problem Setup}
\label{subsec:prob_setup}
The goal is to minimize the divergence  $\gD\left(p_\vtheta(\lxz), q(\rxz)\right)$, where $\gD(.,.)$ may be a total variation (TV) distance~\cite{chen2022sampling}, KL-divergence~\cite{ho2020denoising}, or Wasserstein distance~\cite{debortoli2022convergence}. For a generative model, the objective is met as long as the distribution of $\lxz$ matches with $q(\rxz)$. For diffusion based inpainting, however, it is necessary to generate an $\epsilon$-close image, i.e., $\left\|\lxz - \rxz \right\| \leq \epsilon$ and $\lxz$ lies in the support of $q(\rxz)$. Suppose the data distribution $q\left(\rxz\right)$ is supported on a $k$-dimensional subspace of $\R^d$. For $\mA\in \R^{d\times k}$, we have the original samples $\rxz = \mA\vz_0$, where $\vz_0 \in \R^k$ is distributed according to $\gN\left(\0,\mI_k\right)$ and $\mA$ is full rank, i.e., $rank(\mA) = k \leq d$. We ask if a model trained for a {\em generative modeling task} helps recover the missing parts in an {\em image inpainting task}. As noted earlier in \wasyparagraph{\ref{sec:intro}}, recovery under this model using optimization techniques is well understood. Our goal is to use this well established framework as a vehicle to analyze and provide new insights into sample recovery using DDPMs.


\begin{algorithm}[tb]
   \caption{RePaint$^+$ Inference General Case}
   \label{alg:repaint-inference-general}
\begin{algorithmic}
\STATE {\bfseries Input:} Given $\rxz \sim q\left(\rxz\right)$, DDPM trained weights $\vtheta$, Initial variance $\beta_0 = 0$, Variance schedule $\{\beta_t\}_{t\geq 0}$, Alignment coefficients for drift ($\{\omega_t\}_{t\geq 0}$) and dispersion ($\{\xi_t\}_{t\geq 0}$), Initial $\overleftarrow{\vx_T} \sim \gN\left(\0,\mI_d\right)$
\FOR{$t = T,\dots,1$}
    \FOR{$r=1,\dots,R$}
        \IF{$t>1$}
            \STATE $\overrightarrow{\vepsilon} \sim \gN\left( \0,\mI_d \right)$ and $\overleftarrow{\vepsilon} \sim \gN\left( \0,\mI_d \right)$ 
        \ELSE 
            \STATE  $\overrightarrow{\epsilon} = 0$ and $\overleftarrow{\epsilon}=0$
   \ENDIF
   \STATE Known part: $\overrightarrow{\vx_{t-1}} = \sqrt{\Bar{\alpha}_{t-1}} \rxz + \sqrt{1-\Bar{\alpha}_{t-1}} \overrightarrow{\vepsilon}$
   \STATE Drift $\mu_{\vtheta}\left(\overleftarrow{\vx_t},t\right) = \frac{1}{\sqrt{\alpha_t}}\left ( \overleftarrow{\vx_t} - \frac{\beta_t}{\sqrt{1-\Bar{\alpha}_t}} ~\vepsilon_\theta\left ( \overleftarrow{\vx_t}, t \right ) \right )$ and dispersion $\sqrt{\beta_t}\overleftarrow{\vepsilon}$ from reverse process (\ref{eq:reverse})
   \STATE \textcolor{cyan}{Alignment of drift: $\mu_{\vtheta}\left(\overleftarrow{\vx_t},t\right)\times \omega_t$   
   }
   \STATE \textcolor{cyan}{Alignment of dispersion: $ \sqrt{\beta_t}\overleftarrow{\vepsilon}\times \xi_t$}
   \STATE \textcolor{cyan}{Compute $\overleftarrow{\vx_{t-1}} = \mu_{\vtheta}\left(\overleftarrow{\vx_t},t\right)\times \omega_t + \sqrt{\beta_t}\overleftarrow{\vepsilon}\times \xi_t$}
   \STATE Inpainting: $\overleftarrow{\vx_{t-1}} = \vm \odot \overleftarrow{\vx_{t-1}} + \left(1-\vm \right)\odot\overrightarrow{\vx_{t-1}}$
   \IF{$r < R$ and $t>1$}
        \STATE Draw $\vepsilon \sim \gN\left(\0,\mI_d \right)$
        \STATE Pushforward: $\overleftarrow{\vx_t} = \sqrt{1-\beta_t} \overleftarrow{\vx_{t-1}} + \sqrt{\beta_t} \vepsilon$
   \ELSE
       \STATE Update $\overleftarrow{\vx_1} \leftarrow \lxz$
   \ENDIF
   \ENDFOR
\ENDFOR
\STATE Drift $\mu_{\vtheta}\left(\overleftarrow{\vx_1},1\right) = \frac{1}{\sqrt{\alpha_1}}\left ( \overleftarrow{\vx_1} - \frac{\beta_1}{\sqrt{1-\Bar{\alpha}_1}} ~\vepsilon_\vtheta\left ( \overleftarrow{\vx_1}, 1 \right ) \right )$ from reverse Markov process (\ref{eq:reverse})
   \STATE \textcolor{cyan}{Alignment of drift: $\mu_{\vtheta}\left(\overleftarrow{\vx_1},1\right)\times \omega_1$   
   }
   \STATE \textcolor{cyan}{Compute $\overleftarrow{\hat{\vx}_{0}} = \mu_{\vtheta}\left(\overleftarrow{\vx_1},1\right)\times \omega_1$}
\STATE {\bfseries Output:} $\overleftarrow{\hat{\vx}_0}$
\end{algorithmic}
\end{algorithm}

\subsection{Major Insights from Analysis}
\label{subsec:main-d-dim}
Our main result is that given the weights of a DDPM trained for generative modeling tasks, we can repurpose its objective to image inpainting without retraining for the inpainting task and show perfect recovery with linear rate of convergence. Under \textbf{Assumption~\ref{assm:ortho}}, we first show that the weights learned by DDPM training \textbf{Algorithm~\ref{alg:ddpm-train}} successfully capture the underlying data generating distribution, see \textbf{Theorem \ref{thm:gen-d-dim}}. Then, we prove in \textbf{Theorem~\ref{thm:inp-d-dim}} under \textbf{Assummption~\ref{assm:ortho} and \ref{assm:inpainting}} that the optimal solution of DDPM training \textbf{Algorithm~\ref{alg:ddpm-train}} generalizes well to image inpainting tasks with unknown masks, see \textbf{Algorithm~\ref{alg:repaint-inference-special}} and \textbf{Algorithm~\ref{alg:repaint-inference-general}}. Similar to the variance schedule $\{\beta_t\}_{t\geq0}$, the alignment coefficients for drift $\{\omega_t\}_{t\geq0}$ and dispersion $\{\xi_t\}_{t\geq0}$ can be computed beforehand. For instance, in \textbf{Algorithm~\ref{alg:repaint-inference-special}}, we choose $\beta_t=\beta$ and $\omega_t = 1/\sqrt{1-\beta}$\footnote{The proposed modifications over RePaint~\cite{lugmayr2022repaint} are \textcolor{cyan}{highlighted} in \textbf{Algorithm~\ref{alg:repaint-inference-special}} and \textbf{Algorithm~\ref{alg:repaint-inference-general}}.}.

\begin{assumption}
    \label{assm:ortho}
    The column vectors of data generating model $\mA$ are orthonormal, i.e., $\mA^T\mA = \mI_k$.
\end{assumption}

\begin{assumption}
    \label{assm:inpainting}
    Given an inpainting mask $\vm \in \{0,1\}^d$, the following holds true: $\mA\mA^T\mD(\vm) \prec  \mI_d$.
\end{assumption}

\textbf{Assumption~\ref{assm:ortho}} is a mild assumption used to simplify the expressions. 
For \textbf{Assumption~\ref{assm:inpainting}}, recall that $\mD(\vm)$ is a $d \times d$ diagonal matrix (a mask) with elements in diagonal entries set to $1$ whenever data is missing  (i.e., data is masked) and $0$ otherwise (data is available). The symbol $\prec$ indicates that the spectral norm of $\mA\mA^T\mD(\vm)$, denoted by $\lambda_{\max}$, is strictly less than 1.


\textbf{Assumption~\ref{assm:inpainting}} has a physical interpretation that agrees with the intuition. Since masking an image reduces its free energy, it is important that the mask $\mD(\vm)$ is well behaved so that sufficient energy is left in the masked image for a faithful reconstruction. If $\mD(\vm) = \mI$, then all the energy of the original signal is lost. In this scenario, it is impossible to recover the original signal. When $\mD(\vm)=\0$, it is a trivial case because the algorithm has access to the original signal itself. 
Our results on recoverability hold for all the remaining cases when $\vm \sim \{0,1 \}^d$ such that $\lambda_{\max} < 1$.

An interesting avenue for further research is to precisely characterize how much information we need in the form of $\mD(\vm)$ for perfect recovery. This problem has been extensively studied in compressed sensing literature~\cite{bora2017compressed,wu2019deep}, which is not the focus of this paper. We defer such questions to future work. 

\subsubsection{Generative Modeling using DDPM}
\label{subsec:gen-model}
Here, we present \textbf{Theorem~\ref{thm:gen-d-dim}} for computing the analytical solution of DDPM for a two-state model. In general, computing the analytical solution is a futile exercise as we do not know $q(\rxz)$ a priori. However, the additional structure on the data generating distribution, as typically assumed in downstream tasks such as image inpainting, allows us to derive an explicit form of the solution~\cite{bora2017compressed,wu2019deep}. We consider the objective function~(\ref{eq:vlbo_mu}) that estimates the mean of the posterior. One may wish to estimate the noise instead with the reparameterized objective~(\ref{eq:vlbo_reparam}). Both the choices produce similar experimental results~\cite{ho2020denoising}.

\begin{theorem}[Generative Modeling]
    Suppose \textbf{Assumption~\ref{assm:ortho}} holds. Let us denote $\vtheta^* =\arg_{\min} \gL\left(\vtheta\right)$, where $\gL\left(\vtheta \right)$ is defined as: $$ \E_{\rxz, \overrightarrow{\vepsilon}}\left[ \left \| \tilde{\mu}_1\left ( \overrightarrow{\vx_1}(\rxz, \overrightarrow{\vepsilon}), \overrightarrow{\vx_0} \right ) - \mu_\theta\left ( \overrightarrow{\vx_1}\left(\rxz, \overrightarrow{\vepsilon}\right) \right ) \right \|^2 \right].$$ For a fixed variance $\beta > 0$, if we consider a function approximator $\mu_\vtheta\left( \overrightarrow{\vx_1}\left(\rxz, \overrightarrow{\vepsilon}\right) \right ) = \vtheta \overrightarrow{\vx_1}\left(\rxz, \overrightarrow{\vepsilon}\right)$, then the closed-form solution $\vtheta^* = \sqrt{1-\beta}\mA\mA^T$, which upon renormalization by $\left(1/\sqrt{1-\beta}\right)$ recovers the true subspace of $q\left(\rxz\right)$.
    \label{thm:gen-d-dim}
\end{theorem}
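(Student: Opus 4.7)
The plan is to reduce the stated objective to an ordinary linear least-squares problem and then read off the optimizer directly from the normal equations.

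First, I would simplify $\tilde{\mu}_1$. At the boundary $t=1$, the standard convention $\Bar{\alpha}_0 = 1$ (empty product) forces $1-\Bar{\alpha}_0 = 0$, so the second term in the defining formula of $\tilde{\mu}_t$ vanishes, while the first coefficient reduces to $\tfrac{\sqrt{\Bar{\alpha}_0}\beta}{1-\Bar{\alpha}_1} = \tfrac{\beta}{\beta} = 1$. Hence $\tilde{\mu}_1(\overrightarrow{\vx_1}, \overrightarrow{\vx_0}) = \overrightarrow{\vx_0}$. Plugging in the linear ansatz $\mu_\vtheta(\overrightarrow{\vx_1}) = \vtheta\, \overrightarrow{\vx_1}$ together with the forward-process identity $\overrightarrow{\vx_1} = \sqrt{1-\beta}\, \overrightarrow{\vx_0} + \sqrt{\beta}\, \overrightarrow{\vepsilon}$ turns $\gL(\vtheta)$ into the standard population least-squares objective $\E\bigl[\|\overrightarrow{\vx_0} - \vtheta\,\overrightarrow{\vx_1}\|^2\bigr]$ in the matrix variable $\vtheta$.

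Next, I would apply the orthogonality principle. Setting $\nabla_\vtheta \gL(\vtheta) = 0$ yields the matrix normal equation $\vtheta^*\,\E[\overrightarrow{\vx_1}\overrightarrow{\vx_1}^T] = \E[\overrightarrow{\vx_0}\overrightarrow{\vx_1}^T]$. Using $\overrightarrow{\vx_0} = \mA\vz_0$ with $\vz_0 \sim \gN(\0,\mI_k)$ independent of $\overrightarrow{\vepsilon} \sim \gN(\0,\mI_d)$, a direct second-moment computation gives $\E[\overrightarrow{\vx_0}\overrightarrow{\vx_1}^T] = \sqrt{1-\beta}\,\mA\mA^T$ and $\E[\overrightarrow{\vx_1}\overrightarrow{\vx_1}^T] = (1-\beta)\mA\mA^T + \beta\,\mI_d$. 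I would then verify the candidate $\vtheta^* = \sqrt{1-\beta}\,\mA\mA^T$ by substitution: Assumption~\ref{assm:ortho} gives $\mA^T\mA = \mI_k$, which makes $\mA\mA^T$ idempotent, so $\mA\mA^T\mA\mA^T = \mA\mA^T$ and both sides of the normal equation collapse to $\sqrt{1-\beta}\,\mA\mA^T$. Uniqueness follows because $(1-\beta)\mA\mA^T + \beta\,\mI_d$ has eigenvalues equal to $1$ on the column span of $\mA$ and $\beta > 0$ on its orthogonal complement, and is therefore invertible.

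Finally, since $\mA\mA^T$ is the orthogonal projector onto the $k$-dimensional column span of $\mA$ --- precisely the support of $q(\rxz)$ --- the rescaling $\vtheta^*/\sqrt{1-\beta} = \mA\mA^T$ recovers this true subspace, as claimed. The argument is largely mechanical; the main points to be careful about are the $t=1$ boundary that collapses $\tilde{\mu}_1$ cleanly to $\overrightarrow{\vx_0}$ (so that the regression target is simply the clean sample), and the invocation of Assumption~\ref{assm:ortho} at exactly the step where idempotence of $\mA\mA^T$ lets the normal equation close in closed form without having to expand $(\E[\overrightarrow{\vx_1}\overrightarrow{\vx_1}^T])^{-1}$ via the Woodbury identity.
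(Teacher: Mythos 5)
Your proof is correct, and it reaches the paper's answer by a cleaner computational route. Both you and the paper reduce the problem to the same population least-squares normal equation $\vtheta^*\,\E[\overrightarrow{\vx_1}\overrightarrow{\vx_1}^T] = \E[\overrightarrow{\vx_0}\overrightarrow{\vx_1}^T]$ with $\E[\overrightarrow{\vx_1}\overrightarrow{\vx_1}^T] = (1-\beta)\mA\mA^T+\beta\mI_d$, and both simplify $\tilde{\mu}_1$ to $\overrightarrow{\vx_0}$ in the same way. The difference is in how the equation is solved: the paper works row by row, writes $\vtheta_i^* = \E[\overrightarrow{\vx_1}\overrightarrow{\vx_1}^T]^{-1}\E[\overrightarrow{\vx_{0,i}}\,\overrightarrow{\vx_1}]$, and then explicitly inverts $(1-\beta)\mA\mA^T+\beta\mI_d$ via the Woodbury identity before simplifying with $\mA^T\mA=\mI_k$; you instead propose the candidate $\sqrt{1-\beta}\,\mA\mA^T$, verify it by substitution using idempotence of the projector $\mA\mA^T$ (which is where Assumption~\ref{assm:ortho} enters for both arguments), and secure uniqueness by noting the second-moment matrix has eigenvalues $1$ on the column span of $\mA$ and $\beta>0$ on its complement, hence is invertible. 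Your route buys a shorter, less error-prone derivation that sidesteps the Woodbury algebra entirely, at the cost of being a verification rather than a constructive derivation --- you need to know (or guess) the answer in advance, whereas the paper's computation would produce the optimizer even if the form of $\vtheta^*$ were not anticipated. Your handling of the $t=1$ boundary ($\Bar{\alpha}_0=1$, so $\tilde{\mu}_1(\overrightarrow{\vx_1},\overrightarrow{\vx_0})=\overrightarrow{\vx_0}$) matches the paper exactly, and the concluding observation that $\mA\mA^T$ is the orthogonal projector onto the support of $q(\rxz)$ is the same interpretation the paper gives after renormalization.
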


\begin{proof}
    The proof is included in Appendix~\ref{subsec:prf-gen-d-dim}.
\end{proof}

 For any $\overleftarrow{\vx_1} \sim \gN\left(\0,\mI_d \right)$, the reverse process with the optimal solution $\vtheta^*$ generates $\overleftarrow{\vx_0} = \sqrt{1-\beta} \mA \mA^T \lxo$. After renormalization, it gives $\overleftarrow{\vx_0} = \mA \left( \mA^T \lxo\right)$ which is \textbf{unbiased} and has \textbf{identity covariance}, i.e., $\E\left[\mA^T\lxz \right]=\0$ and $\E\left[\left(\mA^T\lxo\right)\left(\mA^T\lxo\right)^T \right]=\E\left[\mA^T\left(\lxo\lxo^T\right)\mA \right]=\mI_k$. This verifies that \textbf{Algorithm~\ref{alg:ddpm-train}} recovers the subspace of the underlying data distribution, which is supported on a linear manifold with Gaussian marginals. In what follows, we provide a detailed analysis of the inpainting \textbf{Algorithm~\ref{alg:repaint-inference-special}}.

\subsubsection{Image Inpainting using RePaint\texorpdfstring{$^{+}$}{RePaintPlus}}
\label{subsec:image-inpainting}
Continuing with the two-state model, there are essentially three key ingredients to diffusion based image inpainting. First, we initialize the reverse process at $\lxo \sim \gN\left(\0,\mI_d\right)$. Using the reverse Gaussian transition kernel obtained in \textbf{Theorem~\ref{thm:gen-d-dim}}, we generate $\lxz = \left( 1/\sqrt{1-\beta}\right)\mu_{\vtheta^*}\left(\lxo\right)$. As per \textbf{Theorem~\ref{thm:gen-d-dim}}, $\lxz$ is \textit{a sample} that lies on the manifold. However, we note that $\lxz$ is \textit{not the sample} we are looking for as it could potentially lie far away from $\rxz$. 
    
    Second, we replace certain parts of $\lxz$ as per the given mask $\vm$ with known information from $\rxz$ using the following rule: $\overrightarrow{\vx_0^{1}} = \vm \odot \overleftarrow{\vx_0} + (\1-\vm) \odot \overrightarrow{\vx_0}$. This produces an arbitrary sample $\overrightarrow{\vx_0^1} \in \R^d$. Since the reverse SDE pushes an arbitrary sample in $\R^d$ onto the manifold, we feed $\overrightarrow{\vx_0^1}$ as an \textit{input} to the reverse SDE in the first resampling round. Alternatively, one may pass $\overrightarrow{\vx_0^1}$ through the forward SDE without adding any noise. In this case, the generated sample $\overrightarrow{\vx_1^1}$ serves as the input to the reverse SDE. This provides just an additional scaling that is easy to handle in our analysis.
    
    Finally, the reverse SDE generates $\overleftarrow{\vx_0^1}$, which upon realignment is expected to be $\vepsilon$-close to the original image, i.e., $\left\|\overleftarrow{\vx_0^1} - \rxz \right\| \leq \epsilon$. This completes one resampling step. 

    Usually, one resampling step is not sufficient to obtain an $\vepsilon$-close solution. We ask how many resampling steps \textbf{Algorithm~\ref{alg:repaint-inference-special}} needs to generate a satisfactory inpainted image. Furthermore, what is the rate at which it converges to the $\vepsilon$-neighborhood of the original image. We answer these questions favorably in \textbf{Theorem~\ref{thm:inp-d-dim}}.

\begin{theorem}[Image Inpainting]
    Let \textbf{Assumption~\ref{assm:ortho}} and \textbf{Assumption~\ref{assm:inpainting}} hold. Suppose $\lambda_{\max}\coloneqq \left\| \mA\mA^T\mD(\vm)\right\|$ and $\vtheta^* =\arg_{\min} \gL\left(\vtheta\right)$, where $\gL\left(\vtheta \right)$ is defined as: $$ \E_{\rxz, \overrightarrow{\vepsilon}}\left[ \left \| \tilde{\mu}_1\left ( \overrightarrow{\vx_1}(\rxz, \overrightarrow{\vepsilon}), \overrightarrow{\vx_0} \right ) - \mu_\theta\left ( \overrightarrow{\vx_1}\left(\rxz, \overrightarrow{\vepsilon}\right) \right ) \right \|^2 \right].$$ For any mask $\vm \sim \{0,1\}^d$, a fixed variance $\beta > 0$, a partially known image $\rxz \sim q\left(\rxz\right)$, and reverse SDE initialized at $\lxo \sim \gN\left(\0,\mI_d\right)$, the maximum number of resampling steps ($r$) needed by \textbf{Algorithm~\ref{alg:repaint-inference-special}} to recover an $\epsilon$-accurate inpainted image, i.e., $$\left\|\overleftarrow{\vx_0^r} - \rxz \right\| \leq \lambda_{\max}^r \biggl( \frac{\left\| \vtheta^*\right\| \left\| \lxo - \rxz \right\|}{\sqrt{1-\beta}}\biggr) \leq \epsilon$$
    is upper bounded by 
    \vspace{-0.2in}
    \begin{align*}
        \gO\left( \frac{\log\left( \frac{\left \| \theta^* \right \|\left \| \overleftarrow{x_1}- \rxz \right \|}{\epsilon\sqrt{1-\beta}}\right)}{\log\left(\frac{1}{\lambda_{\max}} \right)}\right).
    \end{align*}
    \label{thm:inp-d-dim}
\end{theorem}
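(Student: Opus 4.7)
The plan is to reduce the RePaint$^+$ iteration of \textbf{Algorithm~\ref{alg:repaint-inference-special}} to a linear contraction in the error $\overleftarrow{\vx_0^r}-\rxz$, and then unroll the resulting geometric recursion. By \textbf{Theorem~\ref{thm:gen-d-dim}}, the optimal drift in this two-state setting is $\mu_{\vtheta^*}(\vu)=\sqrt{1-\beta}\,\mA\mA^T\vu$, so composing it with the alignment rescaling $1/\sqrt{1-\beta}$ inside the loop amounts to applying the orthogonal projector $\mA\mA^T$ onto the column space of $\mA$. Observe also that $\|\vtheta^*\|=\sqrt{1-\beta}$, so the prefactor $\|\vtheta^*\|/\sqrt{1-\beta}$ in the stated bound equals $1$ and only the geometric factor $\lambda_{\max}^r$ needs to be extracted.

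Next, I would write one round of resampling in closed form. Setting $\overleftarrow{\vx_0^0}\coloneqq\lxo$ for the initial Gaussian draw, the drift, alignment, and inpainting lines of the loop yield
\begin{equation*}
\overleftarrow{\vx_0^{r+1}} \;=\; \mD(\vm)\,\mA\mA^T\,\overleftarrow{\vx_0^r} \;+\; \bigl(\mI_d-\mD(\vm)\bigr)\,\rxz .
\end{equation*}
The key observation is that $\rxz=\mA\vz_0$ lies in the column space of $\mA$, and so, by \textbf{Assumption~\ref{assm:ortho}}, $\mA\mA^T\rxz=\rxz$; hence the fixed-point identity
\begin{equation*}
\rxz \;=\; \mD(\vm)\,\mA\mA^T\,\rxz \;+\; \bigl(\mI_d-\mD(\vm)\bigr)\,\rxz
\end{equation*}
holds. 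Subtracting the two displays produces the clean linear error recursion
\begin{equation*}
\overleftarrow{\vx_0^{r+1}}-\rxz \;=\; \mD(\vm)\,\mA\mA^T\,\bigl(\overleftarrow{\vx_0^r}-\rxz\bigr) ,
\end{equation*}
which is precisely where the RePaint$^+$ alignment removes the bias that Figure~\ref{fig:repaint} attributes to vanilla RePaint.

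Finally, I would take spectral norms. Both $\mA\mA^T$ and $\mD(\vm)$ are symmetric, so $\|\mD(\vm)\mA\mA^T\|=\|(\mD(\vm)\mA\mA^T)^T\|=\|\mA\mA^T\mD(\vm)\|=\lambda_{\max}$, and \textbf{Assumption~\ref{assm:inpainting}} gives $\lambda_{\max}<1$. A straightforward induction on $r$ then produces
\begin{equation*}
\bigl\|\overleftarrow{\vx_0^r}-\rxz\bigr\| \;\leq\; \lambda_{\max}^r\,\bigl\|\lxo-\rxz\bigr\| \;=\; \lambda_{\max}^r\,\frac{\|\vtheta^*\|\,\|\lxo-\rxz\|}{\sqrt{1-\beta}} ,
\end{equation*}
matching the stated contraction. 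Requiring the right-hand side to be at most $\epsilon$ and solving via elementary logarithms yields the iteration complexity $r=\gO\bigl(\log(\|\vtheta^*\|\,\|\lxo-\rxz\|/(\epsilon\sqrt{1-\beta}))/\log(1/\lambda_{\max})\bigr)$.

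The technical work here is genuinely light; the only real step is the projector identity $\mA\mA^T\rxz=\rxz$, which collapses the affine iteration into a linear contraction and would not hold without the alignment rescaling. The one piece of bookkeeping that can easily trip one up is noticing that the matrix $\mD(\vm)\mA\mA^T$ driving the loop is \emph{not} literally the same as the $\mA\mA^T\mD(\vm)$ in \textbf{Assumption~\ref{assm:inpainting}}, but shares its spectral norm via the transpose identity used above.
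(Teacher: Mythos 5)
Your proof is correct and lands on the same contraction rate as the paper's, but it gets there by a somewhat different route. The paper unrolls the affine iteration explicitly, writing $\overleftarrow{\vx_0^r} = \left(\mA\mA^T\mD(\vm)\right)^r\lxz + \left(\mI + \sum_{i=1}^{r-1}\left(\mA\mA^T\mD(\vm)\right)^i\right)\mA\mA^T\mD(\1-\vm)\mA\vz_0$, collapses the geometric sum via the truncated Neumann series $\left(\mI - M^r\right)\left(\mI - M\right)^{-1}$, and only then cancels terms using $\mA\mA^T\mA = \mA$ to isolate the error $\left(\mA\mA^T\mD(\vm)\right)^r\left(\mA\mA^T\lxo - \mA\vz_0\right)$. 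You instead observe that $\rxz$ is a fixed point of the affine update (via the projector identity $\mA\mA^T\rxz = \rxz$) and subtract it, which turns the iteration into a homogeneous linear recursion in the error immediately and sidesteps the Neumann-series bookkeeping entirely. The only cosmetic discrepancy is that you track the post-inpainting iterate, so your recursion matrix is $\mD(\vm)\mA\mA^T$ rather than the paper's $\mA\mA^T\mD(\vm)$ (the paper's indexing absorbs the subsequent projection into its definition of $\overleftarrow{\vx_0^r}$); you correctly flag that these are transposes of one another with the same spectral norm $\lambda_{\max}$, and your observation that $\left\|\vtheta^*\right\|/\sqrt{1-\beta} = \left\|\mA\mA^T\right\| = 1$ reconciles your bound $\lambda_{\max}^r\left\|\lxo - \rxz\right\|$ with the stated one. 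Both routes deliver the same linear rate; yours is shorter and makes the contraction structure more transparent, while the paper's explicit unrolling additionally exhibits the exact limit $\lim_{r\to\infty}\overleftarrow{\vx_0^r} = \mA\vz_0$ as a byproduct.
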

\vspace{-0.4in}
\begin{proof}
    The proof is included in Appendix~\ref{subsec:prf-inp-d-dim}.
\end{proof}
\vspace{-0.1in}

We draw several key insights from \textbf{Theorem~\ref{thm:inp-d-dim}}.

\textbf{Universal mask prinicple:} An important observation is that the rate does not depend upon the mask $\vm$ as long as it is a valid inpainting mask. Thus, \textbf{Algorithm~\ref{alg:repaint-inference-special}} \textit{recovers the original sample} irrespective of whether it has been trained on such masked images or not. This is important because it allows us to repurpose the objective of a diffusion based generative model to address image inpainting. 

\textbf{Linear rate of convergence: } A major implication of \textbf{Theorem~\ref{thm:inp-d-dim}} is that \textbf{Algorithm~\ref{alg:repaint-inference-special}} enjoys a \textit{linear rate of convergence}. For this reason, we only need a small increase in the number of resampling steps to get a significant improvement in terms of $\epsilon$-accuracy. 

\textbf{Information bottleneck:} One interesting controlling parameter for the rate of convergence is $\lambda_{\max}$. From \textbf{Theorem~\ref{thm:inp-d-dim}}, it is evident that a large value of $\lambda_{\max}$ requires more resampling steps. Since $\lambda_{\max}$ correlates with missing information, it is understandable that \textbf{Algorithm~\ref{alg:repaint-inference-special}} needs \textit{more iterations for a reasonable harmonization}.

\textbf{Low norm solution:} Among other controlling parameters, \textbf{Algorithm~\ref{alg:repaint-inference-special}} seems to favor low norm solutions from generative modeling. This indicates that having an \textit{inductive bias} in DDPM training \textbf{Algorithm~\ref{alg:ddpm-train}} to prefer a low norm solution may assist in diffusion based inpainting. 

\textbf{Distance from initialization: } Finally, the distance from the original sample, i.e., $\left \| \overleftarrow{x_1}- \rxz \right \|$ increases the iteration complexity only \textit{logarithmically}.

\subsubsection{Image Inpainting with Noisy Generator}
\label{subsec:image-inpainting-with-noise}
An immediate consequence of \textbf{Theorem~\ref{thm:inp-d-dim}} follows when \textbf{Algorithm~\ref{alg:ddpm-train}} returns an \textit{approximate} solution $\hat{\vtheta} \coloneqq \vtheta^* + \mathbf{\delta}\sqrt{1-\beta}$ in contrast to the \textit{exact} analytical solution $\vtheta^*$. This is a reasonable setting because the closed-form solution is not exactly computable when the data generating distribution $q\left(\rxz\right)$ is not known a priori, which is the case for most interesting practical applications. We present \textbf{Theorem~\ref{thm:inp-d-dim-noise}} under \textbf{Assumption~\ref{ass:noisy-gen-d-dim}} taking into account the approximate error in generative modeling. 
\begin{assumption}
    The perturbation $\mdelta$ of the approximate solution is such that $\left(\mA\mA^T+\mdelta\right)\mD(\vm) \prec \mI_d$.
    \label{ass:noisy-gen-d-dim}
\end{assumption}

\begin{theorem}
    \label{thm:inp-d-dim-noise}
    Suppose \textbf{Assumption~\ref{ass:noisy-gen-d-dim}} holds and $\hat{\lambda}_{\max} \coloneqq \left\| \left(\mA\mA^T+\delta\right)\mD(\vm)\right\|$. Then, running \textbf{Algorithm~\ref{alg:repaint-inference-special}} for $r$ resampling steps with a $\delta$-approximate model $\hat{\vtheta}$ yields:
    \begin{align*}    
    \left\|\overleftarrow{\vx_0^r} - \rxz \right\| \leq \hat{\lambda}_{\max}^r \biggl(  \frac{\left\| \vtheta^*\right\| \left\| \lxo - \rxz \right\|}{\sqrt{1-\beta}}
    + \frac{\left\|\hat{\vtheta} - \vtheta^* \right\| \left \| \overrightarrow{\vx_0} \right \|}{\left ( 1-\hat{\lambda}_{\max} \right ) \sqrt{1-\beta}} \biggr)
    + \frac{\left\|\hat{\vtheta} - \vtheta^* \right\| \left \| \overrightarrow{\vx_0} \right \|}{\left ( 1-\hat{\lambda}_{\max} \right ) \sqrt{1-\beta}}.
    \end{align*}
\end{theorem}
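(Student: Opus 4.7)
The plan is to mirror the argument of Theorem~\ref{thm:inp-d-dim}, tracking the additional error introduced by the approximate generator. Writing $\mM \coloneqq \hat{\vtheta}/\sqrt{1-\beta} = \mA\mA^T + \mdelta$ for the operator applied by one reverse-plus-alignment step of Algorithm~\ref{alg:repaint-inference-special}, the inpainting iterates satisfy the affine recursion
\begin{align*}
\overleftarrow{\vx_0^r} = \mD(\vm)\,\mM\,\overleftarrow{\vx_0^{r-1}} + \mD(\1 - \vm)\,\rxz, \qquad \overleftarrow{\vx_0^0} = \lxo.
\end{align*}
Unlike the exact case, $\rxz$ is no longer a fixed point of this map: Assumption~\ref{assm:ortho} still gives $\mA\mA^T\,\rxz = \rxz$, but $\mM\,\rxz - \rxz = \mdelta\,\rxz \neq 0$. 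This residual is the only extra obstacle compared to Theorem~\ref{thm:inp-d-dim}.

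The first step is to identify the genuine fixed point $\bm{v}$ of the recursion, characterized by $(\mI - \mD(\vm)\,\mM)\,\bm{v} = \mD(\1 - \vm)\,\rxz$. Assumption~\ref{ass:noisy-gen-d-dim}, which yields $\|\mD(\vm)\,\mM\| \leq \hat{\lambda}_{\max} < 1$, makes $\mI - \mD(\vm)\,\mM$ invertible via the Neumann series and bounds its inverse in operator norm by $1/(1-\hat{\lambda}_{\max})$. Writing $\mI = \mD(\vm) + \mD(\1-\vm)$ and using $\mM\,\rxz = \rxz + \mdelta\,\rxz$, a short manipulation collapses $(\mI - \mD(\vm)\,\mM)(\bm{v} - \rxz)$ to $\mD(\vm)\,\mdelta\,\rxz$, giving
\begin{align*}
\|\bm{v} - \rxz\| \leq \frac{\|\mdelta\|\,\|\rxz\|}{1-\hat{\lambda}_{\max}} = \frac{\|\hat{\vtheta} - \vtheta^*\|\,\|\rxz\|}{(1-\hat{\lambda}_{\max})\sqrt{1-\beta}},
\end{align*}
which is precisely the additive term in the theorem. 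Subtracting the fixed-point equation from the recursion reduces it to the homogeneous form $\overleftarrow{\vx_0^r} - \bm{v} = (\mD(\vm)\,\mM)^r\,(\lxo - \bm{v})$, so that $\|\overleftarrow{\vx_0^r} - \bm{v}\| \leq \hat{\lambda}_{\max}^r\,\|\lxo - \bm{v}\|$.

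Two triangle inequalities then assemble the final bound: first $\|\overleftarrow{\vx_0^r} - \rxz\| \leq \|\overleftarrow{\vx_0^r} - \bm{v}\| + \|\bm{v} - \rxz\|$, and then $\|\lxo - \bm{v}\| \leq \|\lxo - \rxz\| + \|\rxz - \bm{v}\|$ to replace the distance-from-fixed-point on the right-hand side by the more natural $\|\lxo - \rxz\|$. Substituting the previous bound on $\|\bm{v} - \rxz\|$ and rewriting $1 = \|\mA\mA^T\| = \|\vtheta^*\|/\sqrt{1-\beta}$ to match the general form of the displayed inequality yields the claim. The main obstacle I anticipate is the fixed-point identification and control of $\|(\mI - \mD(\vm)\,\mM)^{-1}\|$; this is exactly where Assumption~\ref{ass:noisy-gen-d-dim} is essential, since it simultaneously delivers invertibility, Neumann convergence of $\bm{v}$, and geometric contraction of the homogeneous part of the iteration. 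Once these ingredients are in hand, the rest of the argument is a direct extension of the exact-case proof of Theorem~\ref{thm:inp-d-dim}.
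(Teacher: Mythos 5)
Your proposal is correct and follows essentially the same route as the paper's proof: the paper unrolls the recursion and truncates the Neumann series, which produces exactly your fixed point $\bm{v} = \rxz + \left(\mI - (\mA\mA^T+\mdelta)\mD(\vm)\right)^{-1}\mdelta\,\rxz$ as the limit, together with the same geometric contraction at rate $\hat{\lambda}_{\max}$ and the same residual $\left\|\mdelta\right\|\left\|\rxz\right\|/(1-\hat{\lambda}_{\max})$. The only cosmetic difference is that you track the post-masking iterate, so your contraction operator is $\mD(\vm)\mM$ rather than the paper's $\mM\,\mD(\vm)$ (their norms agree whenever $\mdelta$ is symmetric, and the discrepancy is at the same level of rigor as the paper's own identification of $\hat{\lambda}_{\max}$ with the largest eigenvalue), which does not affect the bound.
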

\vspace{-0.2in}
\begin{proof}
    The proof is included in Appendix~\ref{subsec:prf-thm-inp-d-dim-noise}.
\end{proof}
\vspace{-0.1in}

\textbf{Theorem~\ref{thm:inp-d-dim-noise}} states that since $\hat{\lambda}_{\max}^{r} < 1$, we recover a $\zeta$-approximate solution in the limit, where $\zeta = \frac{\left\|\hat{\vtheta} - \vtheta^* \right\| \left \| \overrightarrow{\vx_0} \right \|}{\left ( 1-\hat{\lambda}_{\max}^r \right ) \sqrt{1-\beta}}$. The reconstruction error is proportional to the approximation error of DDPM in generative modeling. In the absence of noise, we have $\left\|\hat{\vtheta} - \vtheta^* \right\| \rightarrow 0$, which leads to recovery of the true underlying sample $\rxz$.

\begin{corollary}
    Suppose $q\left(\rxz\right)$ has a compact support with $\left\|\rxz\right\| \leq \kappa$. If $\delta=\gO\left(\epsilon  \left(1-\hat{\lambda}_{\max} \right)/\kappa\right)$, then 
    \begin{align*}
        \mathop{\sup}_{\rxz \sim q(\rxz)}\left\|\overleftarrow{\vx_0^r} - \rxz \right\| \leq \epsilon.
    \end{align*}
    \label{cor:thm-inp-d-dim-noise}
\end{corollary}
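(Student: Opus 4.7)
The plan is to deduce the corollary as a direct consequence of Theorem~\ref{thm:inp-d-dim-noise}, by plugging in the compact support bound on $\|\rxz\|$, substituting the scaling of $\|\hat{\vtheta}-\vtheta^*\|$ forced by the parameterization $\hat{\vtheta}=\vtheta^*+\mdelta\sqrt{1-\beta}$, and then choosing $\delta$ to control the asymptotic (residual) term and $r$ to control the transient (initialization) term.

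First, I would rewrite the bound in Theorem~\ref{thm:inp-d-dim-noise} after substitution. Since $\hat{\vtheta}-\vtheta^*=\mdelta\sqrt{1-\beta}$, we have $\|\hat{\vtheta}-\vtheta^*\|=\delta\sqrt{1-\beta}$, where $\delta\coloneqq\|\mdelta\|$. The $\sqrt{1-\beta}$ in numerator and denominator cancels in both occurrences of $\|\hat{\vtheta}-\vtheta^*\|/\sqrt{1-\beta}$, so each of those terms becomes $\delta\|\rxz\|/(1-\hat{\lambda}_{\max})$. Next, invoking compact support, $\|\overrightarrow{\vx_0}\|=\|\rxz\|\le\kappa$, giving the cleaner inequality
\[
\|\overleftarrow{\vx_0^r}-\rxz\|\;\le\;\hat{\lambda}_{\max}^{r}\!\left(\frac{\|\vtheta^*\|\,\|\lxo-\rxz\|}{\sqrt{1-\beta}}+\frac{\delta\kappa}{1-\hat{\lambda}_{\max}}\right)+\frac{\delta\kappa}{1-\hat{\lambda}_{\max}},
\]
uniformly in $\rxz$ on the support of $q$.

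Second, I would split the target $\epsilon$ budget in two. For the residual term, choose the implicit constant in $\delta=\mathcal{O}(\epsilon(1-\hat{\lambda}_{\max})/\kappa)$ so that $\delta\kappa/(1-\hat{\lambda}_{\max})\le\epsilon/2$; this simultaneously shrinks the $\delta\kappa/(1-\hat{\lambda}_{\max})$ contribution inside the parenthesis. For the transient term, use $\|\lxo-\rxz\|\le\|\lxo\|+\kappa$ (which is finite for the given initialization) to get a $\rxz$-independent upper bound, and exploit $\hat{\lambda}_{\max}<1$ (from Assumption~\ref{ass:noisy-gen-d-dim}) to pick $r$ large enough, namely
\[
r\;\ge\;\frac{\log\!\bigl(2\,[\|\vtheta^*\|(\|\lxo\|+\kappa)/\sqrt{1-\beta}+\epsilon/2]/\epsilon\bigr)}{\log(1/\hat{\lambda}_{\max})},
\]
so that $\hat{\lambda}_{\max}^{r}(\cdots)\le\epsilon/2$. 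Summing the two halves gives the claim, and taking the supremum over $\rxz$ is free because every bound we used was uniform in $\rxz$.

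The only genuine subtlety I expect is the handling of $\|\lxo-\rxz\|$ inside the transient term, since $\lxo$ is drawn from a standard Gaussian and is not bounded by $\kappa$. This is resolved either by regarding $\lxo$ as a fixed draw and absorbing $\|\lxo\|$ into the logarithmic dependence on $r$, or, if one wants a uniform statement over the randomness of $\lxo$, by replacing $\|\lxo\|$ with a high-probability bound of order $\sqrt{d}$ via standard Gaussian concentration; either way, only $\log(1/\epsilon)$ many extra resampling rounds are needed, so the choice of $\delta$ in the corollary statement is unaffected.
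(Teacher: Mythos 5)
Your proposal is correct and takes essentially the same route as the paper: substitute $\|\hat{\vtheta}-\vtheta^*\|=\delta\sqrt{1-\beta}$ and $\|\rxz\|\le\kappa$ into the bound of Theorem~\ref{thm:inp-d-dim-noise}, then choose $\delta=\gO\left(\epsilon(1-\hat{\lambda}_{\max})/\kappa\right)$ to control the residual term. If anything you are more careful than the paper, which simply discards the transient term on the grounds that it ``vanishes in the limit'' since $\hat{\lambda}_{\max}<1$, whereas you make the required choice of $r$ (and the handling of $\|\lxo\|$) explicit.
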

\vspace{-0.3in}
\begin{proof}
    The proof is included in Appendix~\ref{subsec:prf-cor-inp-d-dim}.
\end{proof}
\vspace{-0.1in}

In practice, most of the data generating distributions have a compact support. For these distributions, it is reasonable to have $\left\| \rxz \right\| \leq \kappa$. Then, we recover an $\epsilon$-accurate solution as long as the error of DDPM generator is of the order $\gO\left(\epsilon\right)$.

\subsubsection{Resampling vs Slowing Down Diffusion}
\label{subsec:main-slow-jump}
We now generalize to a {\em multi-state model}. In \wasyparagraph{\ref{subsec:image-inpainting}} and \wasyparagraph{\ref{subsec:image-inpainting-with-noise}}, we discussed that resampling plays a vital role in harmonizing the missing parts of an image. We ask whether a similar result is achievable by slowing down the diffusion process over $(T+1)$ states without resampling, instead of $2$ states with many resampling steps. To this end, we first derive a closed-form solution for generative modeling with $(T+1)$ diffusion states in \textbf{Theorem~\ref{thm:gen-d-dim-t-states}}. Then, we provide a simplified closed-form solution in \textbf{Corollary~\ref{cor:gen-d-dim-t-states}} assuming that the noise in the forward SDE and the reverse SDE are Independent and Identically Distributed (IID). Using this solution, we show in \textbf{Theorem~\ref{thm:inp-d-dim-t-states}} that inpainting without resampling at intermediate states yields inferior results compared to less states with many resampling steps. 

\begin{theorem}
\label{thm:gen-d-dim-t-states}
Suppose \textbf{Assumption~\ref{assm:ortho}} holds. For a deterministic variance schedule $\{\beta_t\}_{t\geq0}$, let $\alpha_t = 1 - \beta_t$, $\Bar{\alpha}_t = \prod_{s=0}^{t} \alpha_s$, 
    $
    \gamma 
    \coloneqq
    \mathop{\mathbb{E}}_{\substack{t}}
                \left(
                    \frac{1}{\left(1-\Bar{\alpha}_t \right)}
                \right) 
        \mathop{\mathbb{E}}_{\substack{t}}
         \left(
            \sqrt{\alpha}_t\left ( 1-\Bar{\alpha}_{t-1} \right )
         \right)$
         and 
         $\nu 
    \coloneqq 
        \mathop{\mathbb{E}}_{\substack{t}}
                \left(
                    \frac{1}{\left(1-\Bar{\alpha}_t \right)}
                \right) 
        \mathop{\mathbb{E}}_{\substack{t}}
         \left(
         \Bar{\alpha}_{t-1} \sqrt{\alpha_t} 
         \right)
         -
         \mathop{\mathbb{E}}_{\substack{t}}
                \left(
                    \frac{\Bar{\alpha}_t}{\left(1-\Bar{\alpha}_t \right)}
                \right)
        \mathop{\mathbb{E}}_{\substack{t}}
         \left(
            \sqrt{\alpha}_t
         \right)
    $,
         where $t\sim Uniform\{1,\cdots,T \}$. Let us denote $\vtheta^* =\arg_{\min} \gL\left(\vtheta\right)$, where $\gL\left(\vtheta \right)$ is defined as: 
$$ \mathop{\mathbb{E}}_{\substack{\rxz, \vepsilon, t}}\left [ \left \| \tilde{\mu}_t\left ( \overrightarrow{\vx_t}, \overrightarrow{\vx_0} \right ) - \mu_\theta\left ( \overrightarrow{\vx_t}, t \right ) \right \|^2
\right ].$$ For diffusion over $(T+1)$ states,  if $\mu_\theta\left ( \overleftarrow{\vx_t}, t \right ) = \vtheta \begin{bmatrix}
    \overleftarrow{\vx_t} \\ t
\end{bmatrix}$, where $\vtheta \in \R^{d\times (d+1)}$, then the closed-form solution of DDPM \textbf{Algorithm~\ref{alg:ddpm-train}} is given by $\vtheta^* = \left[\nu \mA\mA^T + \gamma \mI_d, \0\right]$.
\end{theorem}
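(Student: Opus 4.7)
The plan is to exploit that $\gL(\vtheta)$ is a quadratic form in $\vtheta$, so $\vtheta^\ast$ is characterized by the first-order (OLS normal) equations. I would partition $\vtheta=[\rmA,\vb]$ with $\rmA\in\R^{d\times d}$ and $\vb\in\R^d$, so that the predictor reads $\mu_\vtheta(\overrightarrow{\vx_t},t)=\rmA\,\overrightarrow{\vx_t}+\vb\,t$. The normal equations then split into a $d\times d$ block and a scalar block, coupled only through the cross-moments $\E[\overrightarrow{\vx_t}\,t]$, $\E[t\,\overrightarrow{\vx_t}^T]$, and $\E[\tilde\mu_t\,t]$.

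The first step is to decouple $\vb$ from $\rmA$ by a symmetry argument. Since $\rxz=\mA\vz_0$ with $\vz_0\sim\gN(\vzero,\mI_k)$ and $\overrightarrow{\vepsilon}\sim\gN(\vzero,\mI_d)$ independent and centered, the conditional mean $\E[\overrightarrow{\vx_t}\mid t]=\sqrt{\bar\alpha_t}\,\mA\,\E[\vz_0]=\vzero$, which kills all three cross-moments above. The scalar block then collapses to $\vb\,\E[t^2]=\vzero$, forcing $\vb=\vzero$, and the vector block reduces to the linear system
\[
\rmA\,\E[\overrightarrow{\vx_t}\overrightarrow{\vx_t}^T]=\E[\tilde\mu_t(\overrightarrow{\vx_t},\rxz)\,\overrightarrow{\vx_t}^T].
\]

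Next I would compute the two moment matrices explicitly. Conditional on $t$, $\overrightarrow{\vx_t}=\sqrt{\bar\alpha_t}\,\mA\vz_0+\sqrt{1-\bar\alpha_t}\,\overrightarrow{\vepsilon}$ gives $\E[\overrightarrow{\vx_t}\overrightarrow{\vx_t}^T\mid t]=\bar\alpha_t\,\mA\mA^T+(1-\bar\alpha_t)\mI_d$. Substituting the definition $\tilde\mu_t(\overrightarrow{\vx_t},\rxz)=\tfrac{\sqrt{\bar\alpha_{t-1}}\beta_t}{1-\bar\alpha_t}\rxz+\tfrac{\sqrt{\alpha_t}(1-\bar\alpha_{t-1})}{1-\bar\alpha_t}\overrightarrow{\vx_t}$, expanding, and using the identities $\bar\alpha_t=\alpha_t\bar\alpha_{t-1}$ and $\beta_t=1-\alpha_t$, the four terms combine (the $\beta_t$ in the numerator cancels a factor of $1-\bar\alpha_t$ in the denominator) into
\[
\E[\tilde\mu_t\,\overrightarrow{\vx_t}^T\mid t]=\bar\alpha_{t-1}\sqrt{\alpha_t}\,\mA\mA^T+\sqrt{\alpha_t}(1-\bar\alpha_{t-1})\mI_d.
\]
Taking expectations over $t$ then gives the unconditional moments.

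Finally, to invert the Gram matrix I would exploit that \textbf{Assumption~\ref{assm:ortho}} makes $\mA\mA^T$ an orthogonal projection of rank $k$ with $(\mA\mA^T)^2=\mA\mA^T$, so that for any scalars $a,b$ with $b\neq 0$ and $a+b\neq 0$,
\[
(a\,\mA\mA^T+b\,\mI_d)^{-1}=\tfrac{1}{b}\mI_d-\tfrac{a}{b(a+b)}\mA\mA^T.
\]
Applying this identity to $\E[\overrightarrow{\vx_t}\overrightarrow{\vx_t}^T]$, multiplying out $\rmA^\ast=\E[\tilde\mu_t\,\overrightarrow{\vx_t}^T]\bigl(\E[\overrightarrow{\vx_t}\overrightarrow{\vx_t}^T]\bigr)^{-1}$, and using $(\mA\mA^T)^2=\mA\mA^T$ together with the partition-of-unity relation $\bar\alpha_{t-1}\sqrt{\alpha_t}+\sqrt{\alpha_t}(1-\bar\alpha_{t-1})=\sqrt{\alpha_t}$ to collapse the $\mA\mA^T$-coefficient yields a result of the form $\gamma\mI_d+\nu\mA\mA^T$, which, together with $\vb=\vzero$, gives $\vtheta^\ast=[\nu\mA\mA^T+\gamma\mI_d,\,\vzero]$ after regrouping the $t$-expectations into the stated $\gamma,\nu$. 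The main obstacle is the algebraic bookkeeping in this last regrouping step, where the various $\E_t[\cdot]$ factors produced by distributing the inverse over the cross-moment must be aligned with the particular $\gamma$ and $\nu$ of the statement; the conceptual ingredients (Gaussian centering, OLS, and the projection inverse) are all essentially immediate from the setup.
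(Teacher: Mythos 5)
Your setup is the same as the paper's: row-wise normal equations for a linear least-squares problem, the observation that $\E[\overrightarrow{\vx_t}\mid t]=\vzero$ kills the cross-moments and forces the $t$-column of $\vtheta$ to vanish, the conditional second moment $\Bar{\alpha}_t\mA\mA^T+(1-\Bar{\alpha}_t)\mI_d$, the simplification of $\E[\tilde{\mu}_t\,\overrightarrow{\vx_t}^T\mid t]$ to $\Bar{\alpha}_{t-1}\sqrt{\alpha_t}\,\mA\mA^T+\sqrt{\alpha_t}(1-\Bar{\alpha}_{t-1})\mI_d$, and the rank-$k$ projection inverse (the paper uses Woodbury, which amounts to the same identity). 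All of that is fine. The gap is in the step you dismiss as ``algebraic bookkeeping'': it is not bookkeeping, and your (correct) route does not land on the stated $\gamma$ and $\nu$. If you invert the \emph{unconditional} Gram matrix $\E_t[\Bar{\alpha}_t]\mA\mA^T+\E_t[1-\Bar{\alpha}_t]\mI_d$ as you propose, your identity gives
\begin{align*}
\bigl(\E_t[\Bar{\alpha}_t]\,\mA\mA^T+\E_t[1-\Bar{\alpha}_t]\,\mI_d\bigr)^{-1}
=\frac{1}{\E_t[1-\Bar{\alpha}_t]}\bigl(\mI_d-\E_t[\Bar{\alpha}_t]\,\mA\mA^T\bigr),
\end{align*}
and multiplying by the unconditional cross-moment yields coefficients of the form $\gamma'=\E_t[\sqrt{\alpha_t}(1-\Bar{\alpha}_{t-1})]/\E_t[1-\Bar{\alpha}_t]$, i.e., \emph{ratios} of expectations. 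The theorem's $\gamma$ and $\nu$ instead contain the factor $\E_t\bigl[\tfrac{1}{1-\Bar{\alpha}_t}\bigr]$ and products of separate $t$-expectations; by Jensen these coincide with your ratios only when $\Bar{\alpha}_t$ is constant in $t$ (e.g., the two-state case $T=1$, where both collapse to Theorem~\ref{thm:gen-d-dim}).

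The reason is that the paper's proof does not solve the pooled normal equations the way you do: it inverts the per-$t$ Gram matrix $\Bar{\alpha}_t\mA\mA^T+(1-\Bar{\alpha}_t)\mI_d$ \emph{first}, then takes $\E_t$ of the inverse, and then multiplies that by $\E_t$ of the cross-moment --- i.e., it interchanges $\E_t$ with matrix inversion and treats $\E_t[XY]$ as $\E_t[X]\E_t[Y]$. Those interchanges are exactly what produce the $\E_t\bigl[\tfrac{1}{1-\Bar{\alpha}_t}\bigr]\E_t[\cdot]$ structure in the stated answer. So to ``prove'' the theorem as written you would have to adopt those same unjustified interchanges; carried out rigorously, your approach proves a different (and arguably the correct) closed form. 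You should either flag this discrepancy explicitly or state which of the two quantities you are actually deriving; as written, the final regrouping you defer cannot be completed to reach the theorem's $[\nu\mA\mA^T+\gamma\mI_d,\vzero]$.
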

\begin{proof}
    The proof is included in Appendix~\ref{subsec:prf-thm-gen-d-dim-t-state}.
\end{proof}

In general, $\{\beta_t\}_{t\geq 0 }$ captures a list of variances ranging from $\beta_{\min}$ to $\beta_{\max}$, such as a cosine or a linear schedule~\cite{ho2020denoising}. To capture the general framework, we provide $\nu$ and $\gamma$ as explicit functions of $\beta_t$. One may wish to compute these values exactly depending on the choice of variance schedule. For instance, in case of diffusion over 2 states with $\beta_0=0$ and $\beta_1 = \beta$, we have $\nu = 
                \left(
                    \frac{1}{\left(1-\Bar{\alpha}_1 \right)}
                \right) 
         \left(
         \Bar{\alpha}_{0} \sqrt{\alpha_1} 
         \right)
         -
                \left(
                    \frac{\Bar{\alpha}_1}{\left(1-\Bar{\alpha}_1 \right)}
                \right)
         \left(
            \sqrt{\alpha}_1
         \right) = \sqrt{1-\beta}$ 
         and
         $\gamma =
                \left(
                    \frac{1}{\left(1-\Bar{\alpha}_1 \right)}
                \right) 
         \left(
            \sqrt{\alpha}_t\left ( 1-\Bar{\alpha}_{0 } \right )
         \right) = 0$. This matches with the closed-form solution derived in \textbf{Theorem~\ref{thm:gen-d-dim}}.
         
         For the sake of analysis, we present a simplified closed-form solution in \textbf{Corollary~\ref{cor:gen-d-dim-t-states}} under \textbf{Assumption~\ref{assm:indepedence}}.

\begin{assumption}
    Let us denote by $\overrightarrow{\vepsilon}$ the noise used in the dispersion of the forward SDE, i.e., $\overrightarrow{\vx_{t-1}} = \sqrt{\Bar{\alpha}_{t-1}} \rxz + \sqrt{1-\Bar{\alpha}_{t-1}} \overrightarrow{\vepsilon}$, and $\overleftarrow{\vepsilon}$ in the reverse SDE, i.e., $\overleftarrow{\vx_{t-1}} = \mu_{\vtheta} \left(\overleftarrow{\vx_t}, t\right) + \sqrt{\beta_t} \overleftarrow{\vepsilon}$, where $\overleftarrow{\vx_t} = \sqrt{\Bar{\alpha}_t} \rxz + \sqrt{1-\Bar{\alpha}_t} \overleftarrow{\vepsilon}$. Let $\overrightarrow{\vepsilon}$ and $\overleftarrow{\vepsilon}$ be IID Gaussian random variables drawn from $\gN\left(\0,\mI_d\right)$.
    \label{assm:indepedence}
\end{assumption}

\begin{corollary}
    \label{cor:gen-d-dim-t-states}
    Suppose \textbf{Assumption~\ref{assm:indepedence}} holds.  For $\Bar{\nu} \coloneqq
        \mathop{\mathbb{E}}_{\substack{t}}
                \left(
                    \frac{1}{\left(1-\Bar{\alpha}_t \right)}
                \right) 
        \mathop{\mathbb{E}}_{\substack{t}}
         \left(
         \Bar{\alpha}_{t-1} \sqrt{\alpha_t} 
         \right)
         -
                \mathop{\mathbb{E}}_{\substack{t}}
                \left(
                    \frac{\Bar{\alpha}_t}{\left(1-\Bar{\alpha}_t \right)}
                \right)
                \mathop{\mathbb{E}}_{\substack{t}}
             \left(
             \Bar{\alpha}_{t-1} \sqrt{\alpha_t} 
             \right)
         $ in the same setting as \textbf{Theorem~\ref{thm:gen-d-dim-t-states}}, the closed-form solution of DDPM \textbf{Algorithm~\ref{alg:ddpm-train}} becomes $\vtheta^* = \left[ \Bar{\nu} \mA\mA^T, \0 \right]$.
\end{corollary}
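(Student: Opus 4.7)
The plan is to revisit the closed-form OLS derivation underlying Theorem~\ref{thm:gen-d-dim-t-states} and retrace it with the extra structure imposed by Assumption~\ref{assm:indepedence}. Only the first block of $\vtheta^{*}$ is nontrivial and solves a multivariate linear regression of the form $\vtheta_1^{*} = \E[\tilde{\mu}_t\,\overleftarrow{\vx_t}^T]\,\E[\overleftarrow{\vx_t}\,\overleftarrow{\vx_t}^T]^{-1}$, so the task reduces to re-evaluating the cross second moment when the target noise and input noise are IID rather than shared, and then verifying that the resulting scalar coefficient collapses to $\bnu$.

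First I would substitute $\overrightarrow{\vx_t} = \sqrt{\Bar{\alpha}_t}\,\rxz + \sqrt{1-\Bar{\alpha}_t}\,\reps$ into the posterior mean and use $\Bar{\alpha}_t = \alpha_t \Bar{\alpha}_{t-1}$ together with $\beta_t = 1 - \alpha_t$ to telescope the coefficient of $\rxz$ down to $\sqrt{\Bar{\alpha}_{t-1}}$, giving
\[
\tilde{\mu}_t \;=\; \sqrt{\Bar{\alpha}_{t-1}}\,\rxz \;+\; \frac{\sqrt{\alpha_t}\,(1-\Bar{\alpha}_{t-1})}{\sqrt{1-\Bar{\alpha}_t}}\,\reps.
\]
On the model-input side, Assumption~\ref{assm:indepedence} writes $\overleftarrow{\vx_t} = \sqrt{\Bar{\alpha}_t}\,\rxz + \sqrt{1-\Bar{\alpha}_t}\,\leps$ with $\leps$ independent of $\reps$. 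Expanding $\E[\tilde{\mu}_t\,\overleftarrow{\vx_t}^T \mid t]$ produces four terms; the three off-diagonal pieces involving $\E[\rxz\,\leps^T]$, $\E[\reps\,\rxz^T]$, and most importantly $\E[\reps\,\leps^T]$ all vanish by independence and zero mean. Only $\sqrt{\Bar{\alpha}_{t-1}}\sqrt{\Bar{\alpha}_t}\,\mA\mA^T = \Bar{\alpha}_{t-1}\sqrt{\alpha_t}\,\mA\mA^T$ survives.

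Next I would observe that the autocovariance $\E[\overleftarrow{\vx_t}\,\overleftarrow{\vx_t}^T] = \mA\mA^T + (1-\E_t[\Bar{\alpha}_t])\,(\mI_d - \mA\mA^T)$ acts as the identity on the column space of $\mA$ and scales its orthogonal complement by $1-\E_t[\Bar{\alpha}_t]$; since the surviving cross moment lives entirely inside that column space, right multiplication by the inverse leaves it unchanged, delivering $\vtheta_1^{*} = \E_t[\Bar{\alpha}_{t-1}\sqrt{\alpha_t}]\,\mA\mA^T$. The last column corresponding to the time coordinate is zero because $t$ is drawn independently of $\rxz, \reps, \leps$ and $\E[\tilde{\mu}_t \mid t] = \mathbf{0}$, so every cross moment with the scalar $t$ vanishes.

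Finally, the one-line identity
\[
\E_t\!\left[\tfrac{1}{1-\Bar{\alpha}_t}\right] - \E_t\!\left[\tfrac{\Bar{\alpha}_t}{1-\Bar{\alpha}_t}\right] \;=\; \E_t\!\left[\tfrac{1-\Bar{\alpha}_t}{1-\Bar{\alpha}_t}\right] \;=\; 1
\]
collapses the expression for $\bnu$ stated in the corollary down to $\E_t[\Bar{\alpha}_{t-1}\sqrt{\alpha_t}]$, matching the coefficient obtained above, so that $\vtheta^{*} = [\,\bnu\,\mA\mA^T,\ \mathbf{0}\,]$. The main obstacle, and also the main point of the proof, is the bookkeeping: in Theorem~\ref{thm:gen-d-dim-t-states} the $\gamma\,\mI_d$ term and the extra $\E_t[\sqrt{\alpha_t}]$ piece of $\nu$ arise precisely from the shared-noise contribution $\E[\reps\,\leps^T] = \mI_d$, and one must verify that Assumption~\ref{assm:indepedence} eliminates exactly these contributions while leaving the $\mA\mA^T$ block (and the zero $t$-column) intact, with no new spurious terms introduced elsewhere in the OLS solution.
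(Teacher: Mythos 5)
Your proof is correct and rests on the same key observation as the paper's: under Assumption~\ref{assm:indepedence} the cross term $\E[\reps\,\leps^T]$ vanishes, so of the four pieces of the cross moment only $\Bar{\alpha}_{t-1}\sqrt{\alpha_t}\,\mA\mA^T$ survives, which is exactly what deletes the $\gamma\,\mI_d$ term and the $\E_t[\sqrt{\alpha_t}]$ contribution to $\nu$ from Theorem~\ref{thm:gen-d-dim-t-states}. Where you genuinely differ is in the handling of the Gram matrix: you average over $t$ first, obtaining $\mA\mA^T + (1-\E_t[\Bar{\alpha}_t])(\mI_d - \mA\mA^T)$, and then invert, which is the correct normal-equations computation; the paper instead pushes $\E_t$ inside the inverse (applying Woodbury pointwise in $t$) and then further splits $\E_t[f(t)g(t)]$ into $\E_t[f(t)]\,\E_t[g(t)]$, neither of which is an identity in general. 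The two routes land on the same matrix only because the surviving cross moment lies entirely in the column space of $\mA$, on which both versions of the inverse act with coefficient $1$; your one-line identity $\E_t\!\left[\tfrac{1}{1-\Bar{\alpha}_t}\right] - \E_t\!\left[\tfrac{\Bar{\alpha}_t}{1-\Bar{\alpha}_t}\right] = 1$ is precisely the reconciliation, and it additionally shows that the paper's stated $\bnu$ collapses to $\E_t[\Bar{\alpha}_{t-1}\sqrt{\alpha_t}]$. Your treatment of the time coordinate (zero column because $\E[\tilde{\mu}_t \mid t] = \0$ makes every cross moment with $t$ vanish, so the Gram matrix is block diagonal) is also sound. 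In short, your argument buys a cleaner and more defensible derivation of the same closed form; nothing is missing.
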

\begin{proof}
    The proof is included in Appendix~\ref{subsec:prf-cor-gen-d-dim-t-state}.
\end{proof}
For simplicity, let us use this closed-form solution for inpainting at intermediate states. We prove in \textbf{Theorem~\ref{thm:gen-d-dim-t-states-sample}} that for diffusion over $(T+1)$ states, the generative model using DDPM learns to sample from the data manifold. 


\begin{theorem}
    \label{thm:gen-d-dim-t-states-sample}
    For the coefficients of drift $\omega_t = \omega = \frac{1}{\bnu\sqrt{2}^{1/T}}$ and dispersion $\xi_t = \sqrt{\frac{2^{(t-1)/T}}{2\beta_t(T-1)}}$, if we choose the Gaussian transition kernel using $\vtheta^* = \left[ \Bar{\nu} \mA\mA^T, \0 \right]$ in the reverse Markov process (\ref{eq:reverse}), then 
    \begin{eqnarray*}
        TV\left(p_{\vtheta^*}\left( \lxz\right), q\left(\rxz \right) \right) = 0.
    \end{eqnarray*}
\end{theorem}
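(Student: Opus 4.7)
My plan is to analyze the reverse Markov process induced by $\vtheta^* = \left[\bnu\mA\mA^T,\vzero\right]$ from \textbf{Corollary~\ref{cor:gen-d-dim-t-states}} and show that, under the prescribed alignment coefficients $\omega_t,\xi_t$, the final iterate $\lxz$ is a centered Gaussian on $\R^d$ whose covariance is exactly $\mA\mA^T$. Since $\rxz = \mA\vz_0$ with $\vz_0 \sim \gN(\vzero,\mI_k)$ is precisely the (degenerate) Gaussian $\gN(\vzero,\mA\mA^T)$ supported on $\mathrm{col}(\mA)$, matching the first two moments identifies the two laws as the same distribution and delivers $TV(p_{\vtheta^*}(\lxz), q(\rxz)) = 0$ at once.

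First, I would write one step of the aligned reverse process in a clean form. Substituting $\mu_{\vtheta^*}(\overleftarrow{\vx_t}, t) = \bnu\mA\mA^T\overleftarrow{\vx_t}$ together with $\omega_t = 1/(\bnu\sqrt{2}^{1/T})$ and $\xi_t\sqrt{\beta_t} = \sqrt{2^{(t-1)/T}/(2(T-1))}$ reduces the aligned transition to
\begin{align*}
    \overleftarrow{\vx_{t-1}} = c\,\mathbf{P}\,\overleftarrow{\vx_t} + d_t\,\overleftarrow{\vepsilon},
\end{align*}
where $\mathbf{P} := \mA\mA^T$, $c := 2^{-1/(2T)}$, $d_t := \sqrt{2^{(t-1)/T}/(2(T-1))}$, and the $\overleftarrow{\vepsilon}\sim\gN(\vzero,\mI_d)$ are IID across steps. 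Starting from $\overleftarrow{\vx_T}\sim\gN(\vzero,\mI_d)$, a short induction by linearity shows each $\overleftarrow{\vx_t}$ is a centered Gaussian, so it is enough to track $\Sigma_t := \Cov(\overleftarrow{\vx_t})$.

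Next, I would exploit $\mathbf{P}^2 = \mathbf{P}$, which follows from $\mA^T\mA = \mI_k$ in \textbf{Assumption~\ref{assm:ortho}}. The covariance recursion $\Sigma_{t-1} = c^2\mathbf{P}\Sigma_t\mathbf{P} + d_t^2\mI_d$ collapses to a scalar recursion under the substitution $Q_t := \mathbf{P}\Sigma_t\mathbf{P}$: indeed $Q_{t-1} = c^2 Q_t + d_t^2\mathbf{P}$, so that $Q_t = q_t\mathbf{P}$ for a scalar sequence with $q_T = 1$ and $q_{t-1} = q_t/a + a^{t-1}/(2(T-1))$, where $a := 2^{1/T}$. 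A short induction establishes $q_{T-k} = a^{-k} + k\,a^{T-k}/(2(T-1))$ for $k \geq 1$; setting $k = T-1$ and using $a^T = 2$ yields $q_1 = a$.

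Finally, the terminal transition at $t = 1$ carries no dispersion (consistent with \textbf{Algorithm~\ref{alg:repaint-inference-general}}, which sets $\overleftarrow{\vepsilon} = \vzero$ whenever $t = 1$, i.e., the standard DDPM convention of reporting the posterior mean at the last step), so $\lxz = c\,\mathbf{P}\,\overleftarrow{\vx_1}$ and $\Cov(\lxz) = c^2\mathbf{P}\Sigma_1\mathbf{P} = c^2 q_1 \mathbf{P} = (1/a)\cdot a\cdot \mathbf{P} = \mA\mA^T$. This matches $\Cov(\rxz) = \mA\mA^T$ exactly, and both laws are centered Gaussians on $\mathrm{col}(\mA)$, hence identical, giving $TV = 0$. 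The main obstacle is the scalar covariance recursion and verifying that the (otherwise cryptic) alignment constants $\omega_t,\xi_t$ conspire to force the cancellation $c^2 q_1 = 1$; everything else is routine once that identity is in hand.
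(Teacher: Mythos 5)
Your proposal is correct and follows essentially the same route as the paper's proof: both exploit the idempotency of $\mA\mA^T$ together with the specific choice of $\omega_t,\xi_t$ to show that $\lxz$ is a centered Gaussian supported on the column space of $\mA$ with covariance exactly $\mA\mA^T$, hence equal in law to $q(\rxz)=\gN\left(\0,\mA\mA^T\right)$ and at total variation distance zero. The only difference is bookkeeping: the paper unrolls the reverse chain and observes that each of the $T-1$ dispersion terms contributes covariance $\mA\mA^T/(2(T-1))$, summing to $\mA\mA^T/2$, whereas you solve the equivalent scalar covariance recursion in closed form --- your $q_{T-k}$ is precisely the corresponding partial sum.
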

\begin{proof}
    The proof is included in Appendix~\ref{subsec:prf-thm-gen-d-dim-t-states-sample}.    
\end{proof}

Since the total variation distance is zero, it ensures that DDPM learns the target distribution $q\left(\rxz \right)$. Now, let us see whether inpainting known portions of $\rxz$ at intermediate states would recover the true sample. In this regard, an important distinction from RePaint$^+$ is that there is no resampling at intermediate states.

\begin{theorem}
    \label{thm:inp-d-dim-t-states}
    For the coefficients of drift $\omega_t = \omega = \frac{1}{\bnu\sqrt{2}^{1/T}}$ and dispersion $\xi_t = \sqrt{\frac{2^{(t-1)/T}}{2\beta_t(T-1)}}$, following the reverse Markov process (\ref{eq:reverse}) with inpainting at intermediate states, generates a sample satisfying:
    \begin{align*}
        \nonumber
        \lxz 
        & = \frac{1}{\sqrt{2}} \left(\mA\mA^T \mD(\vm) \right)^{T-1} \mA\mA^T\overleftarrow{\vx_T}
        \\
        \nonumber
        & + 
        \left(\sum_{t=1}^{T-1} \left( \frac{1}{\sqrt{2}}\right)^{t/T} \sqrt{\Bar{\alpha}_t}  \left(\mA\mA^T \mD(\vm) \right)^{t-1} \right)
        \mA\mA^T \mD(\1-\vm) \rxz
        + \overrightarrow{\Bar{\Sigma}} \overrightarrow{\vepsilon}
        + \overleftarrow{\Bar{\Sigma}} \overleftarrow{\vepsilon},
    \end{align*}
    where $\overrightarrow{\Bar{\Sigma}}$ and $\overleftarrow{\Bar{\Sigma}}$ denote the covariance of collective dispersion in the forward and the reverse processes, respectively. 
\end{theorem}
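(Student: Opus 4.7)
The plan is to unroll the reverse Markov chain, with inpainting applied at every intermediate state, one step at a time, and then match the resulting expression to the claimed closed form. The starting point is \textbf{Corollary~\ref{cor:gen-d-dim-t-states}}, which supplies the analytic drift $\mu_{\vtheta^*}(\overleftarrow{\vx_t},t) = \bnu\, \mA\mA^T \overleftarrow{\vx_t}$. Combined with the alignment coefficient $\omega_t = 1/(\bnu \sqrt{2}^{1/T})$, the aligned drift collapses to $c\,\mA\mA^T \overleftarrow{\vx_t}$ with $c \coloneqq 1/\sqrt{2}^{1/T}$, and the aligned dispersion is $\xi_t\sqrt{\beta_t}\,\overleftarrow{\vepsilon}_t$. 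Writing out the aligned reverse step followed by the masking rule $\overleftarrow{\vx_{t-1}} = \vm \odot \overleftarrow{\vx_{t-1}} + (\vone-\vm)\odot \overrightarrow{\vx_{t-1}}$ from \textbf{Algorithm~\ref{alg:repaint-inference-general}}, one obtains, for every $2 \le t \le T$, the affine recursion
\begin{align*}
\overleftarrow{\vx_{t-1}} \;=\; \mD(\vm)\bigl[c\,\mA\mA^T \overleftarrow{\vx_t} + \xi_t\sqrt{\beta_t}\,\overleftarrow{\vepsilon}_t\bigr] + \mD(\vone-\vm)\bigl[\sqrt{\Bar{\alpha}_{t-1}}\,\rxz + \sqrt{1-\Bar{\alpha}_{t-1}}\,\overrightarrow{\vepsilon}_{t-1}\bigr].
\end{align*}

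Next, I would introduce the shorthand $P \coloneqq c\,\mD(\vm)\mA\mA^T$, so the recursion reads $\overleftarrow{\vx_{t-1}} = P\overleftarrow{\vx_t} + \mD(\vone-\vm)\sqrt{\Bar{\alpha}_{t-1}}\,\rxz + \mathrm{noise}_t$. A short induction downward from $t=T$ to $t=2$ produces
\begin{align*}
\overleftarrow{\vx_1} \;=\; P^{T-1}\overleftarrow{\vx_T} \;+\; \sum_{t=1}^{T-1} P^{\,t-1}\,\mD(\vone-\vm)\sqrt{\Bar{\alpha}_t}\,\rxz \;+\; \text{(accumulated Gaussian noise)}.
\end{align*}
The final iteration at $t=1$ in \textbf{Algorithm~\ref{alg:repaint-inference-general}} applies only the aligned drift with no dispersion and no inpainting, so $\overleftarrow{\vx_0} = c\,\mA\mA^T \overleftarrow{\vx_1}$. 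Left-multiplying the unrolled expression by $c\,\mA\mA^T$ and using the elementary identity $\mA\mA^T (\mD(\vm)\mA\mA^T)^{k} = (\mA\mA^T \mD(\vm))^{k}\mA\mA^T$ (valid for every $k\ge 0$) converts $c\,\mA\mA^T P^{k}$ into $c^{k+1}(\mA\mA^T\mD(\vm))^{k}\mA\mA^T$. Since $c^T = 1/\sqrt{2}$ and $c^t = (1/\sqrt{2})^{t/T}$, the deterministic part of $\overleftarrow{\vx_0}$ already matches the first two terms claimed in the theorem.

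It remains to collect the noise. All terms involving $\overrightarrow{\vepsilon}_{t-1}$ arise from the forward-kernel contributions through $\mD(\vone-\vm)$, each pre-multiplied by the appropriate power $c\,\mA\mA^T P^{\,t-1}$, whereas the terms involving $\overleftarrow{\vepsilon}_t$ come from the aligned dispersion through $\mD(\vm)$, each pre-multiplied by $c\,\mA\mA^T P^{\,t-1}\xi_t\sqrt{\beta_t}$. Since the $\overrightarrow{\vepsilon}_{t}$'s and the $\overleftarrow{\vepsilon}_{t}$'s are mutually independent Gaussians by \textbf{Assumption~\ref{assm:indepedence}}, each family reassembles into a single centered Gaussian whose covariance can be packed into $\overrightarrow{\Bar{\Sigma}}$ and $\overleftarrow{\Bar{\Sigma}}$ respectively, yielding the form $\overrightarrow{\Bar{\Sigma}}\,\overrightarrow{\vepsilon} + \overleftarrow{\Bar{\Sigma}}\,\overleftarrow{\vepsilon}$ stated in the theorem.

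The routine parts are the one-step expansion and the linearity of the recursion. The main obstacle is the algebraic bookkeeping: one must be careful that $\mA\mA^T$ and $\mD(\vm)$ do not commute, so the order of factors in the iterated products matters, and the identity $\mA\mA^T(\mD(\vm)\mA\mA^T)^{k} = (\mA\mA^T\mD(\vm))^{k}\mA\mA^T$ is what converts the natural iteration order into the canonical form $(\mA\mA^T\mD(\vm))^{t-1}\mA\mA^T$ that appears in the statement. Getting this swap and the associated powers of $c$ consistent across all $T$ indices is where a proof can most easily slip.
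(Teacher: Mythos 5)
Your proposal is correct and follows essentially the same route as the paper's proof: unroll the masked reverse recursion from $\overleftarrow{\vx_T}$ down to $\overleftarrow{\vx_1}$, apply the final bare drift step, use $\omega\bnu = 2^{-1/(2T)}$ to get the powers of $1/\sqrt{2}$, and collect the independent Gaussian dispersion terms. Your introduction of $P = c\,\mD(\vm)\mA\mA^T$ together with the explicit identity $\mA\mA^T(\mD(\vm)\mA\mA^T)^k = (\mA\mA^T\mD(\vm))^k\mA\mA^T$ is just a tidier packaging of the same bookkeeping the paper carries out term by term.
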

\begin{proof}
    The proof is included in Appendix~\ref{subsec:thm-inp-d-dim-t-states}.
\end{proof}

We note that $\lxz$ is not necessarily an image residing on the manifold of the data generative distribution $q\left(\rxz \right)$. This is partly due to the residual bias arising from the noise injected in the forward and the reverse processes. Another cause of concern is the lack of harmonization between generated portions and rest of the image. Notably, the bias is persistent even if we consider diffusion over infinitely many states, justifying the importance of resampling over slowing down the diffusion process. This phenomenon is also observed in practice (\wasyparagraph{5},~\cite{lugmayr2022repaint}). 

On the contrary, diffusion over two states with many resampling steps recovers the original sample as discussed in \wasyparagraph{\ref{subsec:image-inpainting}}. In \textbf{Algorithm~\ref{alg:repaint-inference-general}}, we generalize this notion to $(T+1)$ states, where the generated image is harmonized at each intermediate state. Thus, we gradually wipe out the contribution of noise before it gets accumulated through the reverse transitions, leading to a smooth recovery.

\section{Conclusion}
\label{sec:conc}
We studied mathematical underpinnings of sample recovery in image inpainting using denoising diffusion probabilistic models. We presented a universal mask principle to prove that diffusion based inpainting  easily adapts to unseen masks without retraining. Our theoretical analysis of a popular inpainting algorithm in a linear setting
revealed a previously not understood bias that hampered perfect recovery. With realignment of drift and dispersion in the reverse process, we proposed an algorithm called Repaint$^+$ that eliminated this bias, leading to perfect recovery in the limit. Further, We proved that RePaint$^+$ converged linearly to the $\epsilon$-neighborhood of the true underlying sample. 
In the future, it would be interesting to study the convergence properties in a more general non-linear manifold setting.

\newpage
\printbibliography

\newpage
\appendix

\section{Technical Proofs}
\label{sec:proofs}



\subsection{Proof of Theorem~\ref{thm:gen-d-dim}}
\label{subsec:prf-gen-d-dim}
   For diffusion over two states, the posterior mean $\tilde{\mu}_t\left ( \overrightarrow{\vx_t},\overrightarrow{\vx_0} \right ) \coloneqq \frac{\sqrt{\Bar{\alpha}_{t-1} }\beta_t}{1-\Bar{a}_t}\overrightarrow{\vx_0} + \frac{\sqrt{\alpha}_t\left ( 1-\Bar{\alpha}_{t-1} \right )}{1-\Bar{\alpha}_t} \overrightarrow{\vx_t }$ simplifies to $\tilde{\mu}_1\left ( \overrightarrow{\vx_1},\overrightarrow{\vx_0} \right ) = \rxz$. Thus, the training loss becomes: 
\begin{eqnarray}
\nonumber
     \min_{\vtheta} \E_{\rxz, \overrightarrow{\vepsilon}}\left[ \left \| \tilde{\mu}_1( \overrightarrow{\vx_1}\left(\rxz, \overrightarrow{\vepsilon}), \overrightarrow{\vx_0} \right ) - \mu_\theta\left ( \overrightarrow{\vx_1}\left(\rxz, \overrightarrow{\vepsilon}\right) \right ) \right \|^2 \right] 
     =
     \\
     \nonumber
     \E_{\rxz, \overrightarrow{\vepsilon}}\left[\left\|\overrightarrow{\vx_0} - \mu_\vtheta\left(\overrightarrow{\vx_1}\right) \right\|^2\right] 
     = \E_{\rxz, \overrightarrow{\vepsilon}}\left [ \left \| \overrightarrow{\vx_0} - \vtheta \overrightarrow{\vx_1} \right \|^2 \right ] 
     =
     \\
     \nonumber
      \mathop{\mathbb{E}}_{\substack{\rxz \sim q\\ \reps \sim \mathcal{N}\left (\0,\mI_d \right )}} \left [ \left \| \rxz - \vtheta \left(\vx_0 \sqrt{1-\beta} + \reps \sqrt{\beta}\right) \right \|^2 \right ]
      =
      \\
    \mathop{\mathbb{E}}_{\substack{\rxz \sim q\\ \reps \sim \mathcal{N}\left (\0,\mI_d \right )}} \left [ \sum_{i=1}^{d} \left ( \overrightarrow{\vx_{0,i}} - \vtheta_i^T\left ( \rxz \sqrt{1-\beta} + \reps\sqrt{\beta}  \right ) \right )^2 \right ],
    \label{eq:lem-obj-d-dim}
\end{eqnarray}
where $\vtheta_i^T$ denotes the $i^{th}$ row of $\vtheta$. Solving the MMSE problem (\ref{eq:lem-obj-d-dim}) for $\vtheta_i^*$ yields
\begin{eqnarray}
    \nonumber 
    \vtheta_i^*  = \mathop{\mathbb{E}}_{\substack{\rxz, \reps}} \left [ \left ( \rxz \sqrt{1-\beta} + \reps \sqrt{\beta}\right )\left ( \rxz \sqrt{1-\beta} + \reps \sqrt{\beta}\right )^T \right ]^{-1} \mathbb{E}_{\rxz, \reps}\left [ \overrightarrow{\vx_{0,i}}\left ( \rxz \sqrt{1-\beta} + \reps \sqrt{\beta} \right ) \right ]
 =
 \\
 \nonumber
    \mathop{\mathbb{E}}_{\substack{\rzz, \reps}} \left [ \left ( \mathcal{A}\vz_0 \sqrt{1-\beta} + \reps \sqrt{\beta}\right )\left ( \mathcal{A} \vzz \sqrt{1-\beta} + \reps \sqrt{\beta}\right )^T \right ]^{-1}  \mathbb{E}_{\vzz, \reps}\left [ \left (\va_i^T \vzz   \right )\left ( \mathcal{A}\vzz \sqrt{1-\beta} + \reps \sqrt{\beta} \right ) \right ] 
=
\\
\label{eq:lem-obj-d-dim-1}
 \Big[(1-\beta)\mathcal{A}\mathbb{E}_{\vzz} \left[\vzz \vzz^T \right] \mathcal{A}^T + \beta \mathbb{E}_{\reps} \left[\reps \reps^T \right] \Big]^{-1} 
\left [\left(\sqrt{1-\beta} \right) \mathcal{A}\mathbb{E}_{\rxz}\left[\rxz \rxz^T \right] \va_i  \right ]. 
\end{eqnarray}
Using the fact that $\vz_0$ and $\vepsilon$ are independent Gaussian random variables with unit covariance, (\ref{eq:lem-obj-d-dim-1}) gives 
\begin{eqnarray}
    \nonumber 
    \vtheta_i^* = \left [(1-\beta)\mathcal{A} \mathcal{A}^T  + \beta\mI_d \right ]^{-1} 
\left [\left(\sqrt{1-\beta} \right) \mathcal{A} \va_i  \right ] 
=
\\
\nonumber
 \frac{\sqrt{1-\beta} }{\beta} \left [\mI_d+\left(\sqrt{\frac{1-\beta}{\beta}}\mathcal{A}\right) \left(\sqrt{\frac{1-\beta}{\beta}}\mathcal{A}\right)^T \right ]^{-1} \mathcal{A} \va_i. 
\end{eqnarray}
Next, we use Woodbury matrix identity~\cite{hager1989updating} to get 
 \begin{eqnarray}
     \nonumber 
    \vtheta_i^* 
    =
    \\
    \nonumber
     \frac{\sqrt{1-\beta} }{\beta} \Big[\mI_d -  \left(\sqrt{\frac{1-\beta}{\beta}}\mathcal{A}\right) \biggl(\mI_k + \left(\sqrt{\frac{1-\beta}{\beta}}\mathcal{A}\right)^T  \left(\sqrt{\frac{1-\beta}{\beta}}\mathcal{A}\right)\biggr)^{-1} \left(\sqrt{\frac{1-\beta}{\beta}}\mathcal{A}\right)^T   \Big]\mathcal{A} \va_i
    =
    \\
    \nonumber
 \frac{\sqrt{1-\beta} }{\beta} \Big[\mI_d -  \left(\sqrt{\frac{1-\beta}{\beta}}\mA\right)  \biggl(\mI_k\left(\frac{1-\beta}{\beta}\right)\mathcal{A}^T\mathcal{A}\biggr)^{-1} \left(\sqrt{\frac{1-\beta}{\beta}}\mathcal{A}\right)^T   \Big]\mathcal{A} \va_i
 =
 \\
 \nonumber
 \frac{\sqrt{1-\beta} }{\beta} \left[\mI_d -  \left(\sqrt{\frac{1-\beta}{\beta}}\mA\right) \beta\mI_k \left(\sqrt{\frac{1-\beta}{\beta}}\mathcal{A}\right)^T   \right]\mathcal{A} \va_i 
=
\\
\nonumber
 \frac{\sqrt{1-\beta} }{\beta} \left[\mI_d -  \left(1-\beta\right)\mA \mA^T \right]\mathcal{A} \va_i 
 =
 \\
 \nonumber
 \frac{\sqrt{1-\beta} }{\beta} \left[ \mathcal{A} \va_i -  \left(1-\beta\right)\mA\va_i \right] = \sqrt{1-\beta} \mA \va_i.
    \label{eq:lem-obj-d-dim-2}
\end{eqnarray} 
Stacking all the rows together, we get the optimal solution of the MMSE problem (\ref{eq:lem-obj-d-dim}), i.e., $\vtheta^* =  \sqrt{1-\beta} \mA \mA^T$. This optimal solution characterizes the Gaussian transition kernel of the reverse SDE (\ref{eq:reverse}).  \hfill $\square$

\subsection{Proof of Theorem~\ref{thm:inp-d-dim}}
\label{subsec:prf-inp-d-dim}
    After $r$ resampling steps, we get the following output from the reverse process:
    \begin{eqnarray}
        \nonumber
        \overleftarrow{\vx_0^r} 
        =
        \\
        \nonumber
        \mA \mA^T \left( \mD \left( \vm\right) \overleftarrow{\vx_0^{r-1}} + \mD \left( \1-\vm\right) \overrightarrow{\vx_0}\right)
        =
        \\
        \nonumber
        \mA \mA^T \mD \left( \vm\right) \overleftarrow{\vx_0^{r-1}} + \mA \mA^T \mD \left( \1-\vm\right) \overrightarrow{\vx_0} 
        =
        \\
        \nonumber
         \mA \mA^T \mD \left( \vm\right) \biggl(\mA \mA^T \mD \left( \vm\right) \overleftarrow{\vx_0^{r-2}} 
                 + \mA \mA^T \mD \left( \1-\vm\right) \overrightarrow{\vx_0} \biggr) + \mA \mA^T \mD \left( \1 - \vm\right) \overrightarrow{\vx_0}     
        =
        \\
         \left( \mA \mA^T \mD \left( \vm\right) \right)^r \lxz + \left( \mI+\sum_{i=1}^{r-1} \left(\mA\mA^T \mD(\vm) \right)^i\right)
         \biggl(\mA\mA^T \mD(\1-\vm)\mA\vz_0\biggr).
        \label{eq:thm-inp-d-dim-1}
    \end{eqnarray}
     Using the truncation of Neumann series, (\ref{eq:thm-inp-d-dim-1}) simplifies to  
    \begin{eqnarray}
        \nonumber
        \overleftarrow{\vx_0^r} = \left( \mA \mA^T \mD \left( \vm\right) \right)^r \lxz + \left(\mI - \left(\mA\mA^T\mD(\vm) \right)^r \right)
         \left(\mI - \mA\mA^T\mD(\vm) \right)^{-1}\mA\mA^T \mD(\1-\vm)\mA\vz_0 
         =
         \\
        \nonumber
         \left( \mA \mA^T \mD \left( \vm\right) \right)^r \lxz + \left(\mI - \left(\mA\mA^T\mD(\vm) \right)^r \right)  
         \left(\mI - \mA\mA^T\mD(\vm) \right)^{-1}\mA\mA^T \left(\mI - \mD(\vm) \right)\mA\vz_0
         =
         \\
        \nonumber
         \left( \mA \mA^T \mD \left( \vm\right) \right)^r \lxz + \left(\mI - \left(\mA\mA^T\mD(\vm) \right)^r \right) 
        \nonumber
         \left(\mI - \mA\mA^T\mD(\vm) \right)^{-1}\left(\mA\mA^T\mA\vz_0  - \mA\mA^T \mD(\vm)\mA\vz_0 \right) 
         =
         \\
        \nonumber
         \left( \mA \mA^T \mD \left( \vm\right) \right)^r \lxz + \left(\mI - \left(\mA\mA^T\mD(\vm) \right)^r \right) 
        \nonumber
         \left(\mI - \mA\mA^T\mD(\vm) \right)^{-1}\left(\mA\vz_0  - \mA\mA^T \mD(\vm)\mA\vz_0 \right)
         =
         \\
        \nonumber
         \left( \mA \mA^T \mD \left( \vm\right) \right)^r \mA\mA^T \lxo + \left(\mI - \left(\mA\mA^T\mD(\vm) \right)^r \right) 
        \left(\mI - \mA\mA^T\mD(\vm) \right)^{-1}\left(\mI  - \mA\mA^T \mD(\vm)\right)\mA\vz_0 
        =
        \\
         \left( \mA \mA^T \mD \left( \vm\right) \right)^r \left( \mA\mA^T \lxo - \mA\vz_0\right) + \mA\vz_0. 
        \label{eq:thm-inp-d-dim-2}
    \end{eqnarray}
    Next, we use the Singular Value Decomposition (SVD) of $\mA\mA^T \mD(\vm)$. Let us denote the left singular vectors of $\mA\mA^T \mD(\vm)$ by $\mU \in \R^{d\times d}$, the eigen values by the diagonal matrix $\mathbf{\Sigma} \in \R^{d\times d}$, and the right singular vectors by $\mV \in \R^{d \times d}$. Therefore, we simplify the expression (\ref{eq:thm-inp-d-dim-2}) to 
    \begin{align*}
        \overleftarrow{\vx_0^r} &= \mU \mathbf{\Sigma}^r \mV  \left( \mA\mA^T \lxo - \mA\vz_0\right) + \mA\vz_0
        \label{eq:thm-inp-d-dim-3}.
    \end{align*}
    Since the largest eigen value of $\mA\mA^T\mD(\vm)$ is stricly less than $1$, we get the following result: $\lim_{r\rightarrow\infty} \overleftarrow{\vx_0^r} = \mA\vz_0$. This answers the question of recoverability by diffusion based inpainting. Indeed, \textbf{Algorithm~\ref{alg:repaint-inference-special}} recovers the \textit{exact sample} given infinitely many resampling steps. For most practical applications, it is sufficient to get an $\vepsilon$-close solution. Now, let us derive the rate at which \textbf{Algorithm~\ref{alg:repaint-inference-special}} converges to the $\vepsilon$-neighborhood of $\mA\vz_0$.
    
    Subtracting the original sample $\rxz = \mA\vz_0$ from both sides of (\ref{eq:thm-inp-d-dim-2}) and taking the Euclidean norm, we arrive at 
    \begin{eqnarray}
    \nonumber
        \left\|\overleftarrow{\vx_0^r} - \rxz \right\|  =
        \\
        \nonumber
        \left\|  \left( \mA \mA^T \mD \left( \vm\right) \right)^r \left( \mA\mA^T \lxo - \mA\vz_0\right) \right\| 
        \stackrel{(i)}{\leq}
        \\
        \nonumber
          \left\| \mA\mA^T \mD(\vm) \right\|^r \left\|  \mA\mA^T \lxo - \mA\vz_0\right\|
          =
          \\
        \nonumber
         \left\| \mA\mA^T \mD(\vm) \right\|^r \left\|  \mA\mA^T \lxo - \mA\mA^T\mA\vz_0\right\|
         \leq 
         \\
        \nonumber
         \lambda_{\max} ^r \frac{\left\| \vtheta^*\right\|}{\sqrt{1-\beta}} \left\| \lxo - \rxz \right\|,
    \end{eqnarray}
    where (i) uses Cauchy-Schwarz inequality and the spectral norm identity. By making $\lambda_{\max} ^r \frac{\left\| \vtheta^*\right\|}{\sqrt{1-\beta}} \left\| \lxo - \rxz \right\|  \leq \epsilon$, we ensure that the algorithm converges to an $\epsilon$-accurate solution. We achieve this convergence by running at least $\frac{\log\left( \frac{\left \| \theta^* \right \|\left \| \overleftarrow{x_1}- \rxz \right \|}{\epsilon\sqrt{1-\beta}}\right)}{\log\left(\frac{1}{\lambda_{\max}} \right)}$ resampling steps, which implies a \textit{linear rate} of convergence.  \hfill $\square$

\subsection{Proof of Theorem~\ref{thm:inp-d-dim-noise}}
\label{subsec:prf-thm-inp-d-dim-noise}
    Similar to the proof of \textbf{Theorem~\ref{thm:inp-d-dim}}, we begin with the following expression in the case of noisy generative model: 
    \begin{eqnarray}
    \nonumber
        \overleftarrow{\vx_0^r} = \left(\mA \mA^T+\mdelta \right)\left( \mD \left( \vm\right) \overleftarrow{\vx_0^{r-1}} + \mD \left( \1-\vm\right) \overrightarrow{\vx_0}\right) 
        =
        \\
        \nonumber
        \left(\mA \mA^T+\mdelta \right)\mD \left( \vm\right) \overleftarrow{\vx_0^{r-1}} + \left(\mA \mA^T+\mdelta \right) \mD \left( \1-\vm\right) \overrightarrow{\vx_0} 
        =
        \\
        \nonumber
        \left(\mA \mA^T+\mdelta \right) \mD \left( \vm\right) \biggl(\left(\mA \mA^T+\mdelta \right)\mD \left( \vm\right) \overleftarrow{\vx_0^{r-2}} 
                 + \left(\mA \mA^T+\mdelta \right) \mD \left( \1-\vm\right) \overrightarrow{\vx_0} \biggr) + \left(\mA \mA^T+\mdelta \right) \mD \left( \1 - \vm\right) \overrightarrow{\vx_0}     
         =
         \\
        \nonumber
         \left( \left(\mA \mA^T+\mdelta \right) \mD \left( \vm\right) \right)^r \lxz + \left( \sum_{i=0}^{r-1} \left(\left(\mA \mA^T+\mdelta \right) \mD(\vm) \right)^i\right)
         \biggl(\left(\mA \mA^T+\mdelta \right) \mD(\1-\vm)\mA\vz_0\biggr) 
         =
         \\
        \nonumber
        \left( \left(\mA \mA^T+\mdelta \right) \mD \left( \vm\right) \right)^r \lxz + \left(\mI - \left(\left( \mA \mA^T+\mdelta\right)\mD(\vm) \right)^r \right) \left(\mI - \left( \mA \mA^T+\mdelta\right)\mD(\vm) \right)^{-1} \\
        \nonumber
        \times \left(\mA \mA^T\mA\vz_0 + \mdelta \mA\vz_0  - \left( \mA \mA^T+\mdelta\right) \mD(\vm)\mA\vz_0 \right)
        =
        \\
        \nonumber
        \left( \left( \mA \mA^T+\mdelta\right) \mD \left( \vm\right) \right)^r \lxz + \left(\mI - \left(\left( \mA \mA^T+\mdelta\right)\mD(\vm) \right)^r \right) \left(\mI - \left( \mA \mA^T+\mdelta\right)\mD(\vm) \right)^{-1}
        \\
        \nonumber
        \times \left(\mA\vz_0   - \left( \mA \mA^T+\mdelta\right) \mD(\vm)\mA\vz_0 + \mdelta \mA\vz_0\right)
        =
        \\
        \nonumber 
        \left( \left( \mA \mA^T+\mdelta\right) \mD \left( \vm\right) \right)^r \lxz + \left(\mI - \left(\left( \mA \mA^T+\mdelta\right)\mD(\vm) \right)^r \right) \left(\mI - \left( \mA \mA^T+\mdelta\right)\mD(\vm) \right)^{-1}
        \\
        \nonumber
        \times \left(\left(\mI   - \left( \mA \mA^T+\mdelta\right) \mD(\vm)\right)\mA\vz_0 + \mdelta \mA\vz_0\right)
        = 
        \\
        \nonumber 
        \left( \left( \mA \mA^T+\mdelta\right) \mD \left( \vm\right) \right)^r \lxz + \left(\mI - \left(\left( \mA \mA^T+\mdelta\right)\mD(\vm) \right)^r \right)
        \nonumber
         \left(\mA\vz_0 + \left(\mI - \left( \mA \mA^T+\mdelta\right)\mD(\vm) \right)^{-1}\mdelta \mA\vz_0\right)
         =
         \\
        \nonumber
         \left( \left( \mA \mA^T+\mdelta\right) \mD \left( \vm\right) \right)^r \lxz + \mA\vz_0 + \left(\mI - \left( \mA \mA^T+\mdelta\right)\mD(\vm) \right)^{-1}\mdelta \mA\vz_0 - \left(\left( \mA \mA^T+\mdelta\right)\mD(\vm) \right)^r \mA\vz_0\\
        \nonumber
         - \left(\left( \mA \mA^T+\mdelta\right)\mD(\vm) \right)^r \left(\mI - \left( \mA \mA^T+\mdelta\right)\mD(\vm) \right)^{-1}\mdelta \mA\vz_0. 
         \label{eq:prf-thm-inp-d-dim-noise}
    \end{eqnarray}
    Subtracting $\rxz = \mA\vz_0$ from both sides of the above expression 
    and taking the Euclidean norm, we get 
    \begin{eqnarray}
    \nonumber
        \left\|\overleftarrow{\vx_0^r} - \rxz\right\| = \Biggl\| \left( \left( \mA \mA^T+\mdelta\right) \mD \left( \vm\right) \right)^r \lxz + \left(\mI - \left( \mA \mA^T+\mdelta\right)\mD(\vm) \right)^{-1}\mdelta \mA\vz_0 - \left(\left( \mA \mA^T+\mdelta\right)\mD(\vm) \right)^r \mA\vz_0 \\
        \nonumber
        - \left(\left( \mA \mA^T+\mdelta\right)\mD(\vm) \right)^r \left(\mI - \left( \mA \mA^T+\mdelta\right)\mD(\vm) \right)^{-1}\mdelta \mA\vz_0
        \Biggr\|.
        \label{eq:prf-thm-inp-d-dim-noise-1}
    \end{eqnarray}
    After rearranging the terms, 
    we apply the triangle inequality followed by Cauchy-Schwarz inequality to obtain
    \begin{eqnarray}
    \nonumber
        \left\|\overleftarrow{\vx_0^r} - \rxz\right\| 
        \leq
        \\
        \nonumber
        \left\| \left( \left( \mA \mA^T+\mdelta\right) \mD \left( \vm\right) \right)^r \right\| 
        \left\|\left( \lxz - \mA\vz_0 -  \left(\mI - \left( \mA \mA^T+\mdelta\right)\mD(\vm) \right)^{-1}\mdelta \mA\vz_0
        \right) \right\| \\
        \nonumber
        + \left\| \left(\mI - \left( \mA \mA^T+\mdelta\right)\mD(\vm) \right)^{-1}\mdelta \mA\vz_0  \right\| 
        \leq 
        \\
        \nonumber
        \left\|  \left( \mA \mA^T+\mdelta\right) \mD \left( \vm\right) \right\|^r 
        \left(\left\| \lxz - \mA\vz_0\right\|+ \left\| \left(\mI - \left( \mA \mA^T+\mdelta\right)\mD(\vm) \right)^{-1}\mdelta \mA\vz_0
         \right\|\right) \\
         \nonumber
         + \left\| \left(\mI - \left( \mA \mA^T+\mdelta\right)\mD(\vm) \right)^{-1}\mdelta \mA\vz_0  \right\|.
         \label{eq:prf-thm-inp-d-dim-noise-2}
    \end{eqnarray}
    Since $\hat{\lambda}_{\max}$ denotes the largest eigen value of $ \left( \mA \mA^T+\mdelta\right) \mD \left( \vm\right)$, we further simplify as:
    \begin{eqnarray}
    \nonumber
        \left\|\overleftarrow{\vx_0^r} - \rxz\right\| 
        \leq
        \\
        \nonumber
        \hat{\lambda}_{\max}^r 
        \left(\left\| \lxz - \mA\vz_0\right\|+ \left\| \left(\mI - \hat{\lambda}_{\max}\mI\right)^{-1}\mdelta \mA\vz_0
         \right\|\right)
         + \left\| \left(\mI - \hat{\lambda}_{\max}\mI \right)^{-1}\mdelta \mA\vz_0  \right\|
         =
         \\
         \nonumber
         \hat{\lambda}_{\max}^r 
         \left( \left\| \lxz - \mA\vz_0\right\|
         +  \frac{\left\|\mdelta \mA\vz_0  \right\|}{\left(1 - \hat{\lambda}_{\max}\right)}
         \right)
         +  \frac{\left\|\mdelta \mA\vz_0  \right\|}{\left(1 - \hat{\lambda}_{\max}\right)}.
         \label{eq:prf-thm-inp-d-dim-noise-3}
    \end{eqnarray}
Using the fact that $\lxz = \mA\mA^T\lxo$, $\rxz = \mA\vz_0$, and $\mA^T\mA = \mI$, we get 
\begin{eqnarray}
\nonumber
    \left\|\overleftarrow{\vx_0^r} - \rxz\right\|
    \leq 
    \hat{\lambda}_{\max}^r
    \left(
    \left\| \mA\mA^T\lxo - \mA(\mA^T\mA)\vz_0\right\|
    + 
    \frac{\left\|\mdelta \mA\vz_0  \right\|}{\left(1 - \hat{\lambda}_{\max}\right)}
    \right)
    +  \frac{\left\|\mdelta \mA\vz_0  \right\|}{\left(1 - \hat{\lambda}_{\max}\right)}
    \leq 
    \\
    \nonumber
    \hat{\lambda}_{\max}^r
    \left(
    \left\| \mA\mA^T \right\| \left\| \lxo - \mA\vz_0\right\|
    + 
    \frac{\left\|\mdelta \right\| \left\|\mA\vz_0  \right\|}{\left(1 - \hat{\lambda}_{\max}\right)}
    \right)
    +  \frac{\left\|\mdelta\right\| \left\| \mA\vz_0  \right\|}{\left(1 - \hat{\lambda}_{\max}\right)} 
    =
    \\
    \nonumber
    \hat{\lambda}_{\max}^r
    \left(
    \frac{\left\| \vtheta^* \right\|\left\| \lxo - \rxz\right\|}{\sqrt{1-\beta}} 
    + 
    \frac{\left\|\vtheta - \vtheta^* \right\| \left\|\rxz  \right\|}{\left(1 - \hat{\lambda}_{\max}\right)\sqrt{1-\beta}}
    \right)
    +  \frac{\left\|\vtheta - \vtheta^* \right\| \left\|\rxz  \right\|}{\left(1 - \hat{\lambda}_{\max}\right)\sqrt{1-\beta}},
\end{eqnarray}
which completes the statement of the theorem. \hfill $\square$

\subsection{Proof of \textbf{Corollary~\ref{cor:thm-inp-d-dim-noise}}}
\label{subsec:prf-cor-inp-d-dim}}
We begin with the statement of \textbf{Theorem~\ref{thm:inp-d-dim-noise}}, 
    \begin{eqnarray}
        \nonumber
        \left\|\overleftarrow{\vx_0^r} - \rxz\right\|
    \leq
    \hat{\lambda}_{\max}^r
    \left(
    \frac{\left\| \vtheta^* \right\|\left\| \lxo - \rxz\right\|}{\sqrt{1-\beta}} 
    + 
    \frac{\left\|\vtheta - \vtheta^* \right\| \left\|\rxz  \right\|}{\left(1 - \hat{\lambda}_{\max}\right)\sqrt{1-\beta}}
    \right)
    +  \frac{\left\|\vtheta - \vtheta^* \right\| \left\|\rxz  \right\|}{\left(1 - \hat{\lambda}_{\max}\right)\sqrt{1-\beta}}.
    \end{eqnarray}
    Since $\hat{\lambda}_{\max} < 1$, the first term vanishes in the limit. Using the fact that $q(\rxz)$ has a compact support, we get
    \begin{align*}
        \mathop{\sup}_{\substack{\rxz \sim q(\rxz)}} \left\|\overleftarrow{\vx_0^r} - \rxz\right\|
    & \leq
    \mathop{\sup}_{\substack{\rxz \sim q(\rxz)}}
    \frac{\left\|\vtheta - \vtheta^* \right\| \left\|\rxz  \right\|}{\left(1 - \hat{\lambda}_{\max}\right)\sqrt{1-\beta}}\\
     & \leq  \frac{\kappa\left\|\vtheta - \vtheta^* \right\| }{\left(1 - \hat{\lambda}_{\max}\right)\sqrt{1-\beta}} \\
     & =  \frac{\kappa \delta}{\left(1 - \hat{\lambda}_{\max}\right)}.
    \end{align*}
    Substituting $\delta=\gO\left(\epsilon \left(1-\hat{\lambda}_{\max} \right)/\kappa\right)$ for a $\delta$-approximate generator, we have $\mathop{\sup}_{\substack{\rxz \sim q(\rxz)}} \left\|\overleftarrow{\vx_0^r} - \rxz\right\|
    \leq \epsilon$, which finishes the proof. \hfill $\square$

\subsection{Proof of \textbf{Theorem~\ref{thm:gen-d-dim-t-states}}}
\label{subsec:prf-thm-gen-d-dim-t-state}
For diffusion over $T+1$ states, the training objective to minimize the difference in posterior means is given by (\ref{eq:vlbo_mu}):
\begin{eqnarray}
\nonumber
\min_{\vtheta} \mathop{\mathbb{E}}_{\substack{\rxz\sim q(\rxz)\\ \vepsilon\sim\gN\left(\0,\mI \right)\\ t\sim \gU\left( 1,\dots,T\right)}}\left [ \left \| \tilde{\mu}_t\left ( \overrightarrow{\vx_t}, \overrightarrow{\vx_0} \right ) - \mu_\theta\left ( \overleftarrow{\vx_t}, t \right ) \right \|^2
\right ] 
\coloneqq
\left [ \left \| \frac{\sqrt{\Bar{\alpha}_{t-1} }\beta_t}{1-\Bar{a}_t}\overrightarrow{\vx_0} + \frac{\sqrt{\alpha}_t\left ( 1-\Bar{\alpha}_{t-1} \right )}{1-\Bar{\alpha}_t} \overrightarrow{\vx_t }
 - \mu_\vtheta\left ( \overleftarrow{\vx_t}, t \right ) \right \|^2
\right ]
=
\\
\nonumber
\mathop{\mathbb{E}}_{\substack{\rxz\sim q(\rxz)\\ \vepsilon\sim\gN\left(\0,\mI \right)\\  t\sim \gU\left( 1,\dots,T\right)}} 
\left [ \left \| \frac{\sqrt{\Bar{\alpha}_{t-1} }\beta_t}{1-\Bar{a}_t}\overrightarrow{\vx_0} + \frac{\sqrt{\alpha}_t\left ( 1-\Bar{\alpha}_{t-1} \right )}{1-\Bar{\alpha}_t} \overrightarrow{\vx_t }
 - \vtheta\begin{bmatrix}
     \overleftarrow{\vx_t}\\ t
 \end{bmatrix}  \right \|^2
\right ].
\label{eq:thm-gen-T-states}
\end{eqnarray}

We know from our previous analysis that the closed-form solution of the above MMSE problem
is expressed as: 
\begin{eqnarray}
\nonumber
    \vtheta_i^* 
    =
    \left[    
    \mathop{\mathbb{E}}_{\substack{\rxz\sim q(\rxz)\\ \vepsilon\sim\gN\left(\0,\mI \right)\\  t\sim \gU\left( 1,\dots,T\right)}}
        \begin{bmatrix}
            \overleftarrow{\vx_t}\\ t
        \end{bmatrix}
        \begin{bmatrix}
            \overleftarrow{\vx_t}\\ t
        \end{bmatrix}^T  
    \right]^{-1}
    \times
    \mathop{\mathbb{E}}_{\substack{\rxz\sim q(\rxz)\\ \vepsilon\sim\gN\left(\0,\mI \right)\\  t\sim \gU\left( 1,\dots,T\right)}}
    \left[
        \left( \frac{\sqrt{\Bar{\alpha}_{t-1} }\beta_t}{1-\Bar{a}_t}\overrightarrow{\vx_0} + \frac{\sqrt{\alpha}_t\left ( 1-\Bar{\alpha}_{t-1} \right )}{1-\Bar{\alpha}_t} \overrightarrow{\vx_t } \right)_i 
        \begin{bmatrix}
            \overleftarrow{\vx_t}\\ t
        \end{bmatrix}
    \right].
\end{eqnarray}
By substituting the expression for samples generated at intermediate states, we get the following result:
    \begin{eqnarray}
    \nonumber
    \vtheta_i^* 
    =
    \left[
    \mathop{\mathbb{E}}_{\substack{\rxz\sim q(\rxz)\\ \vepsilon\sim\gN\left(\0,\mI \right)\\  t\sim \gU\left( 1,\dots,T\right)}}
        \begin{bmatrix}
            \rxz \sqrt{\Bar{\alpha}_t}  + \vepsilon\sqrt{1 - \Bar{\alpha}_t} \\ t
        \end{bmatrix}
        \begin{bmatrix}
            \rxz \sqrt{\Bar{\alpha}_t}  + \vepsilon\sqrt{1 - \Bar{\alpha}_t} \\ t
        \end{bmatrix}^T  
    \right]^{-1} \\
    \nonumber
    \times
    \mathop{\mathbb{E}}_{\substack{\rxz\sim q(\rxz)\\ \vepsilon\sim\gN\left(\0,\mI \right)\\  t\sim \gU\left( 1,\dots,T\right)}}
    \left[
        \left( \frac{\sqrt{\Bar{\alpha}_{t-1} }\beta_t}{1-\Bar{a}_t}\overrightarrow{\vx_0} + \frac{\sqrt{\alpha}_t\left ( 1-\Bar{\alpha}_{t-1} \right )}{1-\Bar{\alpha}_t} \left(\rxz \sqrt{\Bar{\alpha}_t}  + \vepsilon\sqrt{1 - \Bar{\alpha}_t} \right) \right)_i 
        \begin{bmatrix}
            \rxz \sqrt{\Bar{\alpha}_t}  + \vepsilon\sqrt{1 - \Bar{\alpha}_t}\\ t
        \end{bmatrix}
    \right].
    \end{eqnarray}
    Taking the outer product and expanding further, we arrive at
    \begin{eqnarray}
    \nonumber
    \vtheta_i^* 
    = 
    \left[
    \mathop{\mathbb{E}}_{\substack{\rxz\sim q(\rxz)\\ \vepsilon\sim\gN\left(\0,\mI \right)\\  t\sim \gU\left( 1,\dots,T\right)}}
        \begin{bmatrix}
            \left(\rxz \sqrt{\Bar{\alpha}_t}  + \vepsilon\sqrt{1 - \Bar{\alpha}_t}\right) \left(\rxz \sqrt{\Bar{\alpha}_t}  + \vepsilon\sqrt{1 - \Bar{\alpha}_t}\right)^T
            & \left(\rxz \sqrt{\Bar{\alpha}_t}  + \vepsilon\sqrt{1 - \Bar{\alpha}_t}\right) t 
            \\ 
            t \left(\rxz \sqrt{\Bar{\alpha}_t}  + \vepsilon\sqrt{1 - \Bar{\alpha}_t}\right)
            & 
            t^2
        \end{bmatrix}  
    \right]^{-1} \\
    \nonumber
    \times
    \mathop{\mathbb{E}}_{\substack{\rxz\sim q(\rxz)\\ \vepsilon\sim\gN\left(\0,\mI \right)\\  t\sim \gU\left( 1,\dots,T\right)}}
    \left[
        \left( \rxz\left(\frac{\sqrt{\Bar{\alpha}_{t-1} }\beta_t}{1-\Bar{a}_t}
        + \frac{\sqrt{\Bar{\alpha}_t} \sqrt{\alpha}_t\left ( 1-\Bar{\alpha}_{t-1} \right )}{1-\Bar{\alpha}_t} \right)  
        + 
        \vepsilon
            \left(
                \frac{\sqrt{\alpha}_t\left ( 1-\Bar{\alpha}_{t-1} \right )\sqrt{1 - \Bar{\alpha}_t}}{1-\Bar{\alpha}_t}
            \right) 
        \right)_i 
        \begin{bmatrix}
            \rxz \sqrt{\Bar{\alpha}_t}  + \vepsilon\sqrt{1 - \Bar{\alpha}_t}\\ t
        \end{bmatrix}
    \right],
    \end{eqnarray}
    which upon rearrangement gives the following expression:
    \begin{eqnarray}
    \nonumber
    \vtheta_i^* 
    =
    \\
    \nonumber
    \left[
    \mathop{\mathbb{E}}_{\substack{\rxz\sim q(\rxz)\\ \vepsilon\sim\gN\left(\0,\mI \right)\\  t\sim \gU\left( 1,\dots,T\right)}}
        \begin{bmatrix}
            \left(\rxz \rxz^T \Bar{\alpha}_t  
                + \rxz~\vepsilon^T~\sqrt{\Bar{\alpha}_t}\sqrt{1 - \Bar{\alpha}_t}
                + \vepsilon~\rxz^T~\sqrt{\Bar{\alpha}_t}\sqrt{1 - \Bar{\alpha}_t}
                + \vepsilon~\vepsilon^T \left(1-\Bar{\alpha}_t \right)
            \right) 
            & \left(\rxz \sqrt{\Bar{\alpha}_t}  + \vepsilon\sqrt{1 - \Bar{\alpha}_t}\right) t 
            \\ 
            t \left(\rxz \sqrt{\Bar{\alpha}_t}  + \vepsilon\sqrt{1 - \Bar{\alpha}_t}\right)
            & 
            t^2
        \end{bmatrix}  
    \right]^{-1} \\
    \nonumber
    \times
    \mathop{\mathbb{E}}_{\substack{\rxz\sim q(\rxz)\\ \vepsilon\sim\gN\left(\0,\mI \right)\\  t\sim \gU\left( 1,\dots,T\right)}}
    \left[
        \left( \rxz\left(\frac{\sqrt{\Bar{\alpha}_{t-1} }\beta_t}{1-\Bar{a}_t}
        + \frac{\sqrt{\Bar{\alpha}_t} \sqrt{\alpha}_t\left ( 1-\Bar{\alpha}_{t-1} \right )}{1-\Bar{\alpha}_t} \right)  
        + 
        \vepsilon
            \left(
                \frac{\sqrt{\alpha}_t\left ( 1-\Bar{\alpha}_{t-1} \right )\sqrt{1 - \Bar{\alpha}_t}}{1-\Bar{\alpha}_t}
            \right) 
        \right)_i 
        \begin{bmatrix}
            \rxz \sqrt{\Bar{\alpha}_t}  + \vepsilon\sqrt{1 - \Bar{\alpha}_t}\\ t
        \end{bmatrix}
    \right].
    \label{eq:thm-gen-T-states-1}
\end{eqnarray}
 Recall that $\rxz$ and $\vepsilon$ have zero means. Now, let us use the fact that $\rxz$, $\vepsilon$, and $t$ are all independent random variables. 
\begin{eqnarray}
    \nonumber
    \vtheta_i^* 
    =
    \left[
    \mathop{\mathbb{E}}_{\substack{\vz_0\sim \gN(\0,\mI)\\ \vepsilon\sim\gN\left(\0,\mI \right)\\  t\sim \gU\left( 1,\dots,T\right)}}
        \begin{bmatrix}
            \left(\mA\vz_0 \left(\mA\vz_0 \right)^T \Bar{\alpha}_t  
                + \vepsilon~\vepsilon^T \left(1-\Bar{\alpha}_t \right)
            \right) 
            & \0 
            \\ 
            \0
            & 
            t^2
        \end{bmatrix}  
    \right]^{-1} \\
    \nonumber
    \times
    \mathop{\mathbb{E}}_{\substack{\rxz\sim q(\rxz)\\ \vepsilon\sim\gN\left(\0,\mI \right)\\  t\sim \gU\left( 1,\dots,T\right)}}
    \left[
        \left( \rxz\left(\frac{\sqrt{\Bar{\alpha}_{t-1} }\beta_t}{1-\Bar{a}_t}
        + \frac{\sqrt{\Bar{\alpha}_t} \sqrt{\alpha}_t\left ( 1-\Bar{\alpha}_{t-1} \right )}{1-\Bar{\alpha}_t} \right)  
        + 
        \vepsilon
            \left(
                \frac{\sqrt{\alpha}_t\left ( 1-\Bar{\alpha}_{t-1} \right )\sqrt{1 - \Bar{\alpha}_t}}{1-\Bar{\alpha}_t}
            \right) 
        \right)_i 
        \begin{bmatrix}
            \rxz \sqrt{\Bar{\alpha}_t}  + \vepsilon\sqrt{1 - \Bar{\alpha}_t}\\ t
        \end{bmatrix}
    \right].
    \end{eqnarray}
    Since $\alpha_t = 1- \beta_t$ and $\sqrt{\Bar{\alpha}_t} = \sqrt{\Bar{\alpha}_{t-1}} \sqrt{\alpha_t}$, we further simplify the closed-form solution as follows:
    \begin{eqnarray}
    \nonumber
    \vtheta_i^*
    =
    \\
    \nonumber
    \left[
    \mathop{\mathbb{E}}_{\substack{\vz_0\sim \gN(\0,\mI)\\ \vepsilon\sim\gN\left(\0,\mI \right)\\  t\sim \gU\left( 1,\dots,T\right)}}
        \begin{bmatrix}
            \left(\mA\vz_0\vz_0^T\mA^T \Bar{\alpha}_t  
                + \vepsilon~\vepsilon^T \left(1-\Bar{\alpha}_t \right)
            \right) 
            & \0 
            \\ 
            \0
            & 
            t^2
        \end{bmatrix}  
    \right]^{-1} \\
    \nonumber
    \times
    \mathop{\mathbb{E}}_{\substack{\rxz\sim q(\rxz)\\ \vepsilon\sim\gN\left(\0,\mI \right)\\  t\sim \gU\left( 1,\dots,T\right)}}
    \left[
        \left( \rxz\left(
        \frac{\sqrt{\Bar{\alpha}_{t-1}}
        \left(\beta_t + \alpha_t \left ( 1-\Bar{\alpha}_{t-1} \right )
        \right)
        }
        {1-\Bar{a}_t}
        \right)  
        + 
        \vepsilon
            \left(
                \frac{\sqrt{\alpha}_t\left ( 1-\Bar{\alpha}_{t-1} \right )}{\sqrt{1 - \Bar{\alpha}_t}}
            \right) 
        \right)_i 
        \begin{bmatrix}
            \rxz \sqrt{\Bar{\alpha}_t}  + \vepsilon\sqrt{1 - \Bar{\alpha}_t}\\ t
        \end{bmatrix}
    \right]
    =
    \\
    \nonumber
    \left[
    \mathop{\mathbb{E}}_{\substack{\vz_0\sim \gN(\0,\mI)\\ \vepsilon\sim\gN\left(\0,\mI \right)\\  t\sim \gU\left( 1,\dots,T\right)}}
        \begin{bmatrix}
            \left(\mA\mA^T \Bar{\alpha}_t  
                + \vepsilon~\vepsilon^T \left(1-\Bar{\alpha}_t \right)
            \right) 
            & \0 
            \\ 
            \0
            & 
            t^2
        \end{bmatrix}  
    \right]^{-1} \\
    \nonumber
    \times
    \mathop{\mathbb{E}}_{\substack{\rxz\sim q(\rxz)\\ \vepsilon\sim\gN\left(\0,\mI \right)\\  t\sim \gU\left( 1,\dots,T\right)}}
    \left[
        \left( \rxz\left(
        \frac{\sqrt{\Bar{\alpha}_{t-1}}
        \left(\beta_t + \alpha_t - \Bar{\alpha}_{t} 
        \right)
        }
        {1-\Bar{a}_t}
        \right)  
        + 
        \vepsilon
            \left(
                \frac{\sqrt{\alpha}_t\left ( 1-\Bar{\alpha}_{t-1} \right )}{\sqrt{1 - \Bar{\alpha}_t}}
            \right) 
        \right)_i 
        \begin{bmatrix}
            \rxz \sqrt{\Bar{\alpha}_t}  + \vepsilon\sqrt{1 - \Bar{\alpha}_t}\\ t
        \end{bmatrix}
    \right]
    =
    \\
    \nonumber
    \mathop{\mathbb{E}}_{\substack{   t\sim \gU\left( 1,\dots,T\right)}}
    \begin{bmatrix}
            \left(\mA\mA^T \Bar{\alpha}_t  
                + \mI \left(1-\Bar{\alpha}_t \right)
            \right) 
            & \0 
            \\ 
            \0
            & 
            \frac{T^3- 1 }{3T}
        \end{bmatrix}^{-1} \\
        \nonumber
    \times
    \mathop{\mathbb{E}}_{\substack{\rxz\sim q(\rxz)\\ \vepsilon\sim\gN\left(\0,\mI \right)\\  t\sim \gU\left( 1,\dots,T\right)}}
    \left[
        \left(
        \rxz\left(
        \frac{\sqrt{\Bar{\alpha}_{t-1}}
        \left(\beta_t + 1 - \beta_t - \Bar{\alpha}_{t} 
        \right)
        }
        {1-\Bar{a}_t}
        \right)  
        + 
        \vepsilon
            \left(
                \frac{\sqrt{\alpha}_t\left ( 1-\Bar{\alpha}_{t-1} \right )}{\sqrt{1 - \Bar{\alpha}_t}}
            \right) 
        \right)_i 
        \begin{bmatrix}
            \rxz \sqrt{\Bar{\alpha}_t}  + \vepsilon\sqrt{1 - \Bar{\alpha}_t}\\ t
        \end{bmatrix}
    \right]
    =
    \\
    \nonumber
    \mathop{\mathbb{E}}_{\substack{   t\sim \gU\left( 1,\dots,T\right)}}
    \begin{bmatrix}
            \left[\mA\mA^T \Bar{\alpha}_t  
                + \mI \left(1-\Bar{\alpha}_t \right) 
            \right]^{-1}
            & \0 
            \\ 
            \0
            & 
            \frac{3T}{T^3- 1}
        \end{bmatrix}
    \\
    \nonumber
    \times
    \mathop{\mathbb{E}}_{\substack{\rxz\sim q(\rxz)\\ \vepsilon\sim\gN\left(\0,\mI \right)\\  t\sim \gU\left( 1,\dots,T\right)}}
    \left[
        \left(
        \rxz \sqrt{\Bar{\alpha}_{t-1}}
        + 
        \vepsilon
            \left(
                \frac{\sqrt{\alpha}_t\left ( 1-\Bar{\alpha}_{t-1} \right )}{\sqrt{1 - \Bar{\alpha}_t}}
            \right) 
        \right)_i 
        \begin{bmatrix}
            \rxz \sqrt{\Bar{\alpha}_t}  + \vepsilon\sqrt{1 - \Bar{\alpha}_t}\\ t
        \end{bmatrix}
    \right].
    \label{eq:thm-gen-T-states-2}
\end{eqnarray}

Using Woodburry matrix inverse~\cite{hager1989updating} in the above expression, we get 

\begin{eqnarray}
\nonumber
    \vtheta_i^*
    = \mathop{\mathbb{E}}_{\substack{   t\sim \gU\left( 1,\dots,T\right)}}
    \begin{bmatrix}
            \left(1-\Bar{\alpha}_t \right)^{-1}
            \left[
            \mI + \left(\mA\sqrt{\frac{\Bar{\alpha}_t}{\left(1-\Bar{\alpha}_t \right)}}\right)
             \left(\mA\sqrt{\frac{\Bar{\alpha}_t}{\left(1-\Bar{\alpha}_t \right)}}\right)^T   
            \right]^{-1}
            & \0 
            \\ 
            \0
            & 
            \frac{3T}{T^3- 1}
        \end{bmatrix}\\
        \nonumber
    \times
    \mathop{\mathbb{E}}_{\substack{\vz_0 \sim \gN(\0,\mI)\\ \vepsilon\sim\gN\left(\0,\mI \right)\\ t\sim \gU\left( 1,\dots,T\right)}}
    \left[
        \left(
        \mA\vz_0 \sqrt{\Bar{\alpha}_{t-1}}
        + 
        \vepsilon
            \left(
                \frac{\sqrt{\alpha}_t\left ( 1-\Bar{\alpha}_{t-1} \right )}{\sqrt{1 - \Bar{\alpha}_t}}
            \right) 
        \right)_i 
        \begin{bmatrix}
            \mA\vz_0 \sqrt{\Bar{\alpha}_t}  + \vepsilon\sqrt{1 - \Bar{\alpha}_t}\\ t
        \end{bmatrix}
    \right]
    =
    \\
    \nonumber
    \mathop{\mathbb{E}}_{\substack{   t\sim \gU\left( 1,\dots,T\right)}}
    \begin{bmatrix}
            \left(1-\Bar{\alpha}_t \right)^{-1}
            \left[
            \mI - \left(\mA\sqrt{\frac{\Bar{\alpha}_t}{\left(1-\Bar{\alpha}_t \right)}}\right)
                \left(\mI+\left(\mA\sqrt{\frac{\Bar{\alpha}_t}{\left(1-\Bar{\alpha}_t \right)}}\right)^T\left(\mA\sqrt{\frac{\Bar{\alpha}_t}{\left(1-\Bar{\alpha}_t \right)}}\right)
                \right)^{-1}
             \left(\mA\sqrt{\frac{\Bar{\alpha}_t}{\left(1-\Bar{\alpha}_t \right)}}\right)^T   
            \right]^{-1}
            & \0 
            \\ 
            \0
            & 
            \frac{3T}{T^3- 1}
        \end{bmatrix}\\
        \nonumber
    \times
    \mathop{\mathbb{E}}_{\substack{\vz_0 \sim \gN(\0,\mI)\\ \vepsilon\sim\gN\left(\0,\mI \right)\\ t\sim \gU\left( 1,\dots,T\right)}}
    \left[
        \left(
        \mA\vz_0 \sqrt{\Bar{\alpha}_{t-1}}
        + 
        \vepsilon
            \left(
                \frac{\sqrt{\alpha}_t\left ( 1-\Bar{\alpha}_{t-1} \right )}{\sqrt{1 - \Bar{\alpha}_t}}
            \right) 
        \right)_i 
        \begin{bmatrix}
            \mA\vz_0 \sqrt{\Bar{\alpha}_t}  + \vepsilon\sqrt{1 - \Bar{\alpha}_t}\\ t
        \end{bmatrix}
    \right].
    \end{eqnarray}
    Next, we simplify the second term further by extracting the $i^{th}$ component $\left(
        \mA\vz_0 \sqrt{\Bar{\alpha}_{t-1}}
        + 
        \vepsilon
            \left(
                \frac{\sqrt{\alpha}_t\left ( 1-\Bar{\alpha}_{t-1} \right )}{\sqrt{1 - \Bar{\alpha}_t}}
            \right) 
        \right)_i$ as:
    \begin{eqnarray}
    \nonumber
    \vtheta_i^*
    =\\
    \nonumber
    \mathop{\mathbb{E}}_{\substack{   t\sim \gU\left( 1,\dots,T\right)}}
    \begin{bmatrix}
            \left(1-\Bar{\alpha}_t \right)^{-1}
            \left[
            \mI - \left(\mA\sqrt{\frac{\Bar{\alpha}_t}{\left(1-\Bar{\alpha}_t \right)}}\right)
                \left(\mI+\left(\mA\sqrt{\frac{\Bar{\alpha}_t}{\left(1-\Bar{\alpha}_t \right)}}\right)^T\left(\mA\sqrt{\frac{\Bar{\alpha}_t}{\left(1-\Bar{\alpha}_t \right)}}\right)
                \right)^{-1}
             \left(\mA\sqrt{\frac{\Bar{\alpha}_t}{\left(1-\Bar{\alpha}_t \right)}}\right)^T   
            \right]^{-1}
            & \0 
            \\ 
            \0
            & 
            \frac{3T}{T^3- 1}
        \end{bmatrix}\\
        \nonumber
    \times
    \mathop{\mathbb{E}}_{\substack{\vz_0 \sim \gN(\0,\mI)\\ \vepsilon\sim\gN\left(\0,\mI \right)\\  t\sim \gU\left( 1,\dots,T\right)}}
    \left[
        \left(
        \va_i^T\vz_0 \sqrt{\Bar{\alpha}_{t-1}}
        + 
        \vepsilon_i
            \left(
                \frac{\sqrt{\alpha}_t\left ( 1-\Bar{\alpha}_{t-1} \right )}{\sqrt{1 - \Bar{\alpha}_t}}
            \right) 
        \right)
        \begin{bmatrix}
            \mA\vz_0 \sqrt{\Bar{\alpha}_t}  + \vepsilon\sqrt{1 - \Bar{\alpha}_t}\\ t
        \end{bmatrix}
    \right]
    =
    \\
    \nonumber
    \mathop{\mathbb{E}}_{\substack{t\sim \gU\left( 1,\dots,T\right)}}
    \begin{bmatrix}
            \left(1-\Bar{\alpha}_t \right)^{-1}
            \left[
            \mI - \left(\mA\sqrt{\frac{\Bar{\alpha}_t}{\left(1-\Bar{\alpha}_t \right)}}\right)
                \left(\mI+ \mI \left(\frac{\Bar{\alpha}_t}{1-\Bar{\alpha}_t }\right)
                \right)^{-1}
             \left(\mA\sqrt{\frac{\Bar{\alpha}_t}{\left(1-\Bar{\alpha}_t \right)}}\right)^T   
            \right]^{-1}
            & \0 
            \\ 
            \0
            & 
            \frac{3T}{T^3- 1}
        \end{bmatrix}\\
        \nonumber
    \times
    \mathop{\mathbb{E}}_{\substack{\vz_0 \sim \gN(\0,\mI)\\ \vepsilon\sim\gN\left(\0,\mI \right)\\t\sim \gU\left( 1,\dots,T\right)}}
        \begin{bmatrix}
            \left(
        \va_i^T\vz_0 \sqrt{\Bar{\alpha}_{t-1}}
        + 
        \vepsilon_i
            \left(
                \frac{\sqrt{\alpha}_t\left ( 1-\Bar{\alpha}_{t-1} \right )}{\sqrt{1 - \Bar{\alpha}_t}}
            \right) 
        \right)
        \left(
        \mA\vz_0 \sqrt{\Bar{\alpha}_t}  + \vepsilon\sqrt{1 - \Bar{\alpha}_t}
        \right)
        \\ 
        \left(
        \va_i^T\vz_0 \sqrt{\Bar{\alpha}_{t-1}}
        + 
        \vepsilon_i
            \left(
                \frac{\sqrt{\alpha}_t\left ( 1-\Bar{\alpha}_{t-1} \right )}{\sqrt{1 - \Bar{\alpha}_t}}
            \right) 
        \right)
        t
        \end{bmatrix}
        =
        \\
        \nonumber
    \mathop{\mathbb{E}}_{\substack{   t\sim \gU\left( 1,\dots,T\right)}}
    \begin{bmatrix}
            \left(1-\Bar{\alpha}_t \right)^{-1}
            \left[
            \mI - \left(\mA\sqrt{\frac{\Bar{\alpha}_t}{\left(1-\Bar{\alpha}_t \right)}}\right)
                \mI\left(1-\Bar{\alpha}_t 
                \right)
             \left(\mA\sqrt{\frac{\Bar{\alpha}_t}{\left(1-\Bar{\alpha}_t \right)}}\right)^T   
            \right]
            & \0 
            \\ 
            \0
            & 
            \frac{3T}{T^3- 1}
        \end{bmatrix}\\
        \nonumber
    \times
    \mathop{\mathbb{E}}_{\substack{\vz_0 \sim \gN(\0,\mI)\\ \vepsilon\sim\gN\left(\0,\mI \right)\\ t\sim \gU\left( 1,\dots,T\right)}}
        \begin{bmatrix}
        \left(
        \mA\vz_0 (\va_i^T\vz_0) \sqrt{\Bar{\alpha}_{t-1}} \sqrt{\Bar{\alpha}_t}  
        +
        \vepsilon_i
            \left(
                \frac{\sqrt{\alpha}_t\left ( 1-\Bar{\alpha}_{t-1} \right )}{\sqrt{1 - \Bar{\alpha}_t}}
            \right) 
            \vepsilon\sqrt{1 - \Bar{\alpha}_t}
        \right)
        \\ 
        0
        \end{bmatrix}
        =
        \\
        \nonumber
        \mathop{\mathbb{E}}_{\substack{  t\sim \gU\left( 1,\dots,T\right)}}
        \begin{bmatrix}
            \left(1-\Bar{\alpha}_t \right)^{-1}
            \left[
            \mI - \mA\mA^T\Bar{\alpha}_t  
            \right]
            & \0 
            \\ 
            \0
            & 
            \frac{3T}{T^3- 1}
        \end{bmatrix}\\
        \nonumber
    \times
    \mathop{\mathbb{E}}_{\substack{ \vepsilon\sim\gN\left(\0,\mI \right)\\  t\sim \gU\left( 1,\dots,T\right)}}
        \begin{bmatrix}
        \left(
        \mA\va_i\Bar{\alpha}_{t-1} \sqrt{\alpha_t}  
        +
        \vepsilon_i
            \left(
                \sqrt{\alpha}_t\left ( 1-\Bar{\alpha}_{t-1} \right )
            \right) 
            \vepsilon
        \right)
        \\  
        0
        \end{bmatrix}
        =
        \\
        \nonumber
        \mathop{\mathbb{E}}_{\substack{  t\sim \gU\left( 1,\dots,T\right)}}
        \begin{bmatrix}
            \frac{1}{\left(1-\Bar{\alpha}_t \right)}
            \left[
            \mI - \mA\mA^T\Bar{\alpha}_t  
            \right]
            & \0 
            \\ 
            \0
            & 
            \frac{3T}{T^3- 1}
        \end{bmatrix}\\
        \nonumber
    \times
    \mathop{\mathbb{E}}_{\substack{ t\sim \gU\left( 1,\dots,T\right)}}
        \begin{bmatrix}
        \mA\va_i\Bar{\alpha}_{t-1} \sqrt{\alpha_t}  
        +
        \ve_i
       \left(
                \sqrt{\alpha}_t\left ( 1-\Bar{\alpha}_{t-1} \right )
            \right) 
            \\  
        0
        \end{bmatrix},
\end{eqnarray}
where $\ve_i \in \{0,1 \}^d$ is a one-hot encoded vector with only $i^{th}$ coordinate taking the value $1$. Bringing the expectation inside and using matrix products, we obtain
\begin{eqnarray}
\nonumber
    \vtheta_i^*
    =
            \begin{bmatrix}
                \mathop{\mathbb{E}}_{\substack{t}}
                \left(
                    \frac{1}{\left(1-\Bar{\alpha}_t \right)}
                \right) 
                \mI
                -
                \mathop{\mathbb{E}}_{\substack{t}}
                \left(
                    \frac{\Bar{\alpha}_t}{\left(1-\Bar{\alpha}_t \right)}
                \right)
                \mA\mA^T
            & \0 
            \\ 
            \0
            & 
            \frac{3T}{T^3- 1}
        \end{bmatrix}
        \begin{bmatrix}
        \mathop{\mathbb{E}}_{\substack{t}}
         \left(
         \Bar{\alpha}_{t-1} \sqrt{\alpha_t} 
         \right) 
         \mA\va_i
        +
        \mathop{\mathbb{E}}_{\substack{t}}
         \left(
            \sqrt{\alpha}_t\left ( 1-\Bar{\alpha}_{t-1} \right )
         \right)
        \ve_i
            \\  
        0
        \end{bmatrix}
        =
        \\
        \nonumber
        \begin{bmatrix}
         \left(
        \mathop{\mathbb{E}}_{\substack{t}}
                \left(
                    \frac{1}{\left(1-\Bar{\alpha}_t \right)}
                \right) 
                \mI
                -
                \mathop{\mathbb{E}}_{\substack{t}}
                \left(
                    \frac{\Bar{\alpha}_t}{\left(1-\Bar{\alpha}_t \right)}
                \right)
                \mA\mA^T
        \right)
        \left(
        \mathop{\mathbb{E}}_{\substack{t}}
         \left(
         \Bar{\alpha}_{t-1} \sqrt{\alpha_t} 
         \right) 
         \mA\va_i
        +
        \mathop{\mathbb{E}}_{\substack{t}}
         \left(
            \sqrt{\alpha}_t\left ( 1-\Bar{\alpha}_{t-1} \right )
         \right)
        \ve_i
        \right)
            \\  
        0
        \end{bmatrix}
        =
        \\
        \nonumber
        \begin{bmatrix}
        \mathop{\mathbb{E}}_{\substack{t}}
                \left(
                    \frac{1}{\left(1-\Bar{\alpha}_t \right)}
                \right) 
        \mathop{\mathbb{E}}_{\substack{t}}
         \left(
         \Bar{\alpha}_{t-1} \sqrt{\alpha_t} 
         \right)
         \mA\va_i
         -
            \mathop{\mathbb{E}}_{\substack{t}}
                \left(
                    \frac{\Bar{\alpha}_t}{\left(1-\Bar{\alpha}_t \right)}
                \right)
                \mathop{\mathbb{E}}_{\substack{t}}
             \left(
             \Bar{\alpha}_{t-1} \sqrt{\alpha_t} 
             \right)
                  \mA\mA^T\mA\va_i
        \\
        +
        \mathop{\mathbb{E}}_{\substack{t}}
                \left(
                    \frac{1}{\left(1-\Bar{\alpha}_t \right)}
                \right) 
        \mathop{\mathbb{E}}_{\substack{t}}
         \left(
            \sqrt{\alpha}_t\left ( 1-\Bar{\alpha}_{t-1} \right )
         \right)
        \ve_i
        -
        \mathop{\mathbb{E}}_{\substack{t}}
                \left(
                    \frac{\Bar{\alpha}_t}{\left(1-\Bar{\alpha}_t \right)}
                \right)
        \mathop{\mathbb{E}}_{\substack{t}}
         \left(
            \sqrt{\alpha}_t\left ( 1-\Bar{\alpha}_{t-1} \right )
         \right)
         \left(\mA\mA^T \right)
        \ve_i
            \\  
        0
        \end{bmatrix}
        =
        \\
        \nonumber
        \begin{bmatrix}
         \left(
        \mathop{\mathbb{E}}_{\substack{t}}
                \left(
                    \frac{1}{\left(1-\Bar{\alpha}_t \right)}
                \right) 
        \mathop{\mathbb{E}}_{\substack{t}}
         \left(
         \Bar{\alpha}_{t-1} \sqrt{\alpha_t} 
         \right)
         -
                \mathop{\mathbb{E}}_{\substack{t}}
                \left(
                    \frac{\Bar{\alpha}_t}{\left(1-\Bar{\alpha}_t \right)}
                \right)
                \mathop{\mathbb{E}}_{\substack{t}}
             \left(
             \Bar{\alpha}_{t-1} \sqrt{\alpha_t} 
             \right)
         
         \right)
         \mA\va_i
        \\
        +
        \mathop{\mathbb{E}}_{\substack{t}}
                \left(
                    \frac{1}{\left(1-\Bar{\alpha}_t \right)}
                \right) 
        \mathop{\mathbb{E}}_{\substack{t}}
         \left(
            \sqrt{\alpha}_t\left ( 1-\Bar{\alpha}_{t-1} \right )
         \right)
        \ve_i
        -
        \mathop{\mathbb{E}}_{\substack{t}}
                \left(
                    \frac{\Bar{\alpha}_t}{\left(1-\Bar{\alpha}_t \right)}
                \right)
        \mathop{\mathbb{E}}_{\substack{t}}
         \left(
            \sqrt{\alpha}_t\left ( 1-\Bar{\alpha}_{t-1} \right )
         \right)
         \left(\mA\mA^T \right)
        \ve_i
            \\  
        0
        \end{bmatrix}.
\end{eqnarray}
Stacking all the rows together, we obtain the closed-form solution for generative modeling:
\begin{eqnarray}
\nonumber
    \vtheta^* 
    = \begin{bmatrix} 
    \left(
        \mathop{\mathbb{E}}_{\substack{t}}
                \left(
                    \frac{1}{\left(1-\Bar{\alpha}_t \right)}
                \right) 
        \mathop{\mathbb{E}}_{\substack{t}}
         \left(
         \Bar{\alpha}_{t-1} \sqrt{\alpha_t} 
         \right)
         -
                \mathop{\mathbb{E}}_{\substack{t}}
                \left(
                    \frac{\Bar{\alpha}_t}{\left(1-\Bar{\alpha}_t \right)}
                \right)
                \mathop{\mathbb{E}}_{\substack{t}}
             \left(
             \Bar{\alpha}_{t-1} \sqrt{\alpha_t} 
             \right)
         \right)
         \mA\mA^T
         \\
         \nonumber
         +
        \mathop{\mathbb{E}}_{\substack{t}}
                \left(
                    \frac{1}{\left(1-\Bar{\alpha}_t \right)}
                \right) 
        \mathop{\mathbb{E}}_{\substack{t}}
         \left(
            \sqrt{\alpha}_t\left ( 1-\Bar{\alpha}_{t-1} \right )
         \right)
        \mI
        \\
        \nonumber
        -
        \mathop{\mathbb{E}}_{\substack{t}}
                \left(
                    \frac{\Bar{\alpha}_t}{\left(1-\Bar{\alpha}_t \right)}
                \right)
        \mathop{\mathbb{E}}_{\substack{t}}
         \left(
            \sqrt{\alpha}_t\left ( 1-\Bar{\alpha}_{t-1} \right )
         \right)
         \mA\mA^T
          & \0
         \end{bmatrix}
   =      
         \\
         \nonumber 
    \begin{bmatrix} 
       \left(
        \mathop{\mathbb{E}}_{\substack{t}}
                \left(
                    \frac{1}{\left(1-\Bar{\alpha}_t \right)}
                \right) 
        \mathop{\mathbb{E}}_{\substack{t}}
         \left(
         \Bar{\alpha}_{t-1} \sqrt{\alpha_t} 
         \right)
         -
                \mathop{\mathbb{E}}_{\substack{t}}
                \left(
                    \frac{\Bar{\alpha}_t}{\left(1-\Bar{\alpha}_t \right)}
                \right)
                \mathop{\mathbb{E}}_{\substack{t}}
             \left(
             \Bar{\alpha}_{t-1} \sqrt{\alpha_t} 
             \right)
         \right)
         \mA\mA^T
         \\ 
         \nonumber
         +
        \mathop{\mathbb{E}}_{\substack{t}}
                \left(
                    \frac{1}{\left(1-\Bar{\alpha}_t \right)}
                \right) 
        \mathop{\mathbb{E}}_{\substack{t}}
         \left(
            \sqrt{\alpha}_t\left ( 1-\Bar{\alpha}_{t-1} \right )
         \right)
        \mI
        \\
        \nonumber
        -
        \mathop{\mathbb{E}}_{\substack{t}}
                \left(
                    \frac{\Bar{\alpha}_t}{\left(1-\Bar{\alpha}_t \right)}
                \right)
        \mathop{\mathbb{E}}_{\substack{t}}
         \left(
            \sqrt{\alpha}_t
         \right)
                  \mA\mA^T
         +
         \mathop{\mathbb{E}}_{\substack{t}}
                \left(
                    \frac{\Bar{\alpha}_t}{\left(1-\Bar{\alpha}_t \right)}
                \right)
         \mathop{\mathbb{E}}_{\substack{t}}
         \left(
            \sqrt{\alpha}_t\Bar{\alpha}_{t-1}
         \right)
         \mA\mA^T
          & \0
         \end{bmatrix}
    =
         \\
         \nonumber
    \begin{bmatrix} 
    \left(
        \mathop{\mathbb{E}}_{\substack{t}}
                \left(
                    \frac{1}{\left(1-\Bar{\alpha}_t \right)}
                \right) 
        \mathop{\mathbb{E}}_{\substack{t}}
         \left(
         \Bar{\alpha}_{t-1} \sqrt{\alpha_t} 
         \right)
         -
         \mathop{\mathbb{E}}_{\substack{t}}
                \left(
                    \frac{\Bar{\alpha}_t}{\left(1-\Bar{\alpha}_t \right)}
                \right)
        \mathop{\mathbb{E}}_{\substack{t}}
         \left(
            \sqrt{\alpha}_t
         \right)
         \right)
         \mA\mA^T
         \\ 
         \nonumber
         +
        \mathop{\mathbb{E}}_{\substack{t}}
                \left(
                    \frac{1}{\left(1-\Bar{\alpha}_t \right)}
                \right) 
        \mathop{\mathbb{E}}_{\substack{t}}
         \left(
            \sqrt{\alpha}_t\left ( 1-\Bar{\alpha}_{t-1} \right )
         \right)
        \mI
         & \0
         \end{bmatrix}
    =
    \\
    \nonumber
     \begin{bmatrix} 
     \nu \mA \mA^T
     + 
     \gamma \mI
      & \0
         \end{bmatrix}.
\end{eqnarray}
This completes the proof of the theorem.  \hfill $\square$

\subsection{Proof of \textbf{Corollary~\ref{cor:gen-d-dim-t-states}}}
\label{subsec:prf-cor-gen-d-dim-t-state}
    Using the arguments from Appendix~\ref{subsec:prf-thm-gen-d-dim-t-state}, we express the closed-form solution of the MMSE problem (\ref{eq:thm-gen-T-states}) as:
    \begin{eqnarray}
        \nonumber
        \vtheta_i^* 
    =
    \left[    
    \mathop{\mathbb{E}}_{\substack{\rxz\sim q(\rxz)\\ \vepsilon\sim\gN\left(\0,\mI \right)\\  t\sim \gU\left( 1,\dots,T\right)}}
        \begin{bmatrix}
            \overleftarrow{\vx_t}\\ t
        \end{bmatrix}
        \begin{bmatrix}
            \overleftarrow{\vx_t}\\ t
        \end{bmatrix}^T  
    \right]^{-1}
    \times
    \mathop{\mathbb{E}}_{\substack{\rxz\sim q(\rxz)\\ \vepsilon\sim\gN\left(\0,\mI \right)\\  t\sim \gU\left( 1,\dots,T\right)}}
    \left[
        \left( \frac{\sqrt{\Bar{\alpha}_{t-1} }\beta_t}{1-\Bar{a}_t}\overrightarrow{\vx_0} + \frac{\sqrt{\alpha}_t\left ( 1-\Bar{\alpha}_{t-1} \right )}{1-\Bar{\alpha}_t} \overrightarrow{\vx_t } \right)_i 
        \begin{bmatrix}
            \overleftarrow{\vx_t}\\ t
        \end{bmatrix}
    \right]
    =
        \\
    \nonumber
        \mathop{\mathbb{E}}_{\substack{   t\sim \gU\left( 1,\dots,T\right)}}
    \begin{bmatrix}
            \left[\mA\mA^T \Bar{\alpha}_t  
                + \mI \left(1-\Bar{\alpha}_t \right) 
            \right]^{-1}
            & \0 
            \\ 
            \0
            & 
            \frac{3T}{T^3- 1}
        \end{bmatrix}\\
        \nonumber
    \times
    \mathop{\mathbb{E}}_{\substack{\rxz\sim q(\rxz)\\ \vepsilon\sim\gN\left(\0,\mI \right)\\  t\sim \gU\left( 1,\dots,T\right)}}
    \left[
        \left(
        \rxz \sqrt{\Bar{\alpha}_{t-1}}
        + 
        \overrightarrow{\vepsilon}
            \left(
                \frac{\sqrt{\alpha}_t\left ( 1-\Bar{\alpha}_{t-1} \right )}{\sqrt{1 - \Bar{\alpha}_t}}
            \right) 
        \right)_i 
        \begin{bmatrix}
            \rxz \sqrt{\Bar{\alpha}_t}  + \overleftarrow{\vepsilon}\sqrt{1 - \Bar{\alpha}_t}\\ t
        \end{bmatrix}
    \right]
    =
        \\
    \nonumber
    \mathop{\mathbb{E}}_{\substack{   t\sim \gU\left( 1,\dots,T\right)}}
    \begin{bmatrix}
            \left(1-\Bar{\alpha}_t \right)^{-1}
            \left[
            \mI - \left(\mA\sqrt{\frac{\Bar{\alpha}_t}{\left(1-\Bar{\alpha}_t \right)}}\right)
                \mI\left(1-\Bar{\alpha}_t 
                \right)
             \left(\mA\sqrt{\frac{\Bar{\alpha}_t}{\left(1-\Bar{\alpha}_t \right)}}\right)^T   
            \right]
            & \0 
            \\ 
            \0
            & 
            \frac{3T}{T^3- 1}
        \end{bmatrix}\\
        \nonumber
    \times
    \mathop{\mathbb{E}}_{\substack{\vz_0 \sim \gN(\0,\mI)\\ \vepsilon\sim\gN\left(\0,\mI \right)\\ t\sim \gU\left( 1,\dots,T\right)}}
        \begin{bmatrix}
        \left(
        \mA\vz_0 (\va_i^T\vz_0) \sqrt{\Bar{\alpha}_{t-1}} \sqrt{\Bar{\alpha}_t}  
        +
        \overrightarrow{\vepsilon_i}
            \left(
                \frac{\sqrt{\alpha}_t\left ( 1-\Bar{\alpha}_{t-1} \right )}{\sqrt{1 - \Bar{\alpha}_t}}
            \right) 
            \overleftarrow{\vepsilon}\sqrt{1 - \Bar{\alpha}_t}
        \right)
        \\ 
        0
        \end{bmatrix}
        =
        \\
        \nonumber
        \mathop{\mathbb{E}}_{\substack{  t\sim \gU\left( 1,\dots,T\right)}}
        \begin{bmatrix}
            \left(1-\Bar{\alpha}_t \right)^{-1}
            \left[
            \mI - \mA\mA^T\Bar{\alpha}_t  
            \right]
            & \0 
            \\ 
            \0
            & 
            \frac{3T}{T^3- 1}
        \end{bmatrix}
    \mathop{\mathbb{E}}_{\substack{  t\sim \gU\left( 1,\dots,T\right)}}
        \begin{bmatrix}
         \mA\va_i\Bar{\alpha}_{t-1} \sqrt{\alpha_t}  
        \\  
        0
        \end{bmatrix}
        =
        \\
        \nonumber
        \begin{bmatrix}
                \mathop{\mathbb{E}}_{\substack{t}}
                \left(
                    \frac{1}{\left(1-\Bar{\alpha}_t \right)}
                \right) 
                \mI
                -
                \mathop{\mathbb{E}}_{\substack{t}}
                \left(
                    \frac{\Bar{\alpha}_t}{\left(1-\Bar{\alpha}_t \right)}
                \right)
                \mA\mA^T
            & \0 
            \\ 
            \0
            & 
            \frac{3T}{T^3- 1}
        \end{bmatrix}
        \begin{bmatrix}
        \mathop{\mathbb{E}}_{\substack{t}}
         \left(
         \Bar{\alpha}_{t-1} \sqrt{\alpha_t} 
         \right) 
         \mA\va_i
            \\  
        0
        \end{bmatrix}
    =
            \\
        \nonumber
        \begin{bmatrix}
         \left(
        \mathop{\mathbb{E}}_{\substack{t}}
                \left(
                    \frac{1}{\left(1-\Bar{\alpha}_t \right)}
                \right) 
        \mathop{\mathbb{E}}_{\substack{t}}
         \left(
         \Bar{\alpha}_{t-1} \sqrt{\alpha_t} 
         \right)
         -
                \mathop{\mathbb{E}}_{\substack{t}}
                \left(
                    \frac{\Bar{\alpha}_t}{\left(1-\Bar{\alpha}_t \right)}
                \right)
                \mathop{\mathbb{E}}_{\substack{t}}
             \left(
             \Bar{\alpha}_{t-1} \sqrt{\alpha_t} 
             \right)
         
         \right)
         \mA\va_i
            \\  
        0
        \end{bmatrix}
    \end{eqnarray}
Now, stacking all the rows together as in Appendix~\ref{subsec:prf-thm-gen-d-dim-t-state}, we obtain the closed-form solution for DDPM generator as $ \vtheta^* 
    =
    \begin{bmatrix} 
     \bnu \mA \mA^T
      & \0
         \end{bmatrix}
$, which finishes the proof of the corollary.  \hfill $\square$

\subsection{Proof of \textbf{Theorem~\ref{thm:gen-d-dim-t-states-sample}}}
\label{subsec:prf-thm-gen-d-dim-t-states-sample}
To prove DDPM learns the distribution $q\left(\rxz \right)$, it is sufficient to show that $\lxz \sim p\left(\lxz \right)$ has the structure $\mA\overleftarrow{\vz_0}$, where $\overleftarrow{\vz_0} \sim \gN\left(\0,\mI\right)$. Using the closed-form solution from \textbf{Corollary~\ref{cor:gen-d-dim-t-states}}, we run one step of the reverse process starting from $\overleftarrow{\vx_T} \sim \gN\left(\0,\mI \right)$. Then, we rectify the misalignment in drift and dispersion of the reverse process. After recursively applying the reverse step for all the states, we obtain
\begin{eqnarray}
    \nonumber
    \lxz 
    =
    \left( \prod_{t=1}^{T} \left(\omega_t \bnu \aat\right)\right) \lxt 
    + \left( \prod_{t=1}^{T-1} \left(\omega_t \bnu \aat\right)\right) \xi_T \sqrt{\beta_T} \overleftarrow{\epsilon_{T}}
    + \left( \prod_{t=1}^{T-2} \left(\omega_t \bnu \aat\right)\right) \xi_T \sqrt{\beta_{T-1}} \overleftarrow{\epsilon_{T-1}}
    \\
    \nonumber
    + \cdots 
    + \left(\omega_1\bnu \aat\right) \left(\omega_2\bnu \aat\right) \xi_3 \sqrt{\beta_3} \overleftarrow{\vepsilon_3}
    + \left(\omega_1 \bnu \aat \right) \xi_2\sqrt{\beta_2} \overleftarrow{\vepsilon_2}. 
\end{eqnarray}
Since $\{\overleftarrow{\vepsilon_t}\}_{t > 1}$ are IID Gaussian random variables, we rewrite the above epxression as:
\begin{eqnarray}
    \nonumber
    \lxz 
    = \left( \prod_{t=1}^{T} \left(\omega_t \bnu \aat\right)\right) \lxt 
    + 
    \Bar{\Sigma}^{1/2} \leps,
\end{eqnarray}
where $\leps \sim \gauss$ and $\Bar{\Sigma}$ captures the sum of variances of the terms containing $\{\overleftarrow{\vepsilon_t}\}_{t>1}$. Recall that $\mA^T\mA = \mI$, the coefficients of drift $\omega_t = \omega = \frac{1}{\bnu\sqrt{2}^{1/T}}$ and dispersion $\xi_t = \sqrt{\frac{2^{(t-1)/T}}{2\beta_t(T-1)}}$. Now, let us compute the variances of each term: 
\begin{eqnarray}
    \nonumber
    \E
    \left[
        \left(
            \left(\omega_1 \bnu \aat \right) \xi_2\sqrt{\beta_2} \overleftarrow{\vepsilon_2} 
        \right)
        \left(
            \left(\omega_1 \bnu \aat \right) \xi_2\sqrt{\beta_2} \overleftarrow{\vepsilon_2} 
        \right)^T
    \right]
     = 
     \omega_1^2\bnu^2 \xi_2^2 \beta_2  \aat\E\left[\overleftarrow{\vepsilon_2}\overleftarrow{\vepsilon_2}^T \right]\aat 
     = \omega_1^2\bnu^2 \xi_2^2 \beta_2  \aat\aat
     \\
     \nonumber
     = \omega_1^2\bnu^2 \xi_2^2 \beta_2  \mA\mA^T 
     = \left(\frac{1}{\bnu\sqrt{2}^{1/T}}\right)^2 \bnu^2 \left( \frac{2^{1/T}}{2\beta_2(T-1)}\right)\beta_2 \aat
     = \frac{\aat}{2(T-1)}; 
     \\
     \nonumber
     \E
     \left[
        \left(\left(\omega_1\bnu \aat\right) \left(\omega_2\bnu \aat\right) \xi_3 \sqrt{\beta_3} \overleftarrow{\vepsilon_3}\right)
        \left(\left(\omega_1\bnu \aat\right) \left(\omega_2\bnu \aat\right) \xi_3 \sqrt{\beta_3} \overleftarrow{\vepsilon_3}\right)^T 
     \right]
     =
     \frac{\aat}{2(T-1)}; \cdots;\\
     \nonumber
     \E
     \left[
        \left(\left( \prod_{t=1}^{T-1} \left(\omega_t \bnu \aat\right)\right) \xi_T \sqrt{\beta_T} \overleftarrow{\epsilon_{T}}
\right)\left(\left( \prod_{t=1}^{T-1} \left(\omega_t \bnu \aat\right)\right) \xi_T \sqrt{\beta_T} \overleftarrow{\epsilon_{T}}
\right)^T
     \right]
     =
     \frac{\aat}{2(T-1)}.
\end{eqnarray}
Note that $\left(\aat\right)^2 \coloneqq \aat\aat = \aat$. Since there are $(T-1)$ terms, the total covariance $\Bar{\Sigma} = \frac{\aat}{2}$, which is equivalent to the following sample generated at the end of the reverse process:
\begin{eqnarray}
    \nonumber
    \lxz 
    = \left( \prod_{t=1}^{T} \left(\omega_t \bnu \aat\right)\right) \lxt 
    + 
    \frac{\aat}{\sqrt{2}} \leps
    =
    \aat \left( \prod_{t=1}^{T} \left(\frac{1}{\bnu \sqrt{2}^{1/T}} \right)\bnu \right) \lxt 
    + 
    \frac{\aat}{\sqrt{2}} \leps 
    \\
    \nonumber
    = 
    \frac{\aat}{\sqrt{2}} \lxt
    +
    \frac{\aat}{\sqrt{2}} \leps
    = \mA
        \left( 
            \frac{\mA^T\lxt}{\sqrt{2}} 
            +
            \frac{\mA^T\leps}{\sqrt{2}} 
        \right)
    =\mA\overleftarrow{\vz_0}.
\end{eqnarray}
Now, let us verify if $\lxz$ lies on the data manifold. To see this, it is sufficient to show $\lzz$ has zero mean and unit covariance. Therefore, we compute 
\begin{eqnarray}
    \nonumber
    \E
    \left[
        \lzz 
    \right]
    = 
    \E
    \left[
         \frac{\mA^T\lxt}{\sqrt{2}} 
            +
            \frac{\mA^T\leps}{\sqrt{2}}
    \right]
    =
    \frac{\mA^T\E\left[\lxt\right]}{\sqrt{2}} 
            +
    \frac{\mA^T\E\left[\leps\right]}{\sqrt{2}}
    = \0
\end{eqnarray}
and 
\begin{eqnarray*}
    \E
    \left[
        \lzz\lzz^T
    \right]
    =
    \E
    \left[
         \left(
         \frac{\mA^T\lxt}{\sqrt{2}} 
            +
            \frac{\mA^T\leps}{\sqrt{2}}
         \right)
         \left(
         \frac{\mA^T\lxt}{\sqrt{2}} 
            +
            \frac{\mA^T\leps}{\sqrt{2}}
         \right)^T
    \right]
    \stackrel{(i)}{=}
         \frac{\mA^T
         \E
        \left[
            \lxt\lxt^T
        \right]\mA
        }{2} 
            +
            \frac{\mA^T\E
    \left[\leps\leps^T
    \right]\mA}{2} 
    \stackrel{(ii)}{=} \mI,
\end{eqnarray*}
where (i) uses the fact that $\lxt$ and $\leps$ are independent zero-mean random variables, and (ii) relies on unit covariance of $\lxt$, $\leps$, and the property that $\mA^T\mA = \mI$. Thus, we finish the proof. \hfill $\square$

\subsection{Proof of \textbf{Theorem~\ref{thm:inp-d-dim-t-states}}}
\label{subsec:thm-inp-d-dim-t-states}
Besides the inpainting step at every intermediate state, the proof follows from the arguments in Appendix~\ref{subsec:prf-thm-gen-d-dim-t-states-sample}. Starting from the Gaussian prior $\overleftarrow{\vx_T} \sim \gauss$, We first compute
\begin{eqnarray*}
    \overleftarrow{\vx_{T-1}} = \omega_T \bnu \aat \lxt + \xi_T \sqrt{\beta_T} \overleftarrow{\vepsilon_T}.
\end{eqnarray*}
In the next step, we run the forward SDE to compute the known portions of the given image with the noise level at $(T-1)$:
\begin{eqnarray*}
    \overrightarrow{\vx_{T-1}} = \sqrt{\Bar{\alpha}_{T-1}} \rxz + \sqrt{1 - \Bar{\beta}_{T-1}} \overrightarrow{\vepsilon_T}.
\end{eqnarray*}
Then, we update $\overleftarrow{\vx_{T-1}}$ by combining known portions with the generated unknown portions of the image:
\begin{eqnarray*}
    \overleftarrow{\vx_{T-1}} = \mD(\vm)~\overleftarrow{\vx_{T-1}} + \mD(1-\vm)~\overrightarrow{\vx_{T-1}}
\end{eqnarray*}
This completes inpainting at the state $(T-1)$ of the reverse process. Recursively applying this over all the states, we obtain
\begin{eqnarray*}
    \lxz 
    =
    \left(
        \prod_{t = T-1}^{1} \omega_{T-t} \bnu \aat \dm 
    \right)
    \left(\omega_T \bnu \aat \right) \lxt
    +
    \\
    \left(
        \prod_{t = T-1}^{2} \omega_{T-t} \bnu \aat \dm 
    \right)
    \left( 
        \omega_{T-1}\bnu \aat\dom \sqrt{\Bar{\alpha}_{T-1}} \rxz
    \right)
    +
    \\
    \left(
        \prod_{t = T-1}^{3} \omega_{T-t} \bnu \aat \dm 
    \right)
    \left( 
        \omega_{T-2}\bnu \aat\dom \sqrt{\Bar{\alpha}_{T-2}} \rxz
    \right)+\cdots
    +
    \\
    \left( 
        \omega_{1}\bnu \aat\dm \sqrt{\Bar{\alpha}_{1}} \rxz
    \right)
    \left( 
        \omega_{2}\bnu \aat\dom \sqrt{\Bar{\alpha}_{2}} \rxz
    \right)
    + \left( 
        \omega_{1}\bnu \aat\dom \sqrt{\Bar{\alpha}_{1}} \rxz
    \right)
    +
    \\
    \left(
        \prod_{t = T-1}^{2} \omega_{T-t} \bnu \aat \dm 
    \right)
    \left( 
        \omega_{T-1}\bnu \aat\dom \sqrt{1 - \Bar{\alpha}_{T-1}} \overrightarrow{\vepsilon_{T-1}}
    \right)
    +
    \\
    \left(
        \prod_{t = T-1}^{3} \omega_{T-t} \bnu \aat \dm 
    \right)
    \left( 
        \omega_{T-2}\bnu \aat\dom \sqrt{1 - \Bar{\alpha}_{T-2}} \overrightarrow{\vepsilon_{T-2}}
    \right)+\cdots
    +
    \\
    \left( 
        \omega_{1}\bnu \aat\dm \sqrt{1 - \Bar{\alpha}_{1}} \overrightarrow{\vepsilon_{1}}
    \right)
    \left( 
        \omega_{2}\bnu \aat\dom \sqrt{1 - \Bar{\alpha}_{2}} \overrightarrow{\vepsilon_{2}}
    \right)
    +
    \\
     \left( 
        \omega_{1}\bnu \aat\dom \sqrt{1 - \Bar{\alpha}_{1}} \overrightarrow{\vepsilon_{1}}
    \right)
    +
    \\
    \left(
        \prod_{t = T-1}^{1} \omega_{T-t} \bnu \aat \dm 
    \right)
    \xi_T \sqrt{\beta_T} \overleftarrow{\vepsilon_T}
    +
    \\
    \left(
        \prod_{t = T-1}^{2} \omega_{T-t} \bnu \aat \dm 
    \right)
    \xi_{T-1} \sqrt{\beta_{T-1}} \overleftarrow{\vepsilon_{T-1}}
    +
    \\
    \left(
        \omega_{1} \bnu \aat \dm 
    \right)
    \xi_2 \sqrt{\beta_2} \overleftarrow{\vepsilon_2}.
\end{eqnarray*}
Now, let us simplify the first term involving $\overleftarrow{\vx_T}$. Substituting the coefficients of drift, i.e.,  $\omega_t = \omega = \frac{1}{\bnu \sqrt{2}^{1/T}}$, it simplifies to $\frac{1}{\sqrt{2}} \left( \prod_{T-1}^{1} \aat \dm\right) \aat\overleftarrow{\vx_T}$. Similarly, by collecting the terms involving $\rxz$, we get
$\left(\sum_{t=1}^{T-1} \left( \frac{1}{\sqrt{2}}\right)^{t/T} \sqrt{\Bar{\alpha}_t}  \left(\mA\mA^T \mD(\vm) \right)^{t-1} \right) \times \mA\mA^T \mD(\1-\vm) \rxz$. Now, we invoke the assumption that $\{\overleftarrow{\vepsilon_t}\}_{t=2}^{T}$ and $\{\overleftarrow{\vepsilon_t}\}_{t=1}^{T-1}$ are IID Gaussian random variables. Let us denote the combined variance of dispersion terms containing $\overrightarrow{\vepsilon_t}$ in the forward process as $\overrightarrow{\Bar{\Sigma}}$ and terms containing $\overleftarrow{\vepsilon_t}$ in the reverse process as $\overleftarrow{\Bar{\Sigma}}$. Therefore, the final sample generated at the end of the reverse process with inpainting at each intermediate state becomes:
\begin{eqnarray*}
    \overleftarrow{\vx_0} 
    =
    \frac{1}{\sqrt{2}} \left( \prod_{T-1}^{1} \aat \dm\right) \aat\overleftarrow{\vx_T}
    +
    \left(\sum_{t=1}^{T-1} \left( \frac{1}{\sqrt{2}}\right)^{t/T} \sqrt{\Bar{\alpha}_t}  \left(\mA\mA^T \mD(\vm) \right)^{t-1} \right) 
    \\
    \times \mA\mA^T \mD(\1-\vm) \rxz
    +
    \overrightarrow{\Bar{\Sigma}} \overrightarrow{\vepsilon}
    + \overleftarrow{\Bar{\Sigma}} \overleftarrow{\vepsilon},
\end{eqnarray*}
where $\leps$ and $\reps$ are IID Gaussians with $\gauss$. One may wish to combine $\leps$ and $\reps$ to represent as a single Gaussian random variable. We keep them separate to explicitly highlight their contributions. This finishes the proof. \hfill $\square$

\section{Experiments}
\label{sec:exps}


In this section, we provide empirical evidence to support our theoretical results. Since this paper aims to provide a theoretical justification, we only conduct toy experiments just to verify the main theoretical claims. We consider a setting where perfect recovery is possible. By perfect recovery, we mean a Root Mean Squared Error (RMSE) as negligible as below $10^{-5}$. For large-scale experiments using diffusion-based image inpainting, we refer to a recent work by \citet{lugmayr2022repaint}.
\subsection{Implementation Details}
\label{subsec:impl-det}
As shown in Figure~\ref{fig:ddpm-data}, the data generating distribution $q\left(\rxz\right)$ is supported on a linear manifold in $\R^2$, i.e., $k=1$ and $d=2$. We choose $\mA = \begin{bmatrix}
    2/\sqrt{13}\\3/\sqrt{13}
\end{bmatrix}$ and $\vm = \begin{bmatrix}
    0\\1
\end{bmatrix}$, satisfying \textbf{Assumption~\ref{assm:ortho}}~and~\textbf{\ref{assm:inpainting}}. The samples $\rxz \sim q(\rxz)$ follow the structure: $ \rxz = \mA z_0$, where $z_0 \sim \gN\left(0,1\right)$. In Figure~\ref{fig:ddpm-marg}, the marginals over the first and second coordinates of $\rxz$ are given by blue and orange bars, respectively. We consider $n=1000$ samples, variance $\beta = 0.9$, and $R=100$ resampling steps.

\begin{figure}
     \centering
     \begin{subfigure}[b]{0.38\columnwidth}
         \centering
         \includegraphics[width=\linewidth]{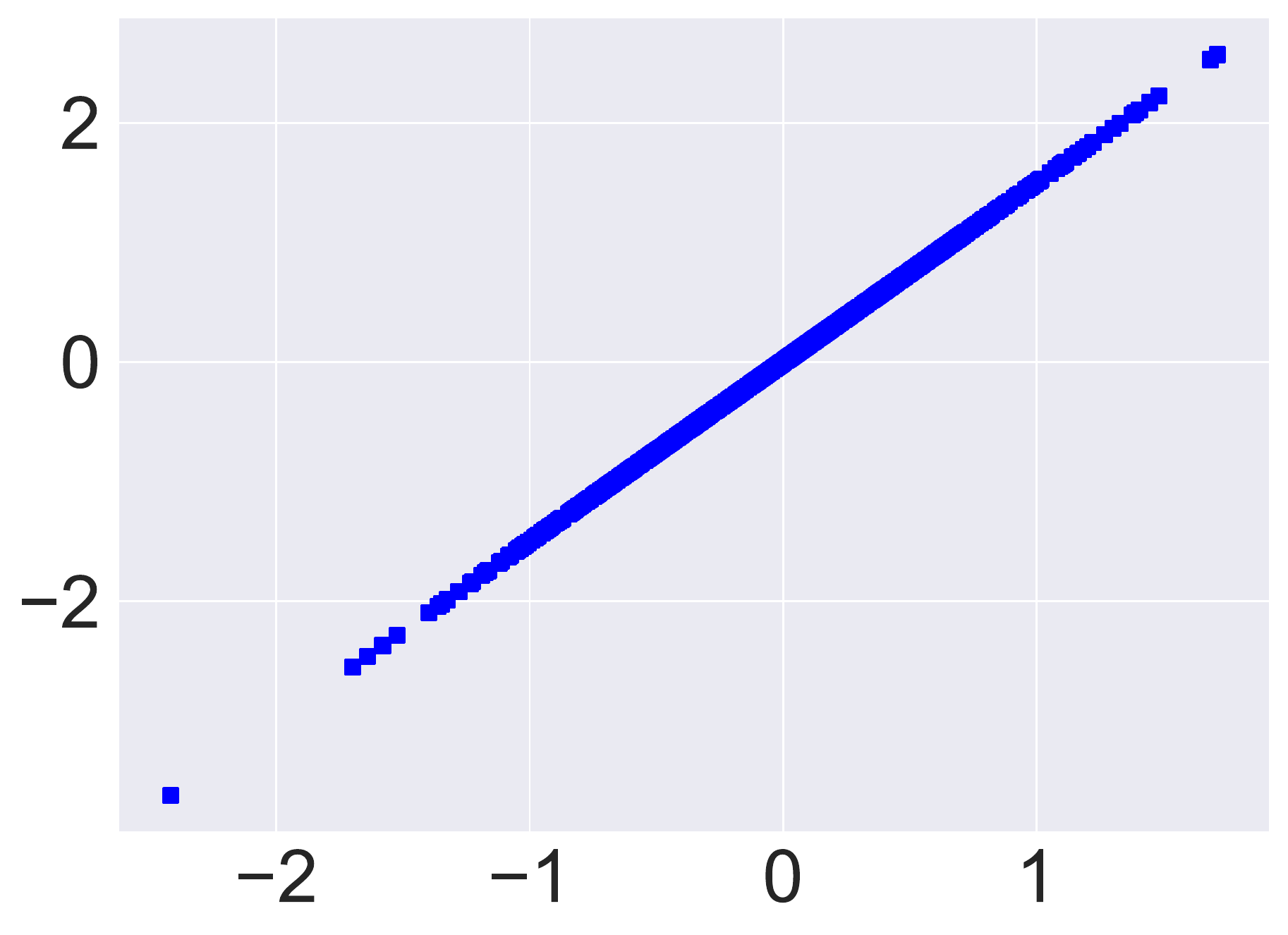}
         \caption{Data manifold}
         \label{fig:ddpm-data}
     \end{subfigure}
     \begin{subfigure}[b]{0.37\columnwidth}
         \centering
         \includegraphics[width=\linewidth]{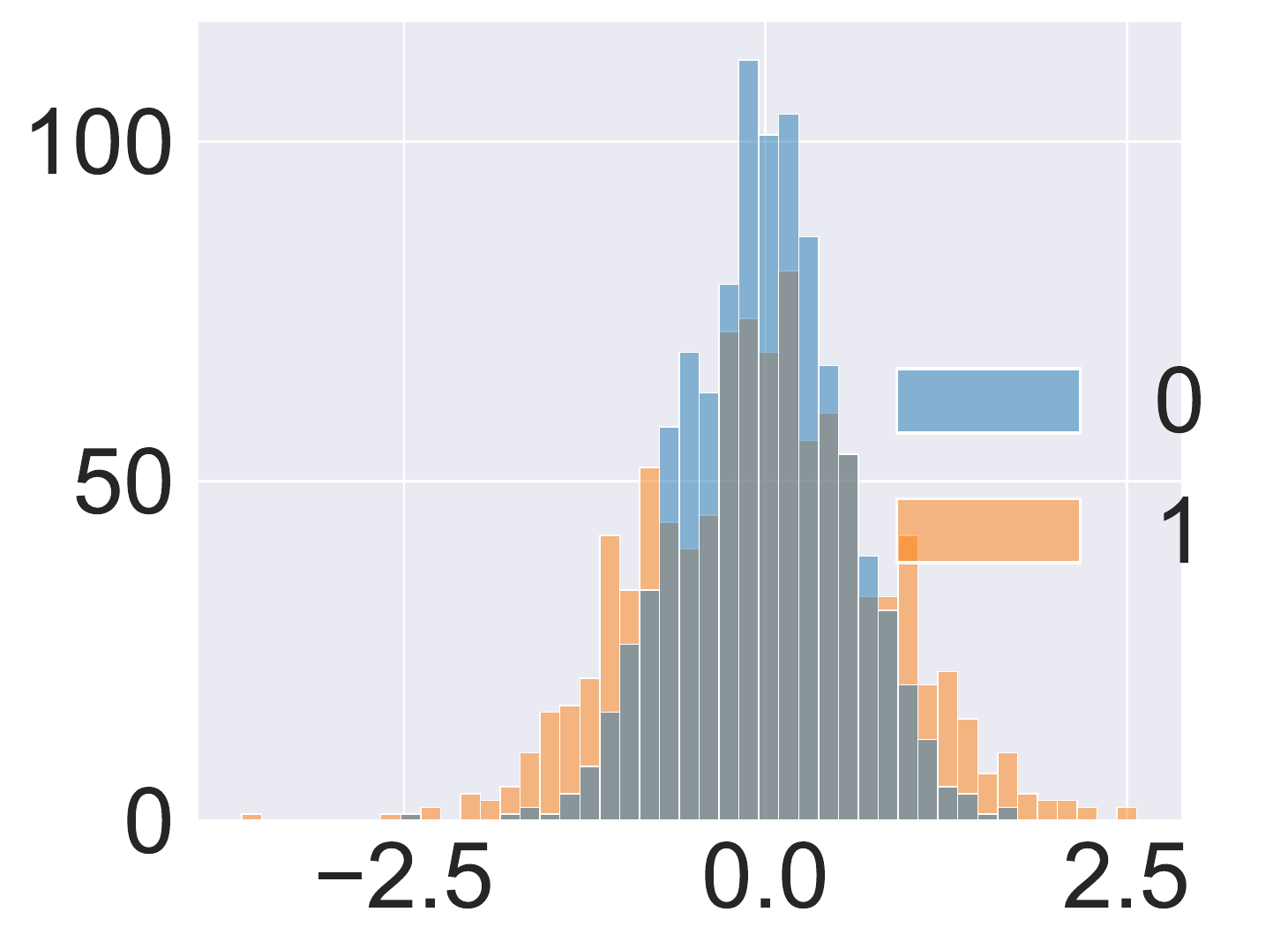}
         \caption{Marginals}
         \label{fig:ddpm-marg}
     \end{subfigure}
     \caption{The data generating distribution is supported on a linear manifold with Gaussian marginals.}
     \label{fig:ddpm-data-marg}
\end{figure}

\subsection{Experimental Results}
\label{subsec:emp-res}

\begin{figure}
     \centering
     \begin{subfigure}[b]{0.45\columnwidth}
         \centering
         \includegraphics[width=\linewidth]{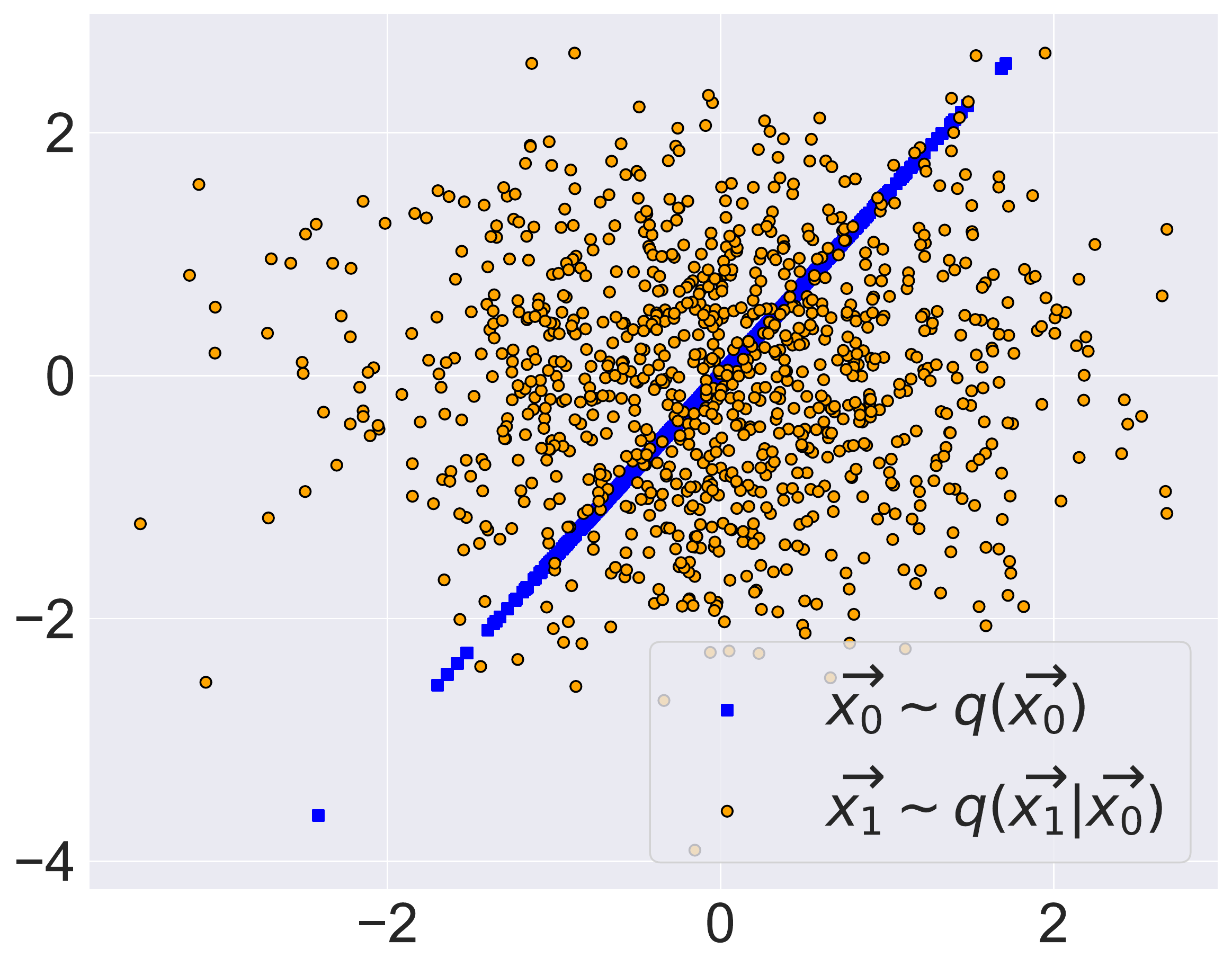}
         \caption{Forward SDE}
         \label{fig:fwd-sde}
     \end{subfigure}
     \begin{subfigure}[b]{0.44\columnwidth}
         \centering
         \includegraphics[width=\linewidth]{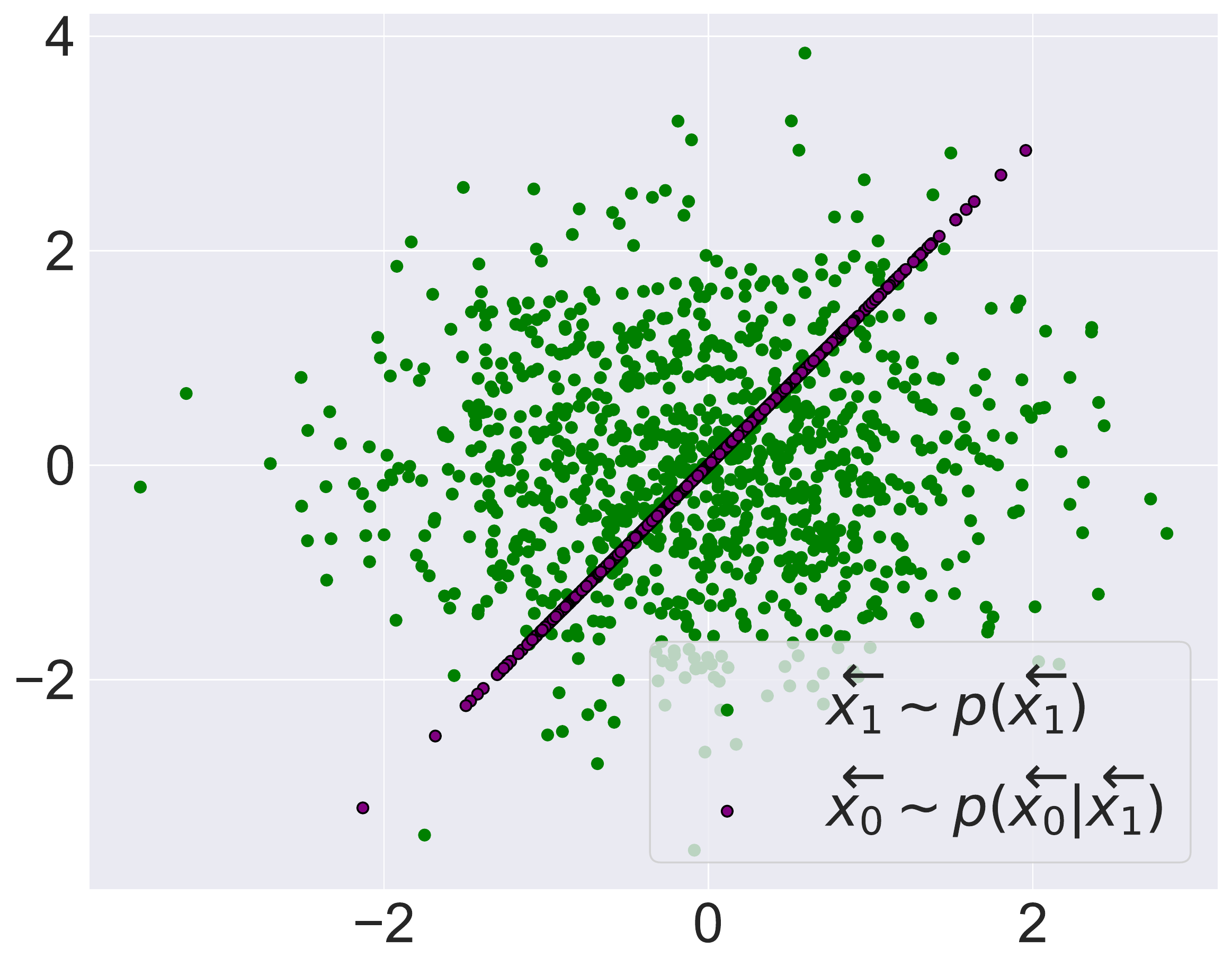}
         \caption{Reverse SDE}
         \label{fig:rev-sde}
     \end{subfigure}
     \caption{Visualization of the forward and reverse processes in DDPM. The forward SDE pushes samples drawn from the data manifold towards the support of Gaussian prior. The reverse SDE discovers the subspace underneath data distribution.}
     \label{fig:ddpm-fwd-rev}
\end{figure}

\begin{wraptable}{r}{5.5cm}
\caption{RMSE of inpainting.}
\label{sample-table}
\vskip 0.15in
\begin{center}
\begin{small}
\begin{sc}
\begin{tabular}{lcr}
\toprule
Method & $d=2,k=1$   \\
\midrule
RePaint    & 22.97 \\
RePaint+RevSDE & 27.61\\
RePaint$^+$ & $3.28 \times 10^{-8}$\\
\bottomrule
\end{tabular}
\end{sc}
\end{small}
\end{center}
\vskip -0.2in
\end{wraptable}
Figure~\ref{fig:fwd-sde} shows $n=1000$ samples drawn from the data generating distribution $q\left( \rxz\right)$ and the output of the forward SDE $\rxo$. This verifies that the forward SDE converges to the reference prior. Figure~\ref{fig:rev-sde} illustrates $n=1000$ samples drawn from the Gaussian prior $p\left( \lxo\right)$ and succesfully projected onto the data manifold by the reverse SDE. In Figure~\ref{fig:repaintplus}, we compare the signals recovered by the baseline algorithm RePaint (\textcolor{brown}{$\bullet $}) and the proposed algorithm RePaint$^+$ (\textcolor{orange}{$*$}). The orignal samples (\textcolor{blue}{$\blacksquare$}) and the inpainted samples by RePaint$^+$ are $\epsilon$-close with an RMSE of $3.28\times 10^{-8}$, as given in Table~\ref{sample-table}. On the other hand, the inpainted samples by RePaint have an RMSE of $22.97$. As discussed in our theoretical analysis (\wasyparagraph{\ref{subsec:main-d-dim}), the bias due to misalignment
forces RePaint to drift away from the data manifold.

One might ask whether passing these inpainted samples through another reverse SDE step would lead to perfect recovery. This modified setup of RePaint projects the generated samples $\lxz$ onto the data manifold, as shown by red arrows in Figure~\ref{fig:repaint-sde}. However, this does not push $\lxz$ into the tiny subspace around $\rxz$ because they still incur an RMSE of $27.61$, which is worse than the original setup. Also, it is clear from Figure~\ref{fig:repaint-sde} that even the least square projections onto the data manifold does not recover the original samples. Therefore, there exists a non-trivial bias in the original RePaint algorithm that adversely affects perfect recovery. Contrary to that, RePaint$^+$ eliminates this bias and enjoys perfect recovery with infinitely many resampling steps. For a finite number of steps, it converges to a tiny $\epsilon$-neighborhood of the original samples. This strengthens our algorithmic insights drawn from the analysis in the main paper. 
\vspace{-0.2in}
\begin{figure}[!b]
\begin{center}
\centerline{\includegraphics[width=0.45\columnwidth]{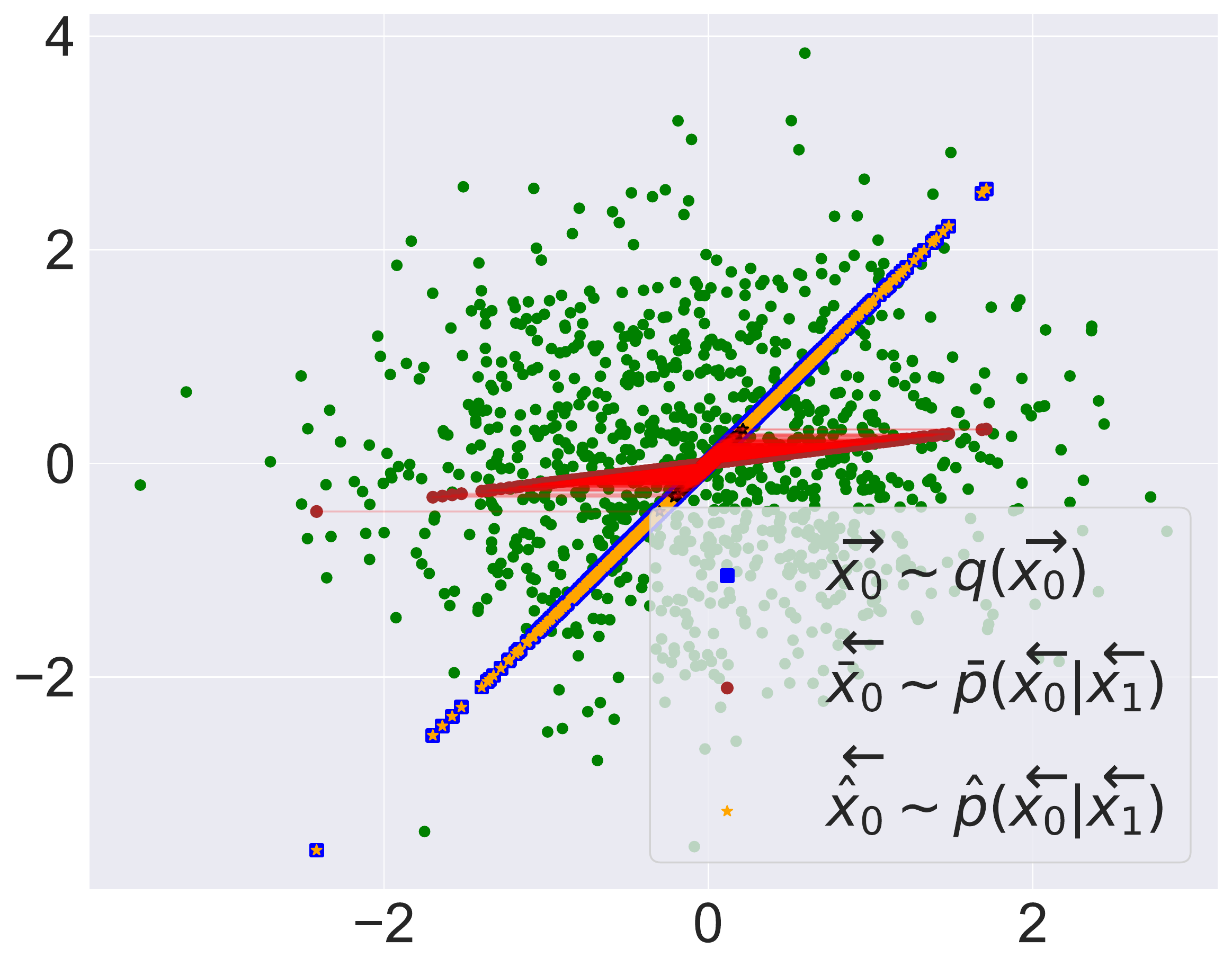}}
\caption{Running reverse SDE on top of the inpainted samples generated by RePaint. The final reverse step is highlighted by red arrows (\textcolor{red}{$\rightarrow$}). It fails to recover the true underlying sample.} 
\label{fig:repaint-sde}
\end{center}
\vskip -0.2in
\end{figure}



\end{document}